\newcommand{\toworkon}[1]{}
\begin{document}

\title{VCG Mechanism Design with Unknown Agent Values \\ under Stochastic Bandit Feedback}

\author{\name Kirthevasan Kandasamy \email kandasamy@eecs.berkeley.edu \\
       \name Joseph E Gonzalez \email jegonzal@eecs.berkeley.edu \\
       \name Michael I Jordan \email jordan@cs.berkeley.edu \\
       \name Ion Stoica \email stoica@cs.berkeley.edu  \\
       \addr 
       University of California, Berkeley,
       CA 94723, USA \\
      }

\editor{-}

\maketitle

\newcommand{\insertAlgoVCGLearnTwo}{

\begin{algorithm}[t]
\vspace{0.02in}

\begin{algorithmic}[1]
\REQUIRE $\estmethod\in\{\estee,\estopt\}$, $\pricemethod\in\{\priceage,\pricesel\}$.
\COMMENT{$\estmethod, \pricemethod$ are hyperparameters. See lines~\ref{lin:pricing}
and~\ref{lin:estmethodif}.}
\STATE Collect bids $\{\bidi\}_i$ from agents participating by bids.
\STATE $t\leftarrow0$. \COMMENT{$t$ indexes rounds}
\label{lin:vcgrequire}
\FOR[$q$ indexes brackets]{$q=1, 2, \dots$,}
    \STATE \emph{Explore phase:}
            Assign each allocation $s\in\Scal$ to each agents at least once  and
           charge them price $0$ on each round. Collect reported rewards $\{\Yit\}_{i,t}$ from agents
           participating by rewards. \hspace{-0.14in}
    \STATE $t\leftarrow t+K$.
            \COMMENT{$K$ is the number of rounds required for the explore phase. See
Sec.~\ref{sec:setup}}
    \FOR{$r = 1,\dots, \lfloor \frac{5}{6} K q^{1/2}\rfloor$,}
        \STATE  $t\leftarrow t+1$.
        \STATE Choose outcome $\alloct \leftarrow \argmax_{\alloc} \Vubt(\alloc)$. \label{lin:alloct}
                \COMMENT{$\Vubt$ is a UCB on the welfare.
        See~\eqref{eqn:learnterms},\eqref{eqn:bidterms},\eqref{eqn:Vubt}}
        \STATE Charge each agent $i$, $\;\priceit \leftarrow \max_\alloc\Fit(\alloc) - \Git(\alloct)$.
            \label{lin:pricing}
                \COMMENT{$\pricemethod$ determines computation of
$\Fit$ and $\Git$ respectively. See~\eqref{eqn:fgterms}}
    \IF{$\estmethod=\estopt$} \label{lin:estmethodif}
        \STATE Collect reported reward $\{\Yit\}_i$ 
        from each agent $i$ participating by rewards.
    \ENDIF
    \ENDFOR
\ENDFOR
\end{algorithmic}
\caption{$\;$\vcglearn \label{alg:vcglearn}}
\end{algorithm}
%
}

\newcommand{\insertAlgoVCGLearn}{
\begin{algorithm}[t]
\vspace{0.02in}

\begin{algorithmic}[1]
\REQUIRE $\estmethod\in\{\estee,\estopt\}$, $f\in\{\vlb,\vbar,\vub\}$, $g\in\{\vlb,\vbar,\vub\}$,
    Bids $\bidi$ for agents participating by bids.
\label{lin:vcgrequire}
\STATE $t\leftarrow0$.
\FOR[brackets]{$q=1, 2, \dots$,}
    \STATE Assign all outcomes $s\in\Scal$ to all agents at least once over $K$ rounds and
           charge them price $0$.
           Collect one realised value per allocation from each agent
           participating by realisations.
            \COMMENT{explore-phase}
    \STATE $t\leftarrow t+K$.
    \FOR{$r = 1,\dots, \lfloor \frac{5}{6} K q^{1/2}\rfloor$,}
        \STATE  $t\leftarrow t+1$.
        \STATE $\alloct \leftarrow \argmax_{\alloc} \Vubt(\alloc)$. \label{lin:alloct}
                \COMMENT{See~\eqref{eqn:learnterms},\eqref{eqn:bidterms},\eqref{eqn:Vubt}}
        \STATE $\forall\,i,\;\,\priceit \leftarrow \max_\alloc\Fmit(\alloc) - \Gmit(\alloct)$.
            \label{lin:pricing}
                \COMMENT{See~\eqref{eqn:fgterms}}
        \STATE Set allocation $\alloct$ and charge the agents $\{\priceit\}_i$.
        \STATE Collect realised values $\{\Yit\}_i$ from the agents.
    \ENDFOR
\ENDFOR
\end{algorithmic}
\caption{$\;$\vcglearn \label{alg:vcglearn}}
\end{algorithm}
}

\newcommand{\insertAlgoVCGUCB}{
\begin{algorithm}[t]
\vspace{0.02in}

\begin{algorithmic}[1]
\REQUIRE $f\in\{\vub,\vlb\}$, Bids $\bidi$ for agents playing by bids.
\STATE $t\leftarrow0$.
\FOR[brackets]{$q=1, 2, \dots$,}
    \STATE Assign all $s\in\Scal$ to all agents over $K$ rounds and
           charge them $0$. Collect realised values $\{\Yit\}_{i,t}$.
    \STATE $t\leftarrow t+K$.
    \FOR[UCB-rounds]{$r = 1,\dots, \lfloor \frac{5}{6} K q^{1/2}\rfloor$,}
        \STATE  $t\leftarrow t+1$.
        \STATE Choose $\alloct \leftarrow \argmax_{\alloc} \Vubt(\alloc)$.
                \COMMENT{See~\eqref{eqn:ucbterms},~\eqref{eqn:welfaredefns}}
        \STATE Charge agent $i$, $\;\priceit \leftarrow \max_\alloc\Fit(\alloc) - \Vubit(\alloct)$.
        \STATE Collect realised values $\{\Yit\}_i$.
    \ENDFOR
\ENDFOR
\end{algorithmic}
\caption{$\;$\vcgucb \label{alg:vcgucb}}
\end{algorithm}
}

\newcommand{\insertTableResultsSummary}{
\newcommand{\insertbetweentablegaps}{\multicolumn{4}{l}{} \\[-0.05in]}
\begin{table}[t]
\centering
\begin{tabular}{l|c|c|c}
\toprule
 &\begin{tabular}{@{}c@{}} Truthfulness \\ $\EE[\UpiiT - \UiT] \in \bigOtilde(\textrm{?})$
    \\[0.05in] (Theorem~\ref{thm:truthfulness})
\end{tabular} &
 \begin{tabular}{@{}c@{}} Individual rationality \\ $-\EE[\UiT] \in \bigOtilde(\textrm{?}) $
    \\[0.05in] (Theorem~\ref{thm:ir})
    \end{tabular} &
 \begin{tabular}{@{}c@{}} VCG regret \\ $\EE[\RmaxT] \in \bigOtilde(\textrm{?}) $
    \\[0.05in] (Theorem~\ref{thm:value})
    \end{tabular} 
    \\
\midrule
$\estmethod=\estee$ &
 $\KotTtt$ &
 $\nstarone(\pricemethod) \KotTtt$ &
 $n^2\KotTtt$
 \\
\midrule
$\estmethod=\estopt$ &
 $n \KotTtt$, NIC &
 $\nstartwo(\pricemethod) \sqrtST$ &
 \;$n^2\KotTtt$\;
 \\
\bottomrule
\end{tabular}
\caption{%
Summary of asymptotic rates for Algorithm~\ref{alg:vcglearn}
for truthfulness, individual rationality, and the VCG regret $\RmaxT$~\eqref{eqn:regret}.
Above, $n$ denotes the number of agents, $T$ the number of rounds,
$|\Scal|$ the number of different allocations, $K$ is a problem-specific parameter
defined in Section~\ref{sec:setup},
$\UiT$ is the total utility of agent $i$ after $T$ rounds when being truthful,
and
$\UpiiT$ is the total utility of agent $i$ after $T$ rounds when following any other (non-truthful)
strategy  $\pi$.
Algorithm~\ref{alg:vcglearn} has two (binary) hyperparameters
$\estmethod\in\{\estee,\estopt\}$, and $\pricemethod\in\{\priceage,\pricesel\}$,
defined in Section~\ref{sec:vcglearn}.
While 
the \valueregret{} and asymptotic truthfulness rates do not depend on $\pricemethod$,
the rates for asymptotic individual rationality do, via the $\nstarone$ term,
where, we have $\nstarone(\priceage) = 1$ and $\nstarone(\pricesel) = n$.
NIC indicates that the mechanism is asymptotically Nash incentive-compatible.
When $\estmethod=\estee$, we have better truthfulness guarantees, but weak rates for individual
rationality, and vice versa when $\estmethod=\estopt$.
\label{tb:resultssummary}
}
\end{table}
}

\newcommand{\imtextspace}{-0.05in}
\newcommand{\imfourwidth}{1.3in}
\newcommand{\imfourhspace}{0.1in}
\newcommand{\imthreewidth}{1.7in}
\newcommand{\imthreehspace}{0.1in}

\newcommand{\imarrwtwo}{2.88in}
\newcommand{\imhsptwo}{-0.00in}
\newcommand{\imarrwthree}{1.99in}
\newcommand{\imhspthree}{-0.03in}
\newcommand{\imleftspace}{-0.10in}
\newcommand{\imsinglecol}{2.495in}
\newcommand{\imcaptionspace}{-0.05in}

\newcommand{\dflablationfigwidth}{2.14in}
\newcommand{\dflablationfigspace}{-0.25in}
\newcommand{\insertFigSimulation}{
\begin{figure}
\centering
\hspace{-0.15in}%
  \includegraphics[width=\dflablationfigwidth]{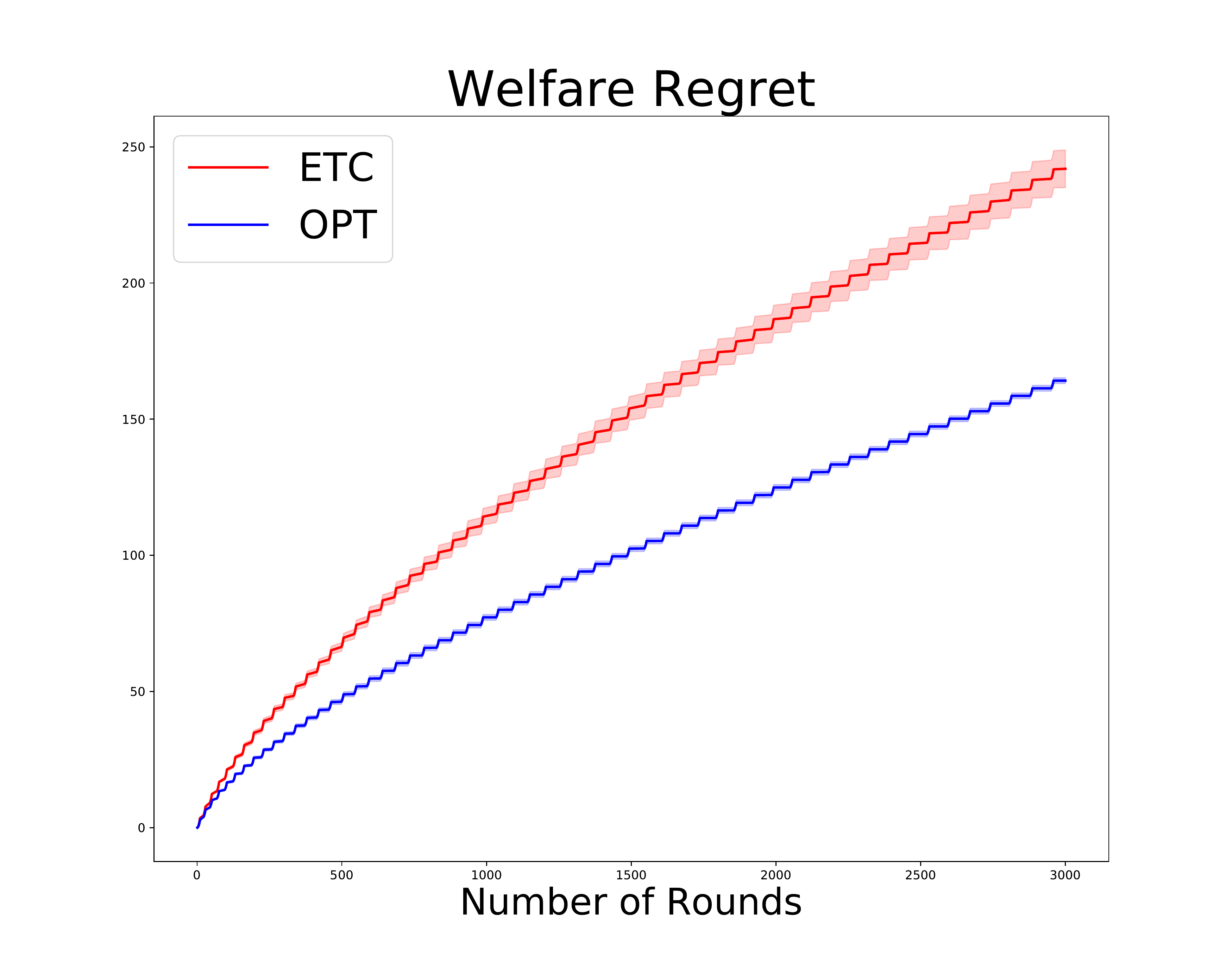}
\hspace{\dflablationfigspace}
  \includegraphics[width=\dflablationfigwidth]{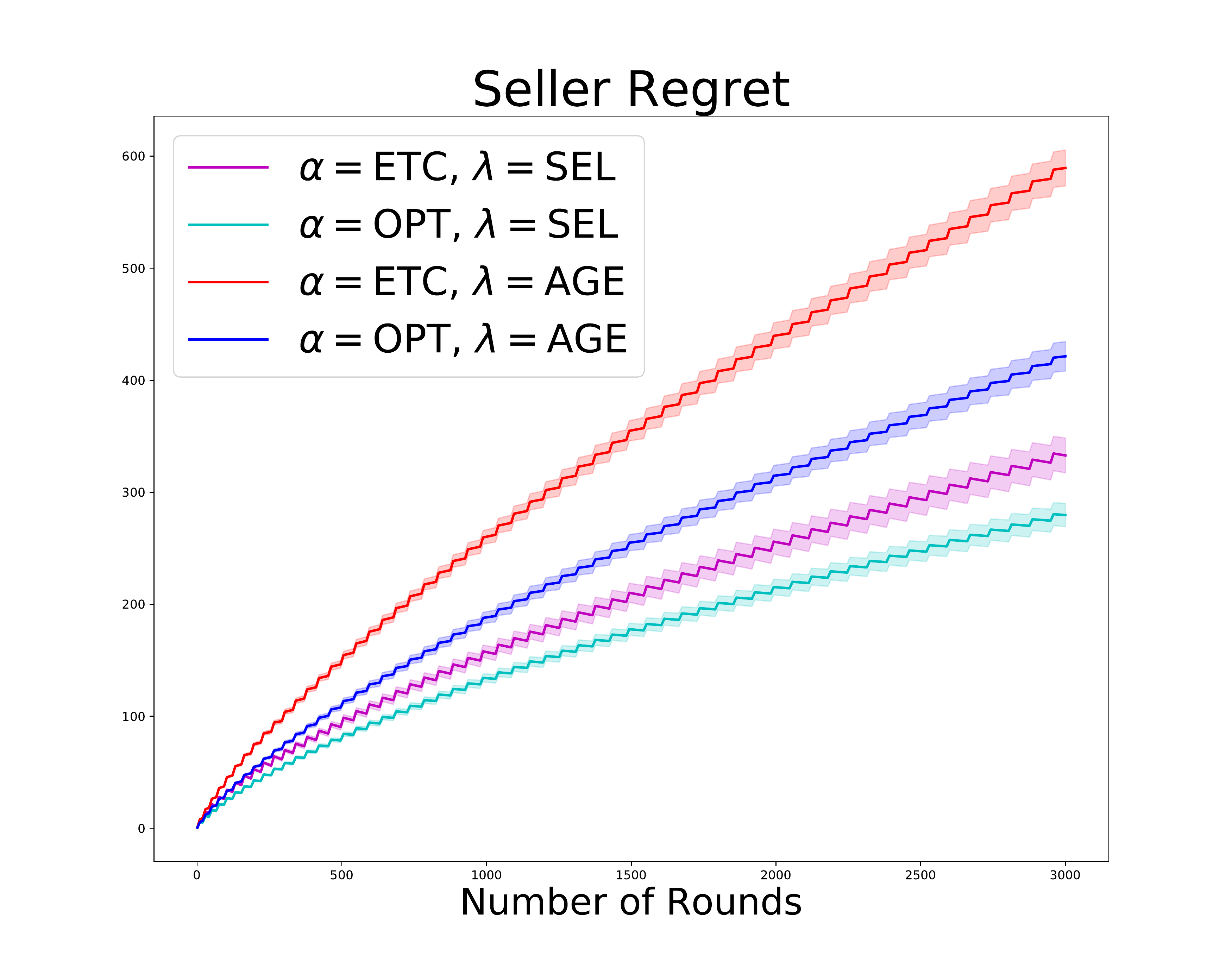}
\hspace{\dflablationfigspace}
  \includegraphics[width=\dflablationfigwidth]{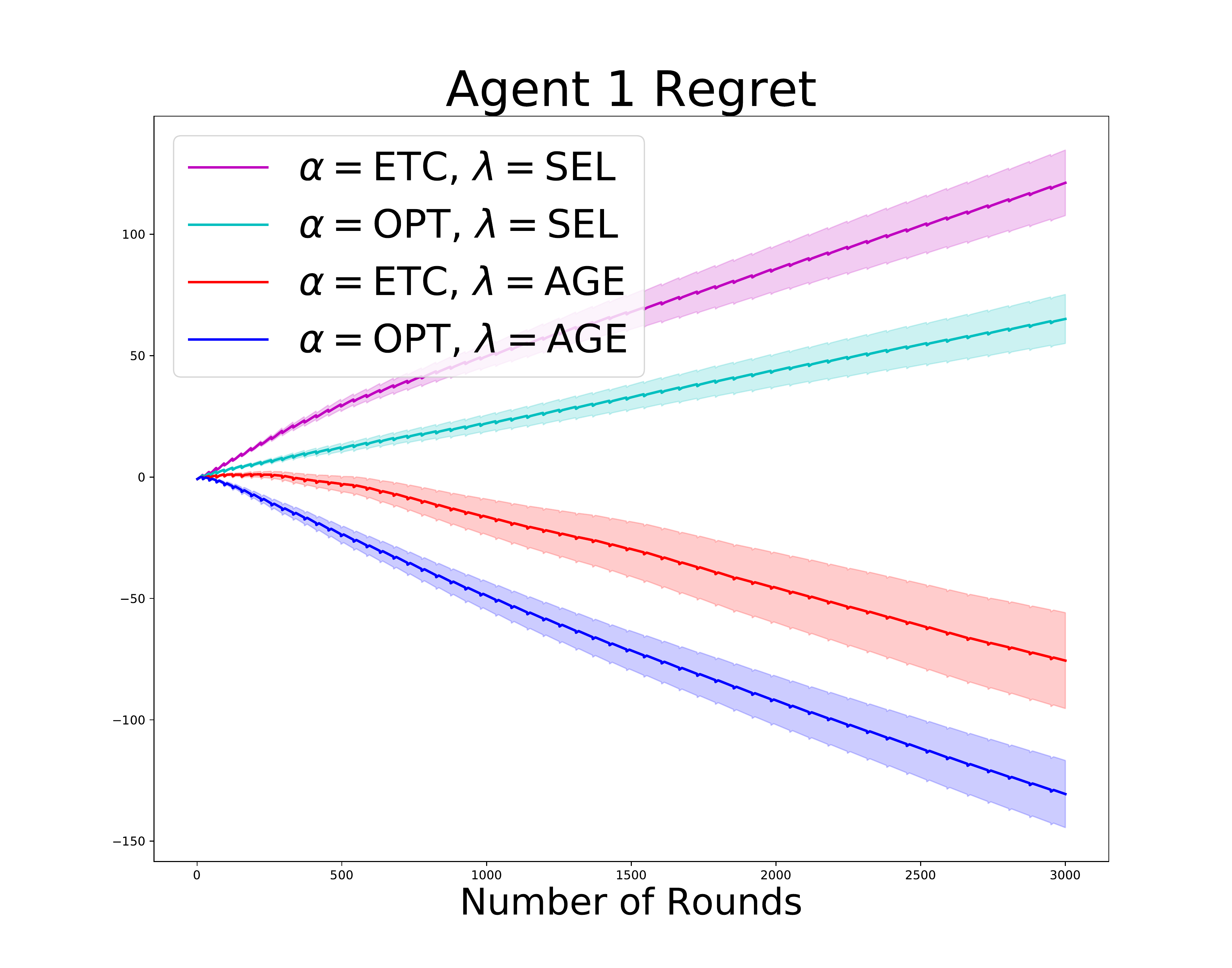}
\hspace{\dflablationfigspace}
\\%
\hspace{-0.1in}%
\hspace{\dflablationfigspace}%
\includegraphics[width=\dflablationfigwidth]{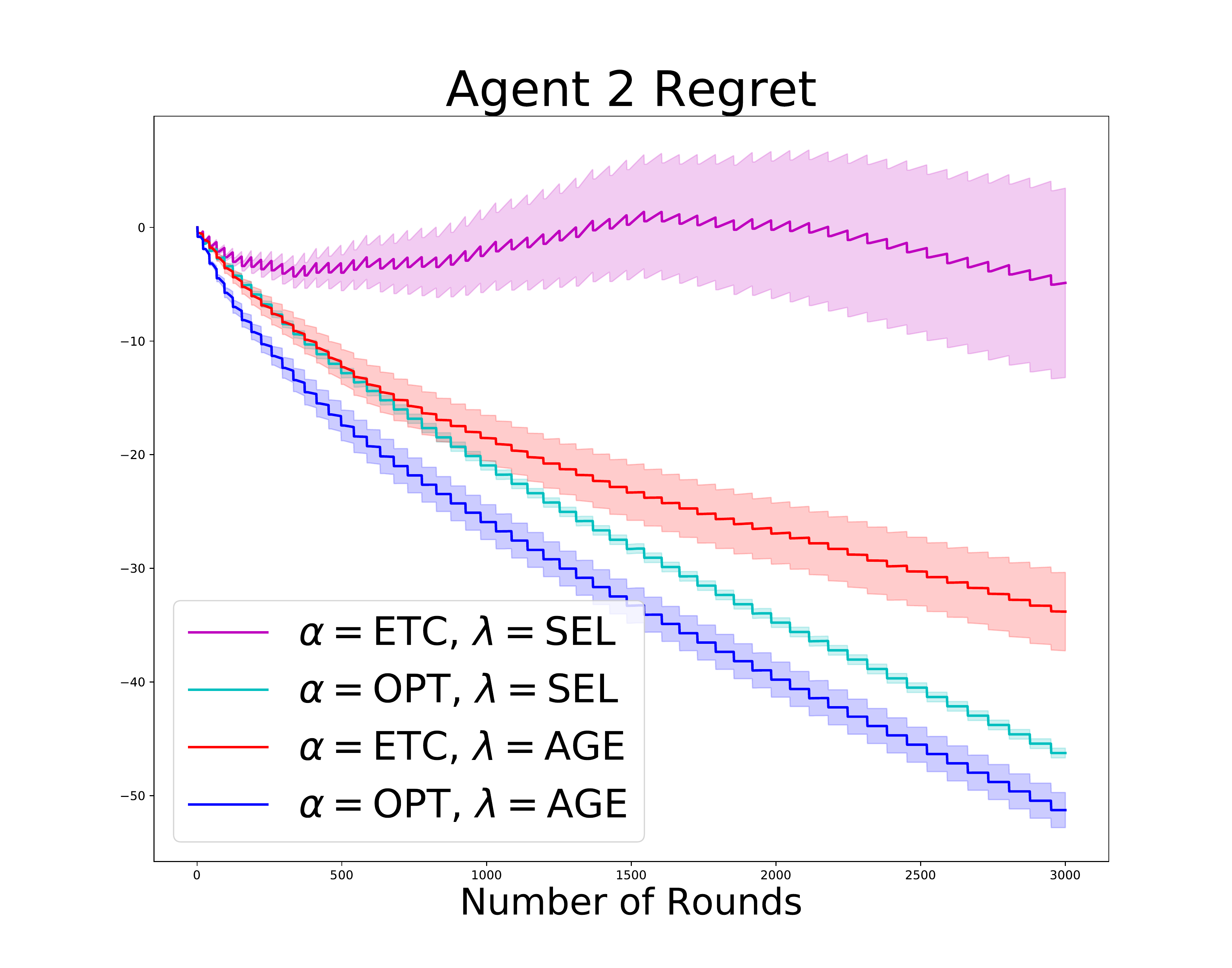}
\hspace{\dflablationfigspace}
  \includegraphics[width=\dflablationfigwidth]{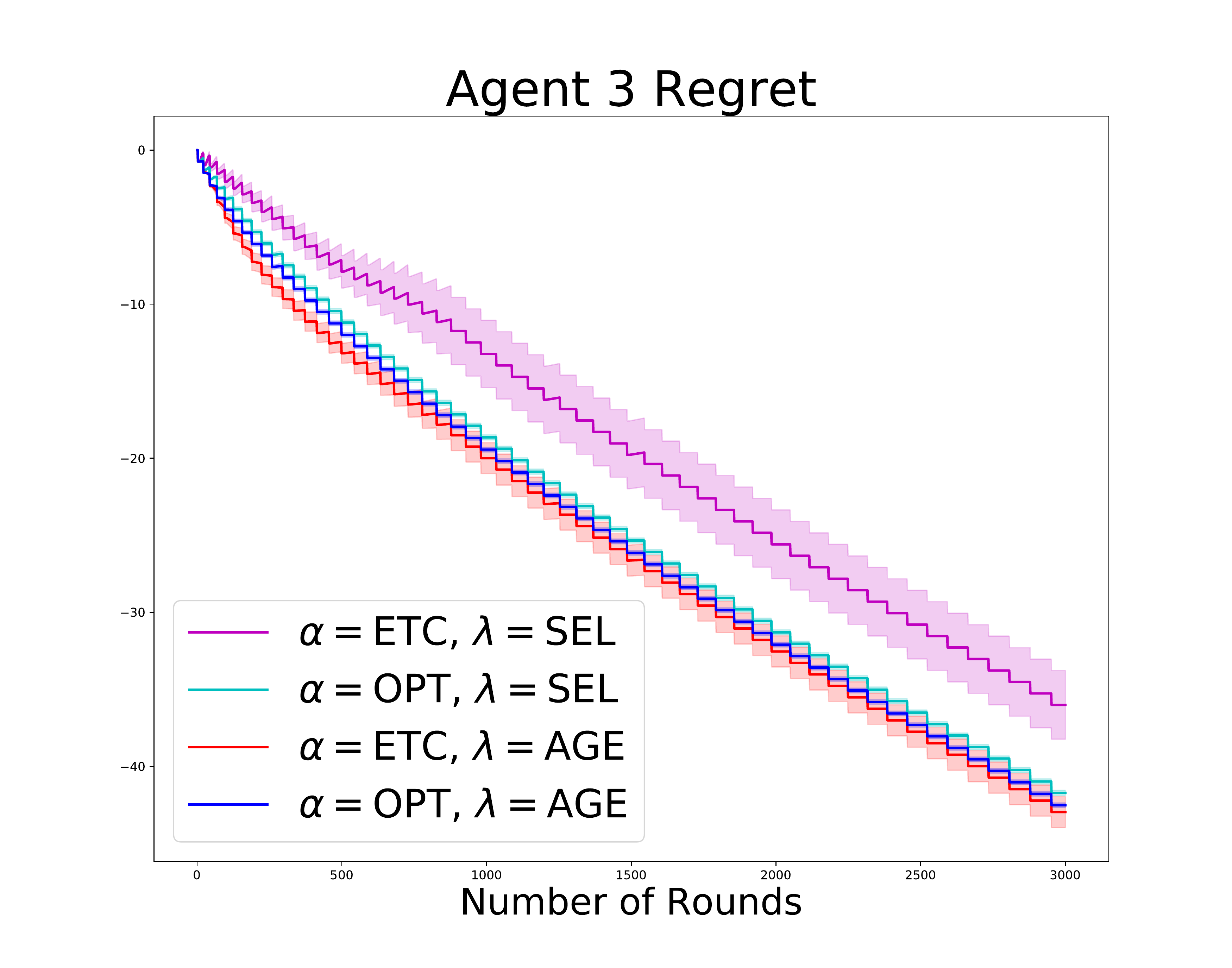}
\hspace{\dflablationfigspace}
  \includegraphics[width=\dflablationfigwidth]{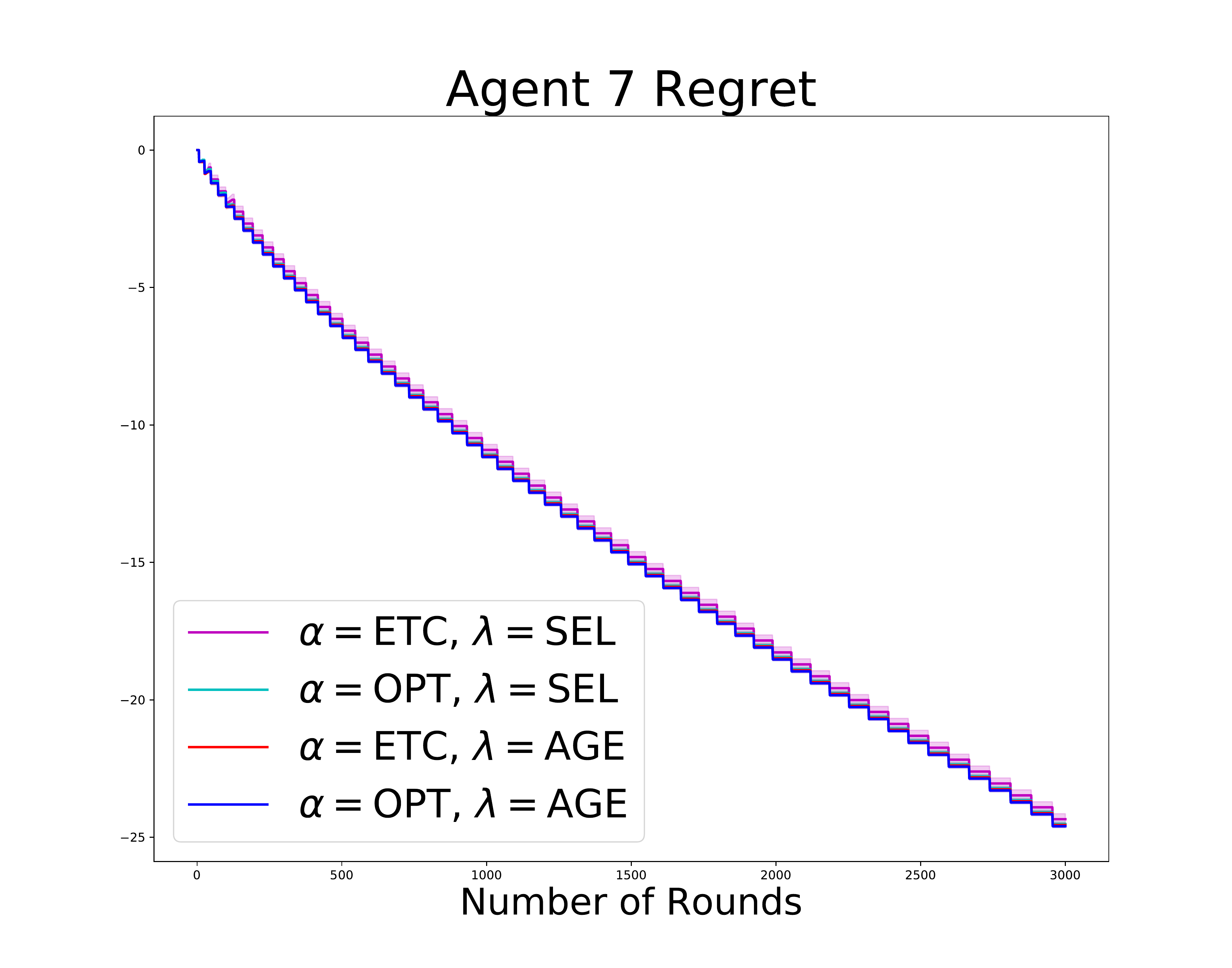}
\hspace{\dflablationfigspace}
\vspace{\imcaptionspace}
\caption{
Results for a single-item simulation study.
We show the welfare regret, the seller regret, and the regret of four agents for the four
difference choices for $(\estmethod, \pricemethod)$ over $3000$ rounds.
Lower is better in all plots.
Agent $1$ has the highest value, and if the values were known, the item would be assigned to agent
$1$.
The figures were obtained by averaging over 50 independent runs and the shaded regions
represent two standard errors.
The jagged shape of the curves is due to the periodic exploration phase in the algorithm.
Some of the curves overlap in the last two plots for agents 3 and 10.
\label{fig:figsimulation}
\vspace{\imtextspace}
}
\end{figure}
}

\begin{abstract}
We study a multi-round welfare-maximising mechanism design problem in instances where
agents do not know their values.
On each round, a
mechanism first assigns an allocation each to a set of agents and charges them a price;
at the end of the round,
the agents provide (stochastic) feedback to the mechanism for the allocation they received.
This setting is motivated by applications in cloud markets and online advertising
where an agent may know her value for an allocation only after experiencing it.
Therefore, the mechanism needs to explore different allocations for each agent so that it can
learn their values,
while simultaneously attempting to find the socially optimal set of allocations.
Our focus is on truthful and individually rational mechanisms which imitate the
classical VCG mechanism in the long run.
To that end, we first define three notions of regret for the welfare, the
individual utilities of each agent 
and that of the mechanism.
We show that these three terms are interdependent via an $\bigOmega(T^{\nicefrac{2}{3}})$
lower bound for the maximum of these three terms after $T$ rounds of allocations,
and describe an algorithm which essentially achieves this rate.
Our framework also provides flexibility to control the pricing scheme so as to trade-off
between the agent and seller regrets.
Next, we define asymptotic variants for the truthfulness and individual rationality requirements
and provide asymptotic rates to quantify the degree to which both properties are satisfied
by the proposed algorithm.

\textbf{Keywords:} Mechanism design, VCG Mechanism, Truthfulness, Game Theory, Bandits
\end{abstract}









\section{Introduction}
\label{sec:intro}

Mechanism design is one of the most important problems in economics
and computer science~\citep{nisan2001algorithmic}.
A mechanism chooses \emph{allocations} for multiple rational agents with possibly conflicting
goals and charges them a price.
It is necessary to find an \emph{outcome} (an allocation to each agent)
that is as beneficial as possible to all agents and the mechanism designer.
Agents who act in their own self interest might choose to misrepresent their values
in order to obtain an advantageous allocation.
Mechanism design aims to elicit values from agents, such that the
agents are incentivised to report truthfully (truthfulness),
while ensuring that they
are not worse off than if they had not participated in the mechanism (individual rationality).

As a motivating example,
consider a Platform-as-a-Service (PaaS) provider
who serves multiple customers using the same compute cluster.
The service provider (seller) chooses a service level (allocation) for each
customer (agent) and charges them accordingly.
The service level determines the resources allocated to the customer, and consequently
her value for that service, which could be tied to her own revenue.
A customer's experience of a service level at a given instant
is affected by exogenous stochastic factors such as traffic, machine failures, etc., which is
beyond the control of the customer.
The celebrated VCG mechanism~\citep{vickrey1961auction,groves1979efficient,clarke1971multipart}
provides a means to find outcomes which maximise the social welfare (sum of agent and seller
utilities) in such situations while satisfying
truthfulness and individual rationality.
For instance, if one customer's application is memory intensive and another's is compute intensive,
they can be co-located on the same set of machines instead of using separate machines.
This might be a better outcome for the service provider as she can serve both customers at a cheaper
cost,
and for the customers, since the service provider can now charge them less and they achieve the
same end result.
The VCG mechanism requires customers to submit bids for each service level and encourages truthful
behavior;
i.e., the dominant strategy for each customer is to submit their true value as the bid.

A crucial shortcoming of most mechanism design work, which limits its usage in practice,
 is that it assumes agents know their own values for each allocation.
For instance, the VCG mechanism
requires that customers submit bids representing these values.
This may not be true  in many real world situations, especially when there are
many unsophisticated agents and/or when
the number of allocations is very large.
However, having experienced an allocation, it is often the case that a customer can
provide feedback based on their experience.
She can either measure this directly via the impact on her own revenue, such as in online
advertising where an ad impression might lead to a click and then a purchase,
or gauge it from performance metrics, such as in the PaaS example where the
service level affects the fraction of queries completed on time, which in turn affects
her revenue.


\textbf{Setting:}
In a departure from prior work, 
we study mechanisms where agents do not know their values a priori.
However, the mechanism can learn them over multiple rounds of allocations and
feedback,
while simultaneously finding the socially optimal outcome.
At the beginning of each round, the mechanism chooses an outcome, i.e. an allocation for each agent,
and charges each agent a price.
At the end of the round, the agents report 
stochastic  feedback on their experience in using the given allocation, which we will call
\emph{reward}.
When choosing an outcome for a given round, the mechanism may use the rewards reported by the
agents in previous rounds.



This problem ushers in the classical explore-exploit dilemma encountered in bandit settings.
Provided that all agents report their rewards truthfully,
choosing the outcome that appears to be the best according to feedback provided by agents
up to the current round will likely have large welfare.
However, exploring other outcomes might improve the estimate of the
best outcome for future rounds.

As is the case in prior mechanism design work,
we assume that agents are strategic and rational, which necessitate the truthfulness
and individual rationality requirements.
A strategic agent wishes to maximise her total utility after $T$ rounds,
which is simply the sum of her instantaneous utilities (value of the allocation received minus
price).
An individually rational agent wishes to ensure that her utility after $T$ rounds is non-negative,
so that she stands to gain by participating in the mechanism.
Both these requirements are more challenging in our setting.
A mechanism cannot learn agent values 
if she does not report back truthfully.
Since she reports a reward at the end of each round, a strategic agent has significantly more
opportunity to manipulate outcomes in her favour, than in typical mechanism design settings where
she submits a single bid once.
In particular, she may be strategic over multiple
rounds, say, by incurring losses in early rounds in order to gain in the long run.
Additionally, since an agent's true values cannot be exactly known, the mechanism runs the
risk of overcharging them, which might cause her to withdraw from the mechanism.

We design an algorithm that accounts for the above considerations.
Applications such as PaaS or online advertising, where there are repeated agent-mechanism
interactions and where values can be reported back in an automated way, are suitable for
such methods.

\insertTableResultsSummary

\paragraph{Our Contributions:}
Our contribution in this work is threefold:
\begin{enumerate}
\item
First, we formalise mechanism design with bandit feedback
for settings where agents do not know their values,
but the mechanism is repeated for several rounds.
In order to quantify how close the mechanism is to the VCG mechanism,
our formalism defines the \emph{\valueregret{}}; this is derived
via regret terms for the welfare, the seller, and the agents relative
to the VCG mechanism.
Additionally,
given the above challenges in achieving truthfulness and individual rationality exactly,
we define asymptotic variants to make the problem tractable.
\item
Second, we establish a hardness result 
via a $\bigOmega(\Ttwth)$ lower bound after $T$ rounds for the \valueregret{}
even under truthful reporting from agents.
\item
Third, we describe \vcglearn, an algorithm whose behaviour is determined by two binary hyperparameters.
For all values of these hyperparameters, the algorithm is asymptotically individually rational and
truthful,
and moreover match the above lower bound on the \valueregret{} up to factors that are
polylogarithmic in $T$ and polynomial in other problem-dependent terms.
However, the asymptotic rates and the regrets of the agent and the seller are affected
by the choice of these hyperparameters.
Table~\ref{tb:resultssummary} summarises the results for Algorithm~\ref{alg:vcglearn}.
\end{enumerate}

This manuscript is organised as follows.
First, in 
Section~\ref{sec:related}, we discuss related work.
In Section~\ref{sec:setup}, we briefly review the VCG mechanism and
describe our formalism.
Section~\ref{sec:lowerbound} presents the lower bound on the \valueregret.
Section~\ref{sec:vcglearn} present our algorithm, \vcglearn,
and Section~\ref{sec:analysis} presents the
theoretical results for \vcglearn.
Section~\ref{sec:experiments} presents some simulation results.
The proof of the lower bound is given in Section~\ref{sec:lb_app} and
the proofs of results in Section~\ref{sec:analysis} are given in Section~\ref{sec:vcglearnproofs}.


\section{Related Work}
\label{sec:related}

Bandit problems were first studied by~\citet{thompson33sampling} and have since become an
attractive framework to study exploration-exploitation trade-offs that arise in
online decision-making.
Optimistic methods, which usually choose an arm on a given round by maximising an upper confidence
bound on the mean rewards, 
are known to be minimax optimal in a variety of stochastic optimisation
settings~\citep{lai1985asymptotically,auer03ucb,bubeck2011x}.
Explore-then-commit strategies use separate rounds for exploration and exploitation.
While they are provably
sub-optimal~\citep{garivier2016explore}, they separate exploration from exploitation
facilitating a cleaner analysis when we need to combine optimisation
with other side objectives, such as in our problem, where we need to provide truthfulness guarantees
and compute the prices.

Mechanism design has historically been one of the core areas of research in the economics and
game theory literature
with applications in kidney exchange~\citep{roth2004kidney},
matching markets~\citep{roth1986allocation}, and
fair division~\citep{procaccia2013cake}.
Our work is on auction-like settings for mechanism design.
In addition to a rich history of research on this topic,
there has also been a recent flurry of work due to the rise in popularity of
sponsored search markets~\citep{lahaie2007sponsored,mehta2007adwords,aggarwal2006truthful},
wireless spectrum auctions~\citep{cramton2013spectrum,milgrom2017discovering},
and cloud spot markets~\citep{toosi2016auction}.

There is a long history of work in the intersection of machine learning and mechanism design.
Some examples include online learning
formulations~\citep{dudik2017oracle,amin2013learning,kakade2010optimal},
learning bidder values from past
observations~\citep{balcan2016sample,blum2015learning,balcan2008reducing},
and learning in other settings with truthfulness
constraints~\citep{mansour2015bayesian}.
Some work in this space study settings where individual agents may learn to bid in a repeated
auction.
Here, an agent may not know her value at the beginning, but needs to submit bids at the
\emph{beginning}
of each round.
The agent may calibrate her bid based on past rewards.
In this line of work,~\citet{weed2016online} and~\citet{feng2018learning} study a setting where the
behaviour of the mechanism is fixed over multiple rounds,
while~\citet{nedelec2019learning} study a setting where the mechanism may adapt its behaviour
so as to maximize revenue.
In a similar vein, ~\citet{liu2019competing} develop bandit methods where agents on one side of a
matching market learn to bid for arms on the other side.
In the above work, the regret is defined for the agent in question, defined relative to an oracle
which knows the true values.
In contrast to these works, in our setting, learning happens entirely on the mechanism side
and the role of each agent is very simple: submit the reward at the \emph{end} of the round.
This imposes minimal burden on agents who, while being strategic and rational,
may not be very sophisticated.



A body of work study multi-armed bandit formalisms for auctions with canonical
use cases in online advertising~\citep{babaioff2015truthful,babaioff2014characterizing,devanur2009price}.
In the above works, there is a single item (ad slot) with different and unknown click-through
rates for each agent $c_{i}\in\RR$.
The agent has a \emph{known} private value $v_{i}\in\RR$ for each click
and she submits a bid $b_i\in\RR$ once ahead of time representing this value.
On each round, the mechanism chooses one of the agents for the slot and observes the number of
clicks $c$; if agent $i$ was chosen, then $\EE[c] = c_{i}$
The agent's reward for this round is $c\cdot v_i$.
They formalise this problem where the agents are viewed as the arms and define regret with respect to
the optimal arm, i.e. the agent with the highest expected reward $v_i\times c_i$.
Importantly, the stochastic component  $c$ of the reward is observed by \emph{both} the agent and the
mechanism.
In both works, truthfulness means that the agent is incentivised to submit a bid $b_i=v_i$
at the beginning of all rounds.
There are a number of differences between these works and our setting.
First, while they formalise each agent as a different arm competing for the item,
in our setting, the allocations are viewed as arms with multiple agents being able to experience
different arms simultaneously.
We do not believe their results, and their lower bound in particular,
 can be straightforwardly extended to settings where
multiple agents might receive an allocation.
Second, in these works the agent can only submit a single bid and the
stochastic component of the reward (number of clicks) is observed by the mechanism on each round.
In contrast, in our setting, the reward on each round is only revealed to the agent, and she may
misreport this reward to the mechanism on each round;
therefore, she has significantly more opportunity
to manipulate outcomes in her favour.
Due to these differences, their results are not comparable to ours.
We will elaborate in other differences between our results and theirs in further
detail at the end of Sections~\ref{sec:lowerbound} and~\ref{sec:analysis}.


In other related work,~\citet{braverman2019multi} consider a setting where a seller chooses one
of $n$ agents to receive an item on each round of a repeated auction.
The agents submit a payment at the end of the round to the seller based on the reward they observed.
They study mechanisms that allow the seller to extract as much payment as possible from the agents
who themselves are trying to maximise their long term utility.
~\citet{nazerzadeh2016sell} study a multi-round setting where the seller chooses an agent and a
price on each round; the agent may choose to purchase the item at the price in which case
the seller receives some revenue.
Their goal is to maximise the revenue for the seller over a finite horizon of $T$ rounds.
~\citet{gatti2012truthful} study an online advertising setting when there are multiple ad slots
with different click-through rates which are the same for all agents, and design a mechanism which
charges the agents only when an ad is clicked. 
Finally, some work on dynamic 
auctions~\citep{bergemann2006efficient,athey2013efficient,kakade2013optimal} study settings
where agent values are unknown at the beginning but there is a known prior on the agent
value. Over time, she receives side information and the mechanism needs to
incentivise truth telling so as to update the posterior.

Perhaps the closest work to ours
is~\citet{nazerzadeh2008dynamic}, who study a single item
auction with a feedback method similar to ours: agents report rewards at the end of the round and
the learning happens at the mechanism.
While they consider asymptotic efficiency, truthfulness, and individual rationality
(with definitions that differ from ours), they do not provide rates, establish lower bounds,
or study the regrets of the agents and seller.


\section{Problem Description}
\label{sec:setup}
\label{sec:prelims}

\subsection{A brief review of the VCG mechanism}
\label{sec:vcgreview}

We begin with a brief review of mechanism design adapted to our setting.
There are $n$ agents (customers) $\{1,\dots,n\}$,
a mechanism (seller), and a set of possible outcomes $\Alloc$.
The mechanism chooses an \emph{outcome} $\alloc\in\Alloc$
and charges a price $\pricei$ to each agent.
For agent $i$, there exists a function $\mapfuni:\Alloc\rightarrow\Scal$
which maps outcomes to \emph{allocations} relevant to the agent;
i.e., different outcomes $\alloc, \alloc'$ might yield the same allocation to the agent,
$\mapfuni(\alloc) = \mapfuni(\alloc')$.
In this work, $|\Scal|<\infty$.
$\Scal$ could be as large as $\Alloc$, but could be much smaller in some applications.
This distinction between $\Scal$ and $\Alloc$ will be important when we consider the learning
problem in Section~\ref{sec:vcglearnsetup}; as we will see shortly,
our regret bounds will scale with $|\Scal|$ and not $|\Alloc|$.

Agent $i$ has a \emph{value function},
$\vali:\Scal\rightarrow[0, 1]$, where $\vali(s)$ represents her
private independent value for the allocation $s$.
For an outcome $\alloc\in\Alloc$, we will overload notation and write $\vali(\alloc) =
\vali(\mapfuni(\alloc))$.
After the agent experiences an allocation, she realises a \emph{reward} $X_i$  drawn from a
$\sigma$ sub-Gaussian distribution with mean $\vali(s)$.
We let $\valmech:\Alloc\rightarrow\RR$ denote the value function of the mechanism designer.
In the PaaS example,
$\valmech(\alloc)$ may denote the cost to the service provider for
providing the service where the allocations are
as specified in $\alloc$.
%
For an outcome $\alloc$ and prices $\{\pricei\}_{i=1}^n$, the \emph{utility} of
agent $i$ is $\utili = \EE[X_i] - \pricei = \vali(\alloc) - \pricei$.
The utility of the seller (which may represent profit) is
$\utilmech = \valmech(\alloc) + \sum_{i=1}^n\pricei$.
The \emph{welfare} $\Val(\alloc)$ is the sum of
the agent and seller values
$\Val(\alloc) = \sum_{i=1}^n \vali(\alloc) + \valmech(\alloc)$, which is also the
sum of all utilities regardless of the prices $\{\pricei\}_{i=1}^n$.

The expectations above are taken with respect to the rewards, i.e. the exogenous
stochasticity arising when agents experience their allocation.
In applications of interest, the agent does not have control over nor is able to predict this
stochasticity.

\textbf{The VCG Mechanism:}
Assume that the agents know their value functions $\vali$ and submit them truthfully as
bids to the seller.
The VCG mechanism stipulates that we choose the outcome $\allocopt$
which maximises the welfare. We then charge agent $i$ an amount $\priceiopt$,
which is the loss her presence causes to the others.
Precisely, denoting 
$\Valmi(\alloc) = \valmech(\alloc) + \sum_{j\neq i} \valj(\alloc)$, we have
\begin{align*}
\numberthis
\label{eqn:vcg}
\allocopt &= \argmax_{\alloc\in\Alloc} \; \Val(\alloc),
\hspace{0.4in}
\priceiopt = \max_{\alloc\in\Alloc} \Valmi(\alloc) - \Valmi(\allocopt).
\end{align*}
In general, an agent may submit a bid $\bidi:\Scal\rightarrow[0,1]$ (not necessarily truthfully),
and the mechanism computes the outcomes and prices by replacing $\vali$ with $\bidi$ above.
The VCG mechanism satisfies the following three fundamental desiderata in mechanism
design~\citep{karlin2017game}:
\begin{enumerate}
\item \emph{Truthfulness}:
A mechanism is  truthful or dominant strategy incentive-compatible if,
regardless of the bids submitted by other agents,
the utility $\utili$ of agent $i$ is maximised
when bidding truthfully, i.e. $\bidi=\vali$.
\item \emph{Individual rationality:} 
A mechanism is individually rational if it does not charge an agent more than
her bid for an allocation. Thus, if she bids truthfully, her
utility is nonnegative.
\item \emph{Efficiency:} If all agents bid truthfully,
a mechanism is efficient
if it maximises welfare.
\end{enumerate}

Since the agents cannot control the exogenous stochasticity,
it is meaningful for agents to submit bids based on their expected rewards, i.e. their value.
This is different from Bayesian formalisms for mechanism design
where agent values are drawn from a \emph{known} prior and she may submit bids based on this value.
(A Bayesian formulation of this setting would assume priors over the values, i.e.
the expected rewards, themselves.)
The following examples illustrate our motivations.

\insertprethmspacing
\begin{example}[PaaS]
\label{exm:paas}
\emph{
In the PaaS example from Section~\ref{sec:intro}, $\Scal$ are the
service levels (allocations) available to a customer.
$\Omega=\Scal^n$ are the possible outcomes.
 $-\valmech(\omega)$ is the cost for providing the service as specified in $\alloc$.
An agent's reward $X_i$ for a service level $s$ could denote her instantaneous revenue, which
is affected by exogenous stochastic factors such as traffic, machine failures,
etc., but it concentrates around her expected revenue, i.e. her value, $\vali(s)$.
A strategic who agent cannot control such stochastic effects would hence submit bids so as to
maximise her utility (expected reward minus price).
Such PaaS services can take place in a competitive market or internally
within an organisation where the provider is one team providing a service to other teams.
}
\end{example}

\insertprethmspacing
\begin{example}[Online Advertising]
\label{exm:advertising}
\emph{
A publisher (mechanism)
has a set of advertising slots $\Scal$ and must assign them to $n$ advertisers (agents).
Typically, $|\Scal|\ll n$ and there exists $\emptyset\in\Scal$ indicating no assignment.
When a slot is assigned to an advertiser, her reward is her instantaneous revenue which is
simply the number of people who clicked the ad and then purchased the product.
Consequently, it is a random quantity.
Different agents could have different values for different slots.
$\Omega$ is the set of possible ways in which the mechanism can assign slots to advertisers.
}
\end{example}

Henceforth,
when we say that an agent is truthful, we mean that she reports her values truthfully,
whereas when we say that a mechanism is truthful, we mean that it incentivises
truthful behaviour from the agents.
We are now ready to describe the learning problem when agents do not know their values,
but when the mechanism is repeated for multiple rounds.

\subsection{Learning a VCG mechanism under bandit feedback from agents}
\label{sec:vcglearnsetup}


In the multi-round setting, agent and seller values $\{\vali\}_{i=0}^{n}$
remain fixed throughout all the rounds.
On round $t$, the  mechanism chooses an outcome $\alloct\in\Alloc$
and sets prices $\{\priceit\}_{i=1}^n$ for the agents.
Then, agent $i$ realises a stochastic reward $\Xit$ which has expectation $\vali(\alloct)$.
At the end of the round, she reports a reward $\Yit$; if she is being truthful, she would
report $\Yit=\Xit$, but she may also choose to misreport the reward.
While the agent does not know her values $\vali$, by reporting the reward at the end of each
round, the mechanism could learn these values over multiple rounds.

%
%

While the primary focus of this work are for agents who do not know their values, our mechanism
can also accommodate agents who know their values up front.
Hence, we will also permit agents to submit bids $\bidi:\Scal\rightarrow[0, 1]$ 
(not necessarily $\bidi=\vali$) which represent her values for all rounds;
she may do so once before the first round.
We will refer to agents who submit rewards at the end of each round as those
participating \emph{by rewards}, and those who submit bids once at the beginning as those
participating \emph{by bids}.
As we will see shortly, stronger results are possible for agents who participate by bids as their
values need not be learned.

When choosing outcome $\alloct$ on round $t$,
the mechanism may use the information gathered from previous
rewards $\{\Xil\}_{\ell=1}^{t-1}$ for agents participating by rewards and the bids
$\bidi$ for agents participating by bids.
The utility of
agent $i$ on round $t$ is $\utilit = \EE[\Xit] - \priceit = \vali(\alloct) - \priceit$,
where the expectation is only with respect to the rewards (exogenous stochasticity) at round $t$.
The utility of the seller is $\utilmecht = \valmech(\alloct) + \sum_{i=1}^n\priceit$.
Let $\UiT, \UmechT$, defined below,
 denote the sum of utilities of agent $i$ and the mechanism respectively over $T$ rounds.
We have:
\begin{align*}
\label{eqn:agentutil}
\numberthis
\UiT = \sum_{t=1}^T \utilit, \hspace{1.1in}
\UmechT = \sum_{t=1}^T \utilmecht.
\end{align*}

Our goal is to design an anytime algorithm which imitates the VCG mechanism over time.
To that end,
we quantify the performance of an algorithm via the following regret terms,
defined relative to the VCG mechanism~\eqref{eqn:vcg},
after $T$ rounds of interactions:
\vspace{-0.05in}
\begingroup
\allowdisplaybreaks
\begin{align*}
&\RT = T\Val(\allocopt) - \sum_{t=1}^T\Val(\alloct),
\hspace{0.5in}
\RiT = T\utiliopt - \UiT,
\hspace{0.5in}
\RaT = \sum_{i=1}^n \RiT,
\\
&\RmechT = T\utilmechopt - \UmechT,
\hspace{0.5in}
\RmaxT = \max(n\RT, \RaT, \RmechT).
\numberthis
\label{eqn:regret}
\end{align*}
\endgroup
Here $\allocopt$ is the optimal outcome~\eqref{eqn:vcg}, which we will assume is unique.
Moreover, 
$\utilmechopt = \valmech(\allocopt) + \sum_i\priceiopt$ and
$\utiliopt = \vali(\allocopt) - \priceiopt$
are the utilities of the seller and agent $i$ respectively in the VCG mechanism.
$\RT$ is the welfare regret over $T$ rounds; it measures the welfare of the chosen outcomes
$\alloct$ relative to $\allocopt$.
$\RiT$ is the regret of agent $i$ and $\RmechT$ is the regret of the seller, both defined relative
to the VCG mechanism.
$\RaT$ is the sum of all agents' regrets.
Finally, we also define the \emph{\valueregret{}} $\RmaxT$.
In~\eqref{eqn:agentutil} and~\eqref{eqn:regret},
we have followed pseudo-regret convention, which takes an expectation with respect to
the rewards at the current round.
\toworkon{Better term to use here than \valueregret?}

Our goal is to imitate the VCG mechanism over time, and $\RmaxT$
captures how well the welfare, and all agent/seller utilities converge uniformly to their
VCG values.
As we will see shortly, $\RmaxT$ will be a fundamental quantity in this problem,
and we will use it to establish a hardness result.
We focus on the VCG mechanism because it is one of the well-studied
paradigms in multi-parameter mechanism design and is therefore a natural starting point.
Moreover, even in competitive markets, sellers may be motivated to maximise welfare for long-term
customer retention.
This is similar in spirit to~\citet{devanur2009price} who study a seller's regret,
and~\citet{weed2016online} who study an agent's regret when the agent bids in
a repeated single item auction---in both cases, the regret is defined relative to the Vickrey auction.

A truthful agent simply reports her rewards at the end of each round.
In general though, a strategic agent follows some strategy $\pi$ so as to maximise her sum of
utilities over several rounds.
If she is participating by rewards, $\pi$ is a map from her
past information $\{(\mapfuni(\allocl), \priceil, \Xil, \Yil)\}_{\ell=1}^{t-1}$
and current allocation, price, and reward $(\mapfuni(\alloct),\priceit,\Xit)$
to a (possibly random) scalar to report as $\Yit$.
In particular, the agent may adopt a non-truthful strategy $\pi$, where
she misreports her reward at the end of the current round so as to manipulate the allocations she
may receive in future rounds, with the intent of maximising her long-term utility $\UpiiT$
for large $T$.
We also mention that
if an agent is participating by bids, $\pi$ is simply the bid that she submits
ahead of time.

In addition to obtaining sublinear VCG regret~\eqref{eqn:regret},
we would like to achieve the three
desiderata for mechanism design given in Section~\ref{sec:vcgreview}.
Here we define variants of those desiderata in order to precisely delineate
the extent to which they can be achieved in our setting.
\begin{enumerate}
\item \emph{Truthfulness}:
\label{pro:asytruth}
Let $\UiT$ and $\UpiiT$ respectively denote the sum of utilities of agent $i$ when she is being truthful and
when she is following any other (non-truthful) strategy $\pi$.
A mechanism is \emph{truthful}, if,
for all $\pi,T$, $\UpiiT \leq \UiT$ almost surely (a.s), regardless of the behaviour of other agents.
It is \emph{asymptotically truthful} if, for all $\pi, T$,
$\EE[\UpiiT - \UiT] \in \littleO(T)$, regardless of the behaviour of other agents.
A mechanism is \emph{asymptotically Nash incentive-compatible} (NIC) if, for all $\pi, T$,
$\EE[\UpiiT - \UiT] \in \littleO(T)$, when the other agents are behaving truthfully.
\item \emph{Individual rationality:}
Assume that agent $i$ is truthful.
A mechanism is \emph{individually rational} if, for all $T$,
$\UiT \geq 0$ a.s, regardless of the behaviour of other agents.
It is \emph{asymptotically individually rational} if 
$\lim_{T\rightarrow\infty} \frac{1}{T} \EE[\UiT] \geq 0$, regardless of the behaviour of other agents.
\item \emph{Efficiency:}
A mechanism is \emph{asymptotically efficient} if $\EE[\RT] \in \littleO(T)$
when all agents are reporting truthfully.
\end{enumerate}
To undestand the difference between the almost-sure and in-expectation definitions above,
recall that 
$\utilit = \vali(\alloct) - \priceit$ contains an  expectation with
respect to the reward at round $t$, but is a random quantity
as the outcome $\alloct$ and price $\priceit$ depend on
the rewards realised/reported by all agents in previous rounds.
In our almost sure definitions above, the statements should hold
regardless of this randomness, whereas in our in-expectation definitions, they need to hold
in expectation over the past exogenous randomness.
\toworkon{I had previously called it Bayes-Nash incentive compatible and not Nash Incentive
comaptible since the former is a more standard term used in mechanism design while the latter is
not. However, there is nothing Bayesian about this property that I have defined. Let me know your
thoughts on this.}

While achieving dominant-strategy incentive-compatibility is a desirable goal, it can be
difficult, especially in multi-round
mechanisms~\citep{babaioff2014characterizing,babaioff2013multi}.
A common approach to sidestep this difficulty is to adopt a Bayesian formalism which assumes
that agent values are drawn from \emph{known} prior beliefs and consider
\emph{ex ante} or \emph{ex interim} versions of incentive-compatibility.
However, Bayesian assumptions can be strong~\citep{schummer2004almost} as it may not be possible to
know the prior distributions ahead of time.
In contrast, we do not make such distributional assumptions,
but rely on  asymptotic notions of truthfulness to make the problem tractable.
If a mechanism is asymptotically truthful, the maximum value an agent may gain by not
being truthful vanishes over time.
In many applications it is reasonable to assume that agents would be truthful if the benefit
of deviating is negligible, especially in settings where they may not know their value.
It is worth pointing out that prior work has explored similar ideas of approximate
incentive-compatibility in various
contexts~\citep{nazerzadeh2008dynamic,lipton2003playing,kojima2010incentives,roberts1976incentives,feder2007approximating,daskalakis2006note}.



%
Finally, we will define two problem-dependent terms for what follows.
First, let $K$ be the minimum number of rounds necessary to assign all allocations to all agents.
In Example~\ref{exm:paas}, we can do this in $K=|\Scal|$ rounds,
provided that there are no constraints on assigning different service
levels to different agents.
In Example~\ref{exm:advertising}, this can be done in $K=n$ rounds if $|\Scal|<n$.
Second, let $\Vmax$ be an upper bound on the expected welfare.
Since $\vali(s)\in[0, 1]$, $\Vmax$ could be as large as $\bigO(n)$.
However, it can be small in settings such as Example~\ref{exm:advertising} where it
is $\bigO(1)$ if there is only one ad slot.

We make two observations before we proceed.
First, we consider a fairly unadorned version of this problem
as it provides the simplest platform to study how truthfulness and individual rationality
constraints affect learning in this setting.
One could study richer models which assume structure between the allocations in $\Scal$
or that the values $\vali$ change on each round.
For instance,  we may assume that
$\Scal\subset\RR^d$ and that $\vali$ is either linear or smooth in these $d$ attributes.
We may also consider variations which incorporate changing values and/or contextual information.
While these settings are beyond the scope of this work,
we believe the analysis techniques and intuitions developed in this work would be useful in
analysing such settings.
Second, while our feedback model requires agents to share their observed reward at the end of each
round,
this is not too dissimilar from agents sharing their values in mechanism design.
For instance, in Example~\ref{exm:advertising}, in usual truthful mehanisms, the agents
would share their expected revenue from an ad slot when its known, whereas in our setting they
would submit their instantaneous revenue on each round so that its expectation can be learned.


\section{A Hardness Result}
\label{sec:lowerbound}
\label{sec:lb}

We first establish a lower bound on the \valueregret{}, defined in~\eqref{eqn:regret},
even when all agents are truthful.
To formalise this,
let $\Theta$ be the class of problems with $n$ agents,
and $\Acal$ be the class of algorithms for this setting.
Note that
the regret terms in~\eqref{eqn:regret} depend on the specific problem in $\Theta$ and algorithm in
$\Acal$.

\insertprethmspacing
\begin{theorem}
\label{thm:lowerbound}
Let $n\geq 2$ and assume all agents are truthful.
Let the \valueregret{} $\RmaxT$ be as defined in~\eqref{eqn:regret}.
Then, for $T\geq 128 n$,
\[
\inf_\Acal \sup_\Theta \;\EE\left[\RmaxT\right]
\geq\,
\frac{1}{50} (n-1)^{\nicefrac{4}{3}} T^{\nicefrac{2}{3}}.
\]
\end{theorem}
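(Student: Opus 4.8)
The plan is to prove this minimax lower bound by a change-of-measure (two-point) argument, engineered so that a single unidentifiable parameter forces an unavoidable tradeoff between welfare exploration and pricing accuracy, with both sides of the tradeoff amplified by a factor of $\Theta(n)$.

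The first step is an exact accounting reduction. For any round and any prices, $\sum_i \utilit + \utilmecht = \Val(\alloct)$ and $\sum_i \utiliopt + \utilmechopt = \Val(\allocopt)$, so summing the definitions in~\eqref{eqn:regret} gives the identity $\RaT + \RmechT = \RT$; the per-round price terms cancel in this sum. Since $\RT \geq 0$ and $\RmaxT = \max(n\RT, \RaT, \RmechT)$, there are two complementary handles: one may lower bound $\RmaxT$ through $n\RT$ (the welfare channel), or through $\max(\RaT,\RmechT) \geq \tfrac12\RT + \tfrac12|\RaT - \RmechT|$. Crucially, while the price deviations $\sum_t\sum_i(\priceit - \priceiopt)$ cancel in the sum $\RaT+\RmechT$, they survive in the \emph{difference} $\RaT - \RmechT$. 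Thus to keep $\RmaxT$ small an algorithm must simultaneously keep welfare regret small \emph{and} keep the charged prices close to the VCG prices $\priceiopt$ in aggregate across agents.

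Next I would construct a family $\{\theta_\epsilon\}\subset\Theta$ indexed by a gap $\epsilon>0$, containing a designated ``probe'' outcome that (i) is the only outcome carrying information about a hidden binary parameter of the instance, (ii) is suboptimal in welfare by $\Theta(n)$, and (iii) whose parameter enters the VCG prices $\priceiopt$ of $\Theta(n)$ agents, shifting each by $\Theta(\epsilon)$; a scarce-resource coupling is needed so that these externality prices are nonzero. Because the agents are truthful (by hypothesis), the reported feedback $\Yit=\Xit$ is exactly the instance's $\sigma$-sub-Gaussian reward law, so the observation channel is pinned down by the instance together with the algorithm's allocation choices. Letting $N$ denote the expected number of probe rounds, $N$ pulls yield $\mathrm{KL}\lesssim N\epsilon^2/\sigma^2$ between the two instances' observation laws. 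The dichotomy is then: if $N \geq c/\epsilon^2$, each probe costs $\Theta(n)$ welfare, so $\RT \gtrsim n/\epsilon^2$ and hence $n\RT \gtrsim n^2/\epsilon^2$; if $N < c/\epsilon^2$, then by the Bretagnolle--Huber inequality (equivalently Le Cam's two-point method) the two instances are statistically indistinguishable, so on at least one of them the history-measurable prices are off by $\Theta(\epsilon)$ on a constant fraction of the $T$ rounds and for $\Theta(n)$ agents, giving $\max(\RaT,\RmechT)\gtrsim nT\epsilon$ there (the uniform sign of the aggregate price bias lands in $\RaT$ on one instance and in $\RmechT$ on the other). Balancing the two cases with the choice $\epsilon \asymp ((n-1)/T)^{1/3}$ equates $n^2/\epsilon^2 \asymp nT\epsilon \asymp (n-1)^{4/3}T^{2/3}$, and the condition $T\geq 128n$ guarantees $\epsilon$ is small enough to keep all values in $[0,1]$ and to deliver the explicit constant $\tfrac{1}{50}$.

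The main obstacle is the construction itself: arranging one instance family in which a \emph{single} unlearnable parameter simultaneously (a) forces a $\Theta(n)$ welfare loss per probe and (b) corrupts the VCG prices of $\Theta(n)$ agents, all while retaining the genuine resource coupling that makes the VCG externality prices nonzero and the optimal outcome $\allocopt$ unique. The secondary difficulty is making the indistinguishability step fully rigorous: the prices $\priceit$ are random functions of the entire reported history, so converting ``$N<c/\epsilon^2$'' into a lower bound on the accumulated aggregate mispricing requires carefully controlling the law of these history-dependent quantities under the two measures and summing the per-agent, per-round deviations without the welfare terms of $\RaT-\RmechT$ undoing the bound.
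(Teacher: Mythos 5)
Your high-level skeleton---the identity $\RaT+\RmechT=\RT$, the observation that price errors cancel in this sum but survive in the maximum of $\RaT$ and $\RmechT$, and a two-point Bretagnolle--Huber argument trading welfare exploration against price estimation---is the same as the paper's, which encodes the aggregate price error as $T\WT$ via $\RaT=n\RT+T\WT$ and $\RmechT=-(n-1)\RT-T\WT$, and lower bounds $\RmaxT\geq\frac{2}{5}n\RT+\frac{1}{5}T|\WT|$. The genuine gap is in your construction: conditions (ii) and (iii) are mutually incompatible. By (i) the hidden parameter lives only in the values at the probe outcome, say $\alloc'$, so it can enter $\priceiopt=\max_{\alloc}\Valmi(\alloc)-\Valmi(\allocopt)$ only if $\alloc'$ maximises $\Valmi$ in at least one of the two instances (the term $\Valmi(\allocopt)$ is identical across instances). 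Since $\Valmi(\alloc)=\Val(\alloc)-\vali(\alloc)$ and $\vali$ takes values in $[0,1]$, probe optimality for $\Valmi$ forces
\[
\Val(\allocopt)-\Val(\alloc') \;\leq\; \vali(\allocopt)-\vali(\alloc') \;\leq\; 1,
\]
so a price-relevant probe can have welfare gap at most $1$, never $\Theta(n)$; note this survives even if you try to create the gap through $\valmech$, since the seller value appears in both $\Val$ and $\Valmi$. The $\Theta(n)$ gap is not cosmetic in your accounting: with the gap capped at $O(1)$ and your stated parameters (KL $\asymp\epsilon^2$ per pull, per-agent price shift $\asymp\epsilon$ for $\Theta(n)$ agents), the balance becomes $n/\epsilon^2\asymp nT\epsilon$, giving $\epsilon\asymp T^{-\nicefrac{1}{3}}$ and only $nT^{\nicefrac{2}{3}}$, which loses the $(n-1)^{\nicefrac{4}{3}}$ dependence the theorem claims.

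The paper's construction shows where the two factors of $n$-amplification must go instead: into the information per probe pull and into the price shift, not into the welfare gap. It uses $n$ probe outcomes, one per agent: probe $j$ sets $\valj(j)=0$ and $\vali(j)=\nicefrac{1}{2}$ versus $\nicefrac{1}{2}+\delta$ (the two instances) for all $i\neq j$, while the common optimum has all values $\nicefrac{1}{2}$, so every probe has welfare gap exactly $\nicefrac{1}{2}$. Then probe $j$ maximises $\Valmi$ for agent $i=j$, every agent's VCG price shifts by $(n-1)\delta$ (so the target $\sum_{i}\Valmi(\allocmiopt)$ tracked by $\WT$ shifts by $n(n-1)\delta$), and each probe pull carries KL $\asymp n\delta^2$ because $n-1$ agents' reward laws differ there. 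This allocation is essentially forced: by Cauchy--Schwarz, a per-agent price shift of $s$ costs KL $\gtrsim s^2/n$ per pull, and optimising the resulting welfare-versus-pricing tradeoff yields $\delta\asymp(T(n-1)^2)^{-\nicefrac{1}{3}}$ and the rate $(n-1)^{\nicefrac{4}{3}}\Ttwth$, exactly the paper's choice. Your closing concern about history-dependent prices also dissolves in the paper's formulation: Bretagnolle--Huber is applied to the single event that the scalar $\HT=\frac{1}{T}\sum_{i}\sum_{t}(\priceit+\Valmi(\alloct))$ exceeds the midpoint of its two instance-dependent targets, so no per-round control of the law of the prices is needed.
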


The above theorem captures the following intuition: regardless of the chosen outcome,
the seller can achieve small regret by demanding
large payments from the agents; however, this will result in large agent regret.
Hence, there is a natural trade-off between agent and seller regrets, which is determined by how
the prices $\{\priceit\}_{i,t}$ are set when the VCG prices $\{\priceiopt\}_{i}$  are unknown but
can only be estimated from data.
In fact, we will also see this phenomenon manifest in our algorithm, where, 
while there is flexibility to handle this trade-off
in a way  that  is favourable to either the agent or the seller,
the maximum of $\RaT$ and $\RmechT$ is always large (Proposition~\ref{prop:agentseller}).

It is necessary to study $\max(n\RT, \RaT, \RmechT)$ instead of $\max(\RaT, \RmechT)$ as we need
to account for the fact that the value being shared by the agents and the mechanism is
constrained by the total welfare generated, which is factored into the welfare regret $\RwT$.
Precisely, the total welfare generated over $T$ rounds is $T\Val(\allocopt) - \RwT$ and not
the maximum achievable $T\Val(\allocopt)$ when the values are known.

It is instructive to compare this result with prior 
lower bounds in similar settings where learning happens on the mechanism
side~\citep{babaioff2014characterizing,babaioff2015truthful,devanur2009price}.
In the online advertising setting described in Section~\ref{sec:related},
~\citet{babaioff2014characterizing} show that $\bigOmega(\Ttwth)$ welfare regret is unavoidable
 for deterministic a.s. truthful mechanisms.
~\citet{devanur2009price} establish a similar lower bound for the seller regret, defined relative to the
seller's revenue in a Vickrey auction for online advertising.
Both hardness results rely on a necessary and sufficient condition for truthfulness
in single parameter auctions~\citep{archer2001truthful,myerson1981optimal}.
In contrast, our result for the \valueregret{} is obtained by studying the estimation error of the
prices and applies to the maximum of the welfare, agent, and seller regrets, even
when agents are reporting truthfully.
Moreover, while our result applies to the VCG regret for general mechanisms,
their results apply to the welfare/seller regrets only in the online advertising use case described in
Section~\ref{sec:related}.
~\citet{babaioff2015truthful} design a randomised multi-round mechanism for this online advertising
use case which is truthful in expectation and achieves
$\bigO(\sqrt{T})$ welfare regret.
This result does not contradict our result above which, as described in Section~\ref{sec:related},
 considers a different feedback model and additionally accounts for
the agent and seller regrets along with the welfare regret.

\paragraph{Proof sketch of Theorem~\ref{thm:lowerbound}:}
Minimising all regret terms requires that we estimate the VCG prices~\eqref{eqn:vcg} correctly,
which is the main bottleneck as the best outcome omitting any given agent might be very different from
the optimal outcome.
We first use a series of manipulations to lower bound the VCG regret via
$\RmaxT \geq n\EE\RT + \EE\WT$ where $\RT$ is the welfare regret and
$\WT$ captures how well we have estimated the prices.
These two terms are conflicting---minimising one will cause the other to be large.
We reduce the task of minimising the supremum of this sum over $\Theta$ to a binary hypothesis
testing problem between two carefully chosen problems in $\Theta$.
We then apply a high probability version of Fano's inequality to obtain the result.
The complete proof is given in Section~\ref{sec:lb_app}.

\section{Algorithm}
\label{sec:vcgee}
\label{sec:vcglearn}

We now describe our algorithm for this setting, called \vcglearn,
which is outlined in Algorithm~\ref{alg:vcglearn}.
The algorithm has two binary hyperparameters $\estmethod, \pricemethod$,
which control the trade-offs between the agent
and seller regrets and properties such as truthfulness and individual rationality. 
We will first describe the algorithm then explain how these hyperparameters may be used
to control the above properties.

\vcglearn{} proceeds over a sequence of brackets.
Brackets are indexed by $q$ and rounds by $t$.
Each bracket begins with an explore-phase of $K$ rounds where the mechanism assigns all
allocations in $\Scal$ to all agents at least once.
It does not charge the agents during this phase but collects their realised rewards.
This is then followed by  
$\lfloor \frac{5}{6}Kq^{\nicefrac{1}{2}{}}\rfloor$
rounds, during which the mechanism sets the outcome and prices based on the
rewards collected thus far.
The outcomes in the latter phase are chosen depdendent on hyperparameter $\estmethod$;
if $\estmethod=\estee$ (explore-then-commit),
we only use rewards from the explore phase to determine the outcomes,
whereas when $\estmethod=\estopt$ (optimistic), we use rewards from all rounds thus far.
By proceeding in brackets in the above manner,
we are able to optimally control the time spent in the different phases.
As
the length of the latter phase increases with each bracket, we spend more rounds in this
phase than in the explore phase as the mechanism is repeated for longer.

\insertAlgoVCGLearnTwo

To describe how we compute the outcome, we first define the following three quantities,
$\vbarit,\vubit,\vlbit:\Scal\rightarrow[0,1]$.
For an agent participating by rewards,
$\vbarit(s)$ is the sample mean of the rewards when agent $i$ was assigned outcome $s\in\Scal$,
which serves as an estimate for $\vali(s)$.
Next, $\vubit(s)$ and $\vlbit(s)$ are upper and lower confidence bounds respectively for $\vali(s)$.
They are computed as shown in~\eqref{eqn:learnterms}.
Below, $D_t$ denotes the round indices belonging to explore phases up to round $t-1$
when $\estmethod=\estee$ and
$D_t = \{1,\dots,t-1\}$ when $\estmethod=\estopt$.
$\Nit(s)$ denotes the number of observations from agent $i$ for allocation $s$ in the first $t-1$ rounds
that are used in the computation for $\vbarit,\vubit,\vlbit$.
$\sigma$ is the sub-Gaussian constant for the reward distributions (see
Section~\ref{sec:vcgreview}),
The $\vbarit,\vubit,\vlbit$ quantities are first computed when $t>K$ and $\Nit(s)\geq 1$ so they
are well defined.
We have:
\begingroup
\allowdisplaybreaks
\begin{align*}
\Nit(s) &= \displaystyle\sum_{\ell\in D_t}
         \indfone\big(\mapfuni(\allocl)=s\big),
\hspace{0.2in}
%
\vbarit(s) = {\rm clip}\left(\frac{1}{\Nit(s)}\sum_{\ell\in D_t} \Xiitt{i}{\ell}\;
                \indfone\big(\mapfuni(\allocl)=s), \;\, 0, \,1\right),
\\[0.02in]
\vubit(s) &=\vbarit(s) +
        \sigma\sqrt{\frac{5\log(t-qK + 1)
                    + 2\log(|\Scal|)}{\Nit(s)}},
\\
\vlbit(s) &= \vbarit(s) -
        \sigma\sqrt{\frac{5\log(t-qK + 1)
                    + 2\log(|\Scal|)}{\Nit(s)}}.
\label{eqn:learnterms}
\numberthis
\end{align*} 
\endgroup
Since $\vali(s)\in[0,1]$, we clip the initial estimate between $0$ and $1$ to obtain $\vbarit$.
We will assume that each agent
experiences each allocation in $\Scal$ \emph{exactly once} during the exploration phase at the
beginning of each bracket.
If an agent was assigned the same allocation multiple times, we will use the reported
value of only one of them, picked arbitrarily.
For an agent $i$ who participates by bidding $\bidi$, we simply set, for all $s\in\Scal$,
\begin{align*}
\label{eqn:bidterms}
\numberthis
\vbarit(s) = \vubit(s) = \vlbit(s) = \bidi(s).
\end{align*}

We now define $\Vubt$, an upper confidence bound on the welfare
at time $t$.
In line~\ref{lin:alloct}, the algorithm chooses the outcome which maximises
$\Vubt$ in round $t$:
\begin{align*}
\label{eqn:Vubt}
\numberthis
\Vubt(\alloc) = \valmech(\alloc) + \sum\nolimits_{i=1}^n\vubit(\alloc).
\end{align*}

Finally, we describe how the prices are computed in line~\ref{lin:pricing}, which depend on the
hyperparameter $\pricemethod\in\{\priceage,\pricesel\}$.
First define the functions $\fit,\git:\Scal\rightarrow[0,1]$ for all $i,t$ as follows:
if $\pricemethod=\priceage$ (agent favourable pricing), set
$\fit=\vlbit$ and $\git=\vubit$;
if $\pricemethod=\pricesel$ (seller favourable pricing), set
$\fit=\vubit$ and $\git=\vlbit$.
Then define $\Fmit, \Gmit$ as follows:
\begingroup
\allowdisplaybreaks
\begin{align*}
\label{eqn:fgterms}
\numberthis
\Fmit(\alloc) &= \valmech(\alloc) + \sum_{j\neq i}\fjt(\mapfunj(\alloc)),
\hspace{0.5in}
\Gmit(\alloc) = \valmech(\alloc) + \sum_{j\neq i}\gjt(\mapfunj(\alloc)).
\end{align*}
As described in line~\ref{lin:pricing}, we charge price
$\priceit=\max_{\alloc\in\Alloc}\Fmit(\alloc) - \Gmit(\alloc)$ from agent $i$ on rounds
$t$ that are not in the exploration phase.

This completes the description of the algorithm.
To warm us up for the theoretical analysis in the next section,
we discuss the implications of the hyperparameter choices $\estmethod,\pricemethod$.
First, when $\estmethod=\estee$, Algorithm~\ref{alg:vcglearn}
behaves similarly to explore-then-commit-style bandit algorithms~\citep{perchet2013multi}.
It first explores all options at the beginning of each bracket.
It then switches to an exploit phase for the remainder of the bracket
during which it commits to the best outcome found during
previous explore phases\footnote{%
As all agents experience all allocations exactly once during each explore phase,
when $\estmethod=\estee$,
the maximiser of the upper confidence bound $\Vubt$ and the mean
$\Vbart = \valmech + \sum_i\vbarit$ coincide.
}.
The main advantage of this two-phase strategy is a clean separation between preference learning and
outcome/pricing selection which gives rise to strong truthfulness guarantees.
When $\estmethod=\estopt$, the procedure is reminiscent of optimistic
strategies~\citep{lai1985asymptotically}
which maximise an upper confidence bound using rewards from all rounds.
Not only is this empirically sample-efficient as it uses rewards from all rounds, but it
also enjoys better welfare regret and individual rationality properties
over $\estmethod=\estee$ as we shall demonstrate shortly.
Unfortunately, this comes at the cost of weaker guarantees on truthfulness.
We will elucidate this in Section~\ref{sec:analysis}.
While optimistic strategies do not usually require an explore phase,
this is necessary in our problem to accurately
estimate the prices and to guarantee asymptotic NIC.
Consequently, our bounds on the welfare, agent, seller regrets are worse than the typical $\sqrt{T}$
rates one comes to expect of optimistic strategies in stochastic bandit problems.

Next, consider $\pricemethod$, which is used in computing the $\Fmit,\Gmit$
quantities~\eqref{eqn:fgterms}, and consequently
determine the pricing calculation in line~\eqref{lin:pricing}
of Algorithm~\ref{alg:vcglearn}.
While $\pricemethod$ does not affect the outcome and the welfare generated on each round,
it determines how this welfare is shared between the agents and the seller;
therefore, it
affects the agent and seller regrets $\RiT,\RmechT$.
For instance, suppose we choose $\pricemethod=\pricesel$.
In line~\eqref{lin:pricing}, this uses the most optimistic estimate 
of the maximum welfare omitting agent $i$ in $\Fmit$, 
and the most pessimistic estimate of the values of the current outcome
for the other agents in $\Gmit$.
This results in large payments and consequently is the most favourable pricing scheme to the seller,
while still ensuring asymptotic truthfulness, individual rationality, and sublinear agent regret.
Similarly, when $\pricemethod=\priceage$, the pricing is favourable to the agents.
We will illustrate these trade-offs, along with their effects on individual rationality and
truthfulness, in the next section.
These options give a practitioner a fair amount of flexibility when applying
Algorithm~\ref{alg:vcglearn} for their specific use case.

\section{Main Results for Algorithm~\ref{alg:vcglearn}}
\label{sec:analysis}

We now present our main theoretical results for \vcglearn, providing rates for
asymptotic truthfulness, individual rationality, and \valueregret{} in Sections
~\ref{sec:theorytruthful},
~\ref{sec:theoryir},
and
~\ref{sec:theoryvalue} respectively.
In Section~\ref{sec:theoryagent} we also provide bounds on the agent, seller, and welfare
regrets defined in~\eqref{eqn:regret}.
We wish to remind the reader that Table~\ref{tb:resultssummary} summarises the main results
(Theorems~\ref{thm:truthfulness},~\ref{thm:ir}, and~\ref{thm:value}) of this section.
The proofs of all results are given in Section~\ref{sec:vcglearnproofs}.


\subsection{Asymptotic truthfulness}
\label{sec:theorytruthful}

We first state the truthfulness/NIC properties of the proposed algorithm.
Theorem~\ref{thm:truthfulness} establishes that Algorithm~\ref{alg:vcglearn} is asymptotically
truthful when $\estmethod=\estee$ and is asymptotically NIC when $\estmethod=\estopt$.
In fact, we will state a slightly stronger result for the $\estmethod=\estopt$ case.
For this, we say that
a strategy $\pi$ by agent $j$ is \emph{stationary} if she either participates by bids,
or if participating by rewards, when assigned an allocation $s\in\Scal$, she reports a sample from some
fixed distribution dependent on $s$.
Any other strategy is \emph{non-stationary}.
Intuitively, when an agent participates by rewards, if we view the rewards reported for
any allocation as a time series, the strategy is stationary if this time
series is stationary.

While truthfulness implies stationarity, a non-truthful player can be either stationary or
non-stationary.
For example, when participating by rewards, an agent may choose to report
$\Yit = \phi_s(\Xit)$ when assigned an allocation $s$, where
the functions $\{\phi_s\}_s\in\Scal$ may be designed to squash or amplify rewards for certain
allocations, say, so as to discourage or encourage the mechanism from assigning said allocation to the
agent in the future.
Such reports, while non-truthful, come from a stationary distribution.
An agent is also stationarily non-truthful if, when participating by bids, she submits false values.
We have the following theorem.

\insertprethmspacing
\begin{theorem}
\label{thm:truthfulness}
Let $\pi$ be any non-truthful strategy for agent $i$.
Fix the strategies adopted by the other agents.
Let
$\UpiiT,\UiT$ be the sum of agent $i$'s utilities when she follows $\pi$ and when being truthful
respectively.
The following statements hold for any \emph{$\pricemethod\in (\priceage,\pricesel)$} for
all $T>2K$.

\textbf{1.}
First let \emph{$\estmethod=\estee$}.
If an agent participates by bids, then regardless of the behaviour of others,
then
$\UpiiT - \UiT \leq 0$ a.s;
i.e. Algorithm~\ref{alg:vcglearn} is truthful.
If the agent participates by rewards,
then, regardless of the behaviour of the others, we have,
\[
\EE[\UpiiT - \UiT] \leq 10\sigma\sqrtlogST\, K^{\nicefrac{1}{3}} T^{\nicefrac{2}{3}} + 4
\; \in \; \bigOtilde \left( K^{\nicefrac{1}{3}} T^{\nicefrac{2}{3}}\right).
\]
\textbf{2.}
Next, let \emph{$\estmethod=\estopt$} and
assume that all agents other than $i$ adopt stationary policies.
Then, for any (stationary or non-stationary) strategy $\pi$ for agent $i$,
\[
\EE[\UpiiT - \UiT] \leq 10\sigma(6n + 2) \sqrtlogST\, K^{1/3}T^{2/3} + 12n
\; \in \; \bigOtilde \left(n\KotTtt\right).
\]
The above imply that Algorithm~\ref{alg:vcglearn} is asymptotically truthful when
\emph{$\estmethod=\estee$} and asymptotically Nash incentive-compatible when
\emph{$\estmethod=\estopt$}
for an agent participating by rewards.
\end{theorem}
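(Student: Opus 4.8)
The plan is to exploit the Groves structure of the price rule $\priceit = \max_\alloc\Fmit(\alloc) - \Gmit(\alloct)$, in which $\Fmit$ and $\Gmit$ depend only on the reports of the agents $j\neq i$. On an exploit round agent $i$'s utility is $\utilit = \vali(\alloct)+\Gmit(\alloct) - \max_\alloc\Fmit(\alloc)$, where $\max_\alloc\Fmit(\alloc)$ is a quantity she cannot influence, so her only lever is the chosen outcome $\alloct = \argmax_\alloc\Vubt(\alloc)$. Since each explore phase assigns every allocation in $\Scal$ to every agent exactly once irrespective of any strategy, the explore-round contributions to $\UpiiT$ and $\UiT$ are identical and cancel, so I would work only with exploit rounds. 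Throughout I would condition on the good event that every true value lies inside its confidence interval on every round; securing this by a union bound over $\Scal$ and over rounds (with the log terms in~\eqref{eqn:learnterms} tuned for this) makes the complement contribute only the additive $O(1)$ constants ($4$ and $12n$).

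For \emph{$\estmethod=\estee$} the estimates of the other agents are built solely from explore-phase data, which agent $i$ cannot perturb; hence $\Fmit$, $\Gmit$, and the bracket schedule are identical functions in the truthful run and in the $\pi$ run. Writing $h_t(\alloc) := \vali(\alloc)+\Gmit(\alloc)$, this gives $\UpiiT-\UiT = \sum_t[h_t(\alloct^\pi)-h_t(\alloct^v)] \le \sum_t[\max_\alloc h_t(\alloc) - h_t(\alloct^v)]$, so it suffices to lower bound $h_t$ at the truthful outcome $\alloct^v$. On the good event an optimism sandwich, $\max_\alloc h_t(\alloc)\le\Vubt(\alloct^v)\le h_t(\alloct^v) + (\text{confidence widths at }\alloct^v)$, controls this gap by a sum of confidence widths. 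For $\pricemethod=\priceage$ the objective $h_t$ equals $\Vubt$ up to agent $i$'s own term, so only her own width enters; the by-bids claim is exactly this argument with zero width, i.e.\ the classical Groves alignment that makes truthful bidding dominant almost surely. Under $\estee$ each $\Nit(s)$ equals the number of completed brackets, and with $\lfloor\frac56 Kq^{1/2}\rfloor$ exploit rounds in bracket $q$ the width-sum over the $Q\asymp(T/K)^{2/3}$ brackets telescopes to $\bigOtilde(\KotTtt)$, matching the stated rate; the $\pricemethod=\pricesel$ case is handled by the dual per-round bound $\utilit\le\utiliopt$ (valid on the good event since then $\Gmit\le\Valmi$ and $\max_\alloc\Fmit\ge\max_\alloc\Valmi$), paired with a width-controlled lower bound on $\UiT$.

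For \emph{$\estmethod=\estopt$} the clean separation is lost, and this is the crux of the argument: agent $i$ now also reports on exploit rounds, and by steering $\alloct$ she changes which allocations the other agents are sampled on, so $\Gmit$ and $\Fmit$ themselves become functions of her strategy. I would neutralise this with the stationarity hypothesis on the other agents: regardless of which allocations $\pi$ induces, each other agent's reports for a fixed allocation are drawn from a fixed distribution, so their sample-mean estimates still concentrate about fixed means, and the explore phase guarantees $\Nit(s)$ grows at least like the bracket count for every agent and allocation, giving width decay that is uniform in $\pi$. With these two facts the optimism sandwich can be re-run, but now the widths of all $n$ agents enter both the outcome-selection step and the price — this is the source of the extra factor $n$ and of the NIC-only (rather than dominant-strategy) guarantee, since the decoupling only holds when the others are behaving according to fixed policies. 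The dominant obstacle is precisely this contamination argument: showing that agent $i$'s perturbation of the other agents' estimates, and of her own future estimates through her exploit-round reports, can be absorbed into the same (now $n$-inflated) $\bigOtilde(\KotTtt)$ width budget rather than compounding, while verifying that the high-probability confidence guarantees remain valid under the adaptively and adversarially chosen report sequence of agent $i$.
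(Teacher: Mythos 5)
Your explore-round cancellation, your conditioning scheme, and your handling of $\estmethod=\estee$ with $\pricemethod=\priceage$ all track the paper's proof (the paper routes the same computation through a one-step-deviation decomposition, but for $\estee$ its second sum vanishes, so the two arguments coincide there). However, there are two genuine gaps.

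First, $\estmethod=\estee$ with $\pricemethod=\pricesel$, which Part 1 also claims. Your optimism sandwich bounds $\max_\alloc h_t - h_t(\alloct^v)$ by the confidence widths at $\alloct^v$, but when $\git=\vlbit$ that gap equals $(\vubit-\vali)(\alloct^v)+\sum_{j\neq i}(\vubjt-\vlbjt)(\alloct^v)$, i.e.\ it carries all $n$ agents' widths, so it cannot yield the claimed $n$-free rate. Your fallback, the per-round bound $\utilit\le\utiliopt$ via $\Gmit\le\Valmi$ and $\max_\alloc\Fmit\ge\max_\alloc\Valmi$, silently assumes that the other agents' confidence intervals trap their true values, which holds only if those agents report truthfully; Part 1 is claimed \emph{regardless of the behaviour of the others}, and the paper is explicit that its proof never uses $\vlbjt\le\valj\le\vubjt$ for $j\ne i$ (these are treated as arbitrary functions of the reports of agent $j$). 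Even granting truthful others, pairing with a lower bound on $\UiT$ relative to $T\utiliopt$ is an agent-regret bound, which under $\pricesel$ itself carries a factor of $n$. The step you are missing is to add and subtract agent $i$'s own $\git$ so that the comparison runs through the full $\Gt=\git+\Gmit$, and then to invoke the $\estee$-specific fact that $\Nit(s)=\qt$ for every $s$: all widths are constant across allocations, so $\Gt$, $\Vubt$, $\Vlbt$, $\Vbart$ are vertical shifts of one another and share the same maximiser. Hence $\Gt(\alloct^\pi)-\Gt(\alloct^v)\le 0$ for \emph{both} pricing schemes, and only agent $i$'s own width survives (zero for a bidder --- this same observation is what your by-bids claim needs under $\pricesel$, where the ``classical Groves alignment'' alone does not apply).

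Second, $\estmethod=\estopt$. You correctly name the contamination problem and the role of stationarity, but you leave its resolution as ``the dominant obstacle,'' and that resolution is the actual content of the proof. The paper's device is the hybrid decomposition of Lemma~\ref{lem:decomposition}: with $\pi^{t-1}$ denoting the strategy that follows $\pi$ through round $t-2$ and then retroactively replaces agent $i$'s reports by her true rewards, one writes $\UpiiT-\UiT=\sum_t(\utilipit-\utilitmot)+\sum_t(\utilitmot-\utilit)$. The first sum is a within-run VCG-style comparison in which the price terms $\max_\alloc\Fmit$ cancel exactly; the second sum compares two nearly-truthful runs, and stationarity guarantees that both runs' estimates for every agent $j\ne i$ concentrate around the \emph{same} means, so that $\Vpubt-\Vubt$, $\Fpmit-\Fmit$, and $\Gpt-\Gt$ are each uniformly $O(n\sigma\betat K^{\nicefrac{1}{3}}t^{\nicefrac{-1}{3}})$ on the good events, after which Lemma~\ref{lem:boundmax} converts these uniform bounds into bounds on differences of maxima, with no compounding across rounds. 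Your direct two-run comparison could in principle be closed by anchoring both runs to the common stationary means, but that anchoring lemma --- concentration of the others' estimates under sample counts adaptively steered by agent $i$'s adversarial reports, together with the explore-phase floor $\Njt(s)\ge\qt$ --- is precisely what you have not supplied.
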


The guarantees when $\estmethod=\estopt$
is weak when compared to $\estmethod=\estee$ in two regards.
Not only does the asymptotic bound scale with $n$,
but it also holds only when the other agents are adopting stationary policies.
However, since truthfulness implies stationarity,
it does imply an asymptotic Nash equilibrium;
that is, when $\estmethod=\estopt$,
if all other agents are truthful, then the amount by which an agent stands to gain
by misreporting her rewards vanishes over multiple rounds.

However, as we will see shortly, when $\estmethod=\estopt$, we have better empirical results and
theoretical bounds on the welfare regret, individual rationality and the agent and seller regrets
since we use data from all rounds.
Using only a small fraction of the data
can be wasteful if we do not expect agents to be very strategic.
The $\estmethod=\estopt$ option is primarily motivated by this practical consideration.
It allows us to efficiently learn in such
environments, while  providing some weak protection against agents who
might try to manipulate the mechanism with
``simple'' methods, such as squashing/amplifying their rewards for certain allocations.

\paragraph{Proof sketch:}
We write the instantaneous difference in the utilities as
$\utilpiit - \utilit = (\utilpiit - \utiltildeit) + (\utiltildeit - \utilit)$.
Here, $\utilpiit$ is the utility on round $t$ when the agent reports according to
strategy $\pi$ up to round $t-1$,
$\utiltildeit$ is the utility on round $t$ when she follows $\pi$ up to round $t-2$ and
switches to truth-telling on round $t-1$,
and $\utilit$ is the utility on round $t$ when she is truthful on all rounds.
The first term captures the benefit of misreporting in the current round;
this can be bound using proof techniques for truthfulness of the VCG mechanism.
The latter term captures the benefit of misreporting in previous rounds;
this can be large, since, an agent's false reports will have affected the
outcomes and prices chosen by the
mechanism not just in the current round
but in previous rounds as well.
To control this term, we use properties of
our algorithm to show that the agent's past actions cannot have changed the outcomes by too much.

\subsection{Asymptotic individual rationality}
\label{sec:theoryir}

Our next theorem establishes the asymptotic individual rationality properties of
Algorithm~\ref{alg:vcglearn}.

\insertprethmspacing
\begin{theorem}
\label{thm:ir}
Consider any agent $i$. 
Let $\UiT$ be the sum of her utilities after $T$ rounds when she participates
truthfully (while others may not).
The following statements are true for all $T>2K$.

\textbf{1.}
First let \emph{$\estmethod=\estee$}.
When \emph{$\pricemethod=\priceage$},
$\UiT \geq 0$ a.s for all $T$ for an agent participating by bids;
i.e, Algorithm~\ref{alg:vcglearn} is (almost surely) individually rational.
For an agent participating by rewards,
\[
\EE[\UiT] \geq -10 \sigma \sqrtlogST\, K^{\nicefrac{1}{3}} T^{\nicefrac{2}{3}} - 4,
\hspace{0.3in} \text{i.e. } \;\EE[-\UiT]
\; \in \; \bigOtilde \left( K^{\nicefrac{1}{3}} T^{\nicefrac{2}{3}}\right)
\]
That is, Algorithm~\ref{alg:vcglearn} is asymptotically individually rational.
Similarly, when \emph{$\pricemethod=\pricesel$}, 
\[
\EE[\UiT] \geq  -  10 \sigma n \sqrtlogST\, K^{\nicefrac{1}{3}} T^{\nicefrac{2}{3}} - 4,
\hspace{0.3in} \text{i.e. } \;\EE[-\UiT]
\in \bigOtilde \left( n K^{\nicefrac{1}{3}} T^{\nicefrac{2}{3}}\right)
\]

\textbf{2.}
Next, let \emph{$\estmethod=\estopt$}.
When \emph{$\pricemethod=\priceage$}, for all agents,
$\UiT \geq 0$ a.s for all $T$ for an agent participating by bids;
i.e, Algorithm~\ref{alg:vcglearn} is individually rational.
Moreover, for an agent participating by rewards,
\[
\EE[\UiT] \geq -  9\sigma \sqrt{|\Scal|T \logST} - 6,
\hspace{0.3in} \text{i.e. } \;\EE[-\UiT]
\; \in \; \bigOtilde \left( \sqrtST\right)
\]
That is, Algorithm~\ref{alg:vcglearn} is asymptotically individually rational.
Similarly, when \emph{$\pricemethod=\pricesel$}, 
\[
\EE[\UiT] \geq -  9\sigma n \sqrt{|\Scal|T \logST}  - 6,
\hspace{0.3in} \text{i.e. } \;\EE[-\UiT]
\; \in \; \bigOtilde \left( n\sqrtST\right).
\]
\end{theorem}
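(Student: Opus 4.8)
The plan is to lower-bound $\EE[\UiT]$ by isolating the explore rounds (which are free) and showing that, even under the seller-favourable rule $\pricesel$, the price charged to a truthful agent exceeds her VCG-style entitlement by at most the total confidence width of the chosen outcome. Since the mechanism charges $0$ during every explore phase, each explore round contributes $\utilit=\vali(\alloct)\ge0$, so $\EE[\UiT]\ge\EE\big[\sum_{t\in\mathrm{exploit}}\utilit\big]$ and only the exploit rounds need attention. Write $u_{j,t},\ell_{j,t}\colon\Scal\to[0,1]$ for agent $j$'s upper and lower confidence bounds and $w_{j,t}(s)=\sigma\sqrt{(5\log(t-qK+1)+2\log|\Scal|)/N_{j,t}(s)}$ for the half-width, so that $u_{j,t}-\ell_{j,t}\le2w_{j,t}$ and $\Vubt(\alloc)=\valmech(\alloc)+\sum_j u_{j,t}(\alloc)$ (for an agent participating by bids, $u_{j,t}=\ell_{j,t}$ and $w_{j,t}=0$).

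The heart of the argument is a per-round inequality. Under $\pricesel$ we have $\Fmit(\alloc)=\Vubt(\alloc)-u_{i,t}(\alloc)$ and, since $\ell_{j,t}\ge u_{j,t}-2w_{j,t}$, $\Gmit(\alloct)\ge\Vubt(\alloct)-u_{i,t}(\alloct)-2\sum_{j\ne i}w_{j,t}(\alloct)$. Because each $u_{j,t}\ge0$ and $\alloct$ maximises $\Vubt$, we also have $\max_\alloc\Fmit(\alloc)\le\max_\alloc\Vubt(\alloc)=\Vubt(\alloct)$. Substituting both facts into $\utilit=\vali(\alloct)+\Gmit(\alloct)-\max_\alloc\Fmit(\alloc)$ yields the deterministic bound $\utilit\ge\vali(\alloct)-u_{i,t}(\alloct)-2\sum_{j\ne i}w_{j,t}(\alloct)$. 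Let $E_t$ be the event that agent $i$'s interval brackets her value at every allocation, $\ell_{i,t}(s)\le\vali(s)\le u_{i,t}(s)$ for all $s$. On $E_t$, $\vali(\alloct)-u_{i,t}(\alloct)\ge-2w_{i,t}(\alloct)$; off $E_t$, $\vali(\alloct)-u_{i,t}(\alloct)\ge-u_{i,t}(\alloct)\ge-1$. Hence on every exploit round
\[
\utilit\;\ge\;-\indfone(E_t^c)-2\sum_{j=1}^n w_{j,t}(\alloct),
\]
and after taking expectations it remains to bound (a) $\sum_t\Pr(E_t^c)$ and (b) the deterministic sum $\sum_{t\in\mathrm{exploit}}\sum_j w_{j,t}(\alloct)$.

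Term (b) is exactly where $\estmethod=\estopt$ buys the $\sqrt{|\Scal|T}$ rate: because $D_t=\{1,\dots,t-1\}$, the count $N_{j,t}(s)$ increases over all rounds (explore and exploit) in which $j$ receives $s$, so along the exploit rounds with $\mapfunj(\alloct)=s$ the values $N_{j,t}(s)$ are distinct positive integers and $\sum_{t:\mapfunj(\alloct)=s}N_{j,t}(s)^{-1/2}\le2\sqrt{n_{j,s}}$, where $n_{j,s}$ is the total count. Bounding the log term by $c_T:=5\log T+2\log|\Scal|\le5\logST$, applying Cauchy--Schwarz ($\sum_s\sqrt{n_{j,s}}\le\sqrt{|\Scal|\sum_s n_{j,s}}\le\sqrt{|\Scal|T}$), and summing over the $n$ agents gives $\sum_{t\in\mathrm{exploit}}\sum_j w_{j,t}(\alloct)\le2\sigma n\sqrt{c_T|\Scal|T}\le2\sqrt5\,\sigma n\sqrt{|\Scal|T\logST}$. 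Multiplying by the factor $2$ from the per-round bound produces the advertised leading term $9\sigma n\sqrt{|\Scal|T\logST}$. (Under $\estmethod=\estee$ the same sum would count only explore observations, for which $N_{j,t}(s)\sim q\sim(T/K)^{2/3}$, giving instead the weaker $K^{1/3}T^{2/3}$ rate of the first part of the theorem.)

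Term (a) is routine sub-Gaussian concentration: clipping the empirical mean to $[0,1]$ only reduces its error, and a union bound over the $|\Scal|$ allocations and the at most $t$ possible sample counts against the radius $w_{i,t}$ yields $\Pr(E_t^c)=O(\tau^{-5/2})$ with $\tau=t-qK+1$---the coefficient $5$ in the confidence radius is chosen precisely so this is summable---so $\sum_t\Pr(E_t^c)=O(1)$, which supplies the additive constant $-6$. The main obstacle is the per-round decomposition of the second paragraph: one must check that $\pricesel$, the most aggressive price still compatible with asymptotic truthfulness, overcharges a truthful agent by no more than $2\sum_j w_{j,t}(\alloct)$. The facts $u_{j,t}\ge0$ and $\alloct=\argmax\Vubt$ are what turn the comparison $\max_\alloc\Fmit\le\Vubt(\alloct)$ into the classical VCG individual-rationality inequality with estimated values substituted for exact ones; once this is in place, (a) and (b) are standard bandit bookkeeping.
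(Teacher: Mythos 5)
Your proposal is correct and takes essentially the same route as the paper's proof: non-negative utility during explore rounds, a per-round lower bound on $\utilit$ derived from the price formula using $\vubit\geq 0$ and $\alloct=\argmax_\alloc\Vubt(\alloc)$ (the paper's split $\utilit=\ct+\dt$ is your inequality regrouped, since $\Gt=\Gmit+\git$), the count-based summation $\sum_{t\notin\Expl}\Nit(\sit)^{-\nicefrac{1}{2}}\leq 2\sqrt{|\Scal|T}$ to get the $\sqrtST$ rate under $\estopt$ versus the $\Kottmot$-type width bound under $\estee$, and summable confidence-failure probabilities. Two harmless slips, neither affecting the result: off $\Eit$ the bound is $\vali(\sit)-\vubit(\sit)\geq -1-\betat\sigmait(\sit)$ rather than $-1$ (since $\vubit$ may exceed one), and in the $\estopt$ case the union bound over the possible sample counts degrades the tail to $\PPt(\Eitc)=O\big((t-qK)^{-\nicefrac{3}{2}}\big)$ rather than $O\big((t-qK)^{-\nicefrac{5}{2}}\big)$, which is still summable, exactly as in the paper's Lemma~\ref{lem:Eitcbound}.
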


While the above theorem implies asymptotic individual rationality for all $\estmethod,\pricemethod$
values,
let us consider how the different hyperparameter choices affect the rates in the theorem.
We see that when $\estmethod=\estopt$, the $\bigO(\sqrtT)$ rates are better
than when $\estmethod=\estee$, where the rate is $\bigO(\Ttwth)$.
This demonstrates the first trade-off determined by the $\estmethod$ hyperparameter:
when $\estmethod=\estee$, we have stronger truthfulness guarantees but weaker individual rationality
guarantees than when $\estmethod=\estopt$.
The stronger rates are possible in the latter case 
because we use all data to learn an agent's preferences.
Next, when $\pricemethod=\pricesel$, the asymptotic rates for individual rationality have an
additional $n$ dependence than when $\pricemethod=\priceage$.
In the former case, the agents bear the brunt of the uncertainty in the price
estimation leading to worse rates;
we will see this manifest in the agent and seller regret bounds as well in
Section~\ref{sec:theoryagent}.
Finally, we also see that when $\pricemethod=\priceage$,
the individual rationality holds exactly and almost surely for agents
participating by bids while it only does so asymptotically for agents participating by rewards.
Hence, if an agent knows her values, she is better off submitting them as bids up front.

It is also worth highlighting that the above bounds above have dependence on the size of
the allocation set $|\Scal|$ and not the size of the outcomes $|\Alloc|$
(recall from Section~\ref{sec:setup} that $K$ also may depend on $|\Scal|$, but not $|\Alloc|$).
While $|\Alloc|$ can be quite large, possibly as large as $|\Scal|^n$,
the rates scale with $|\Scal|$ since the updates to the means and confidence intervals for
one agent occur independent of the rewards observed by the others~\eqref{eqn:learnterms}.

\paragraph{Proof sketch:}
All agents have non-negative utility in the exploration phase so we can restrict our attention
to rounds not in the exploration phase.
We first show that we can decompose the utility of agent $i$ on round $t$ as
$\utilit = c_t + d_t$ where
$c_t = \vali(\mapfuni(\alloct)) - \git(\mapfuni(\alloct))$
and
$d_t = \Gt(\alloct) - \max_{\alloc}\Fit(\alloc)$.
Intuitively, $c_t$, if negative, can be viewed as negative utility that an agent may accrue due to
the mechanism mis-estimating her values, and
$d_t$, if negative, can be viewed as negative utility that an agent may accrue due to the mechanism
mis-estimating the values of the other agents, and consequently the prices.
When $\pricemethod=\pricesel$, we show that $c_t$ is small (its sum can be bound by a constant)
but $d_t$ is large; in this case, the
agents bear most of the effects of uncertainty in values leading to large asymptotic rates which
scale with $n$.
When $\pricemethod=\priceage$, $d_t$ is small and $c_t$ is large;
however, as the seller bears the effects of the uncertainty, it does not scale with $n$.
In the remainder of the analysis we show that when $\estmethod=\estee$, the information obtained
in the explore phase rounds lead to a $\Ttwth$ rate, whereas when $\estmethod=\estopt$,
the information obtained in all rounds lead to a $\sqrtT$ rate.


\subsection{Bounding the \valueregret{}}
\label{sec:theoryvalue}

Finally, we will upper bound the \valueregret{} $\RmaxT$
for Algorithm~\ref{alg:vcglearn}.
Recall that $\RmaxT$ captures how well the welfare, all agent utilities and the seller
utility converge uniformly to the VCG values.

\insertprethmspacing
\begin{theorem}
\label{thm:value}
Assume all agents are truthful.
Let $\RmaxT$ be as defined in~\eqref{eqn:regret}.
The following statements hold for any \emph{$\pricemethod\in (\priceage,\pricesel)$}.
Whe \emph{$\estmethod=\estee$},
for all $T>2K$,
\begin{align*}
\EE[\RmaxT] &\leq 
        \Big(3\Vmaxit (n+3)  + 10(5n^2 + n)  \sqrtlogST \Big) K^{\nicefrac{1}{3}} T^{\nicefrac{2}{3}}
   + 4\Vmaxit (n^2 + 3n)
\\
&\hspace{0.1in}\in \; \bigOtilde \left( n^2 K^{\nicefrac{1}{3}} T^{\nicefrac{2}{3}}\right).
\end{align*}
Next, when \emph{$\estmethod=\estopt$}, we have
that for all $T>2K$,
\begin{align*}
\EE[\RmaxT] \leq\; &
    9\sigma(3n^2 + n)\sqrt{|\Scal|T\logST}
        +\, \Big(3\Vmaxit (n+3)  + 20\sigma n^2  \sqrtlogST \Big) K^{\nicefrac{1}{3}} T^{\nicefrac{2}{3}}
\\
    & +\, 6\Vmaxit(n^2 + 3n)
\hspace{0.5in} \in \; \bigOtilde \left(n^2\KotTtt\right).
\end{align*}
\end{theorem}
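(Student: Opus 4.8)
The plan is to control each of the three terms inside $\RmaxT = \max(n\RT, \RaT, \RmechT)$ separately and take the maximum at the end. The engine throughout is a single high-probability ``good event'' $\mathcal{E}$ on which every confidence interval from~\eqref{eqn:learnterms} is valid, i.e. $\vlbit(s)\le \vali(s)\le \vubit(s)$ for every agent $i$ participating by rewards, every $s\in\Scal$, and every round $t$ (agents participating by bids are exact). The quantity $5\log(t-qK+1)+2\log|\Scal|$ in~\eqref{eqn:learnterms} is calibrated precisely so that a union bound over allocations and over the rounds within a bracket makes $\mathbb{P}(\mathcal{E}^c)$ summably small; the contribution of $\mathcal{E}^c$ to each expected regret is then absorbed into the additive constants (the $4\Vmax(n^2+3n)$, $6\Vmax(n^2+3n)$ terms). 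All subsequent estimates are carried out on $\mathcal{E}$. Since the theorem assumes all agents are truthful, the estimators $\vbarit$ are built from honest samples, so no strategic-reporting considerations enter.

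\textbf{Step 1 (welfare regret, hence $n\RT$).} This is a standard optimism argument. On $\mathcal{E}$, because $\alloct$ maximises $\Vubt$ and $\Vubt$ over-estimates the welfare, $\Val(\allocopt)\le\Vubt(\allocopt)\le\Vubt(\alloct)$, so the per-round welfare loss is at most $\Vubt(\alloct)-\Val(\alloct)=\sum_i(\vubit(\alloct)-\vali(\alloct))\le 2\sum_i(\vubit(\alloct)-\vbarit(\alloct))$, a sum of $n$ confidence half-widths. Summing over rounds I separate the $K$ explore rounds per bracket (each costing at most $\Vmax$, but priced at $0$) from the exploit rounds. Using the bracket length $\lfloor\frac56 Kq^{1/2}\rfloor$: after $q$ brackets a fixed $s$ has been sampled $q$ times, so for $\estmethod=\estee$ the half-widths behave like $\sqrt{\log/q}$ and the exploit cost in bracket $q$ is $\bigO(Kn\sqrt{\log})$; balancing the number of brackets $Q$ against $T\asymp KQ^{3/2}$ gives $Q\asymp(T/K)^{2/3}$ and the $K^{1/3}T^{2/3}$ rate. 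For $\estmethod=\estopt$ the exploit phase additionally reuses all past data, so the per-agent half-width sum telescopes to the familiar $\sqrt{|\Scal|T}$, while the mandatory explore phases still cost $\Theta(\Vmax K^{1/3}T^{2/3})$, which is why $T^{2/3}$ persists even in this case. Multiplying by $n$ (and using $\Vmax\le n$ where convenient) yields the $n\RT$ contribution.

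\textbf{Step 2 (price deviation, the crux).} Here I compare $\priceit=\max_\alloc\Fmit(\alloc)-\Gmit(\alloct)$ with the VCG price $\priceiopt=\max_\alloc\Valmi(\alloc)-\Valmi(\allocopt)$. The sandwich is driven by $\pricemethod$: with $\pricesel$, $\fjt$ is the upper and $\gjt$ the lower confidence bound, so on $\mathcal{E}$ we have $\Gmit\le\Valmi\le\Fmit$ pointwise; with $\priceage$ the inequalities reverse. Writing $\priceit-\priceiopt = [\max_\alloc\Fmit(\alloc)-\max_\alloc\Valmi(\alloc)] + [\Valmi(\allocopt)-\Gmit(\alloct)]$, the first bracket is bounded by $\max_\alloc\sum_{j\ne i}|\fjt-\gjt|(\alloc)$, a sum of $n-1$ widths, and the second splits as $[\Valmi(\allocopt)-\Valmi(\alloct)]+[\Valmi(\alloct)-\Gmit(\alloct)]$, i.e. an ``omit-$i$'' welfare gap plus another $n-1$ widths. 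Accumulating over agents $i$ and rounds $t$ produces the $n^2$ factor (each of the $n$ prices depends on the other $n-1$ agents' widths), and the welfare gaps collect as $\sum_i\sum_t[\Valmi(\allocopt)-\Valmi(\alloct)]=n\sum_t[\Val(\allocopt)-\Val(\alloct)]-\sum_t\sum_i[\vali(\allocopt)-\vali(\alloct)]$, controlled by Step~1. The width sums again inherit the $K^{1/3}T^{2/3}$ ($\estee$) or $\sqrt{|\Scal|T}$ ($\estopt$) behaviour; the per-round decomposition $\utilit=c_t+d_t$ used in the analysis of Theorem~\ref{thm:ir} feeds directly into these estimates.

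\textbf{Step 3 (assembling, and the obstacle).} I use the exact identity (prices cancel in the welfare) $\RaT+\RmechT=\RT$ together with $\RaT=W^a+\sum_iP_i$ and $\RmechT=W^m-\sum_iP_i$, where $P_i=\sum_t(\priceit-\priceiopt)$, $W^a=\sum_t\sum_i(\vali(\allocopt)-\vali(\alloct))$, $W^m=\sum_t(\valmech(\allocopt)-\valmech(\alloct))$, and $W^a+W^m=\RT$ (each of $W^a,W^m$ being controlled by the count of rounds with $\alloct\ne\allocopt$, weighted by $\Vmax$, from Step~1). The sign of $\sum_iP_i$ is fixed by $\pricemethod$: under $\pricesel$ we have $P_i\ge0$, so $\RmechT\le W^m$ is automatically small while $\RaT$ absorbs the full $\sum_iP_i$; under $\priceage$ the roles swap. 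Either way both $\RaT$ and $\RmechT$ are upper bounded by the Step-1 welfare bound plus the Step-2 price-deviation bound, giving the $n^2$-scaled $K^{1/3}T^{2/3}$ for $\estee$ and the combination of $n^2\sqrt{|\Scal|T}$ width terms with the $T^{2/3}$ terms from the mandatory exploration and the $n$-fold omit-$i$ welfare gaps for $\estopt$. Taking $\max(n\RT,\RaT,\RmechT)$ and folding in the $\mathcal{E}^c$ constants yields both stated bounds. The main obstacle is exactly this coupling: one must bound the accumulated deviation $\sum_iP_i$ with the correct $n^2$ dependence while keeping \emph{both} $\RaT$ and $\RmechT$ (not merely their sum $\RT$) under control. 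Two features make this delicate: the maximiser $\argmax_\alloc\Fmit$ used in the price is generally not $\alloct$, so optimism for the welfare does not transfer verbatim to the omit-$i$ welfare gap and must be re-derived; and the $\pricemethod$ trade-off forces one regret to be small ``for free'' while the other absorbs the entire price uncertainty, so the argument cannot be symmetrised. Reconciling the sample-count bookkeeping between $\estee$ (one sample per bracket) and $\estopt$ (all rounds) inside these coupled sums is where most of the care is required.
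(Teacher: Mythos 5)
Your scaffolding (good event, confidence-width bookkeeping, the two regimes $K^{\nicefrac{1}{3}}T^{\nicefrac{2}{3}}$ vs.\ $\sqrt{|\Scal|T}$, and the cancellation $\RaT+\RmechT=\RT$) is in the right spirit, but Steps 2--3 contain a genuine gap. You bound $\RaT=W^a+\sum_iP_i$ and $\RmechT=W^m-\sum_iP_i$ by bounding each piece separately, asserting that $W^a$, $W^m$, and the accumulated omit-$i$ gaps $\sum_{i,t}\big(\Valmi(\allocopt)-\Valmi(\alloct)\big)=n\RT-W^a$ are ``controlled by the count of rounds with $\alloct\ne\allocopt$, weighted by $\Vmax$, from Step~1.'' Step~1 controls no such count: a gap-independent welfare-regret bound only controls $\sum_t\big(\Val(\allocopt)-\Val(\alloct)\big)$, and when welfare gaps are tiny the number of suboptimal rounds can be $\Theta(T)$. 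Concretely, take two outcomes with $\Val(\allocopt)-\Val(\alloc')=\epsilon\asymp T^{-\nicefrac{1}{2}}$ but $\valmech(\allocopt)-\valmech(\alloc')=1$, the agents' values compensating (all values in $[0,1]$). Nothing in Step~1 prevents $\alloc'$ from being chosen on $\Theta(T)$ rounds (the noise makes the two outcomes statistically indistinguishable within $T$ rounds), in which case $\RT\in\bigO(\sqrt{T})$ while $W^m=\Theta(T)$ and $W^a=-\Theta(T)$. The theorem is still true in this instance precisely because $\sum_iP_i$ cancels $W^m$ inside $\RmechT$; any proof that estimates $W^m$ and $\sum_iP_i$ \emph{separately} must fail, since each piece is linear in $T$. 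A secondary error: under $\pricemethod=\pricesel$ the claim $P_i\geq 0$ is unjustified --- on the good event one only gets $\priceit-\priceiopt\geq \Valmi(\allocopt)-\Valmi(\alloct)$, whose right-hand side can be negative because $\allocopt$ maximises $\Val$, not $\Valmi$.

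The paper never introduces $W^a,W^m$. Using Fact~\ref{fac:vcgutils} and Fact~\ref{fac:algoutils}, both the VCG utilities and the algorithm's utilities are written with the prices already absorbed into welfare-type quantities, so that per round $\utilmechopt-\utilmecht=\sum_i\big(\Valmi(\allocmiopt)-\Fmit(\allocmit)\big)+(n-1)\big(\Gt(\alloct)-\Val(\allocopt)\big)$; each summand (the $\Amit,\Bt$ of~\eqref{eqn:AtBtdefn}) is a sum of confidence widths under $\Et$ by optimism --- e.g.\ $\Vlbt(\alloct)\le\Val(\alloct)\le\Val(\allocopt)\le\Vubt(\allocopt)\le\Vubt(\alloct)$ traps $\Gt(\alloct)$ within a width of $\Val(\allocopt)$ --- so no count of suboptimal rounds ever appears. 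The agent side then needs no separate analysis: since $\RaT=\RT-\RmechT$, one has $\RmaxT\le(n+1)\RT+2|\RmechT|$ (Lemma~\ref{lem:MaxRegretgenbound}), and the theorem follows from Proposition~\ref{prop:welfare}, the width bounds, and Lemma~\ref{lem:Etcbound}. Your route is repairable in the same spirit: substitute your Step-2 identity into $\RaT=W^a+\sum_iP_i$ \emph{before} estimating anything, and the $W^a$ terms cancel algebraically, leaving $\RaT=n\RT+\sum_{i,t}\big(\max_\alloc\Fmit(\alloc)-\max_\alloc\Valmi(\alloc)\big)+\sum_{i,t}\big(\Valmi(\alloct)-\Gmit(\alloct)\big)$, where every remaining term is width-bounded (the max-difference via Lemma~\ref{lem:boundmax}); at that point your argument coincides with the paper's.
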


We find that for both choices of $\estmethod$, we have an $\bigOtilde(n^2\KotTtt)$  upper bound
on the \valueregret{} $\RmaxT$.
It is worth noting that since we use data from all rounds,
the constants in the higher order $n^2\KotTtt$ terms are smaller when
$\estmethod=\estopt$ than when $\estmethod=\estee$.
While both upper bounds differ by a $\poly(n)$ factor
from the lower bound in
Theorem~\ref{thm:lowerbound}, it achieves the $\Ttwth$ rate.
This establishes minimax optimality for \vcglearn.

\paragraph{Proof sketch:}
We first decompose the \valueregret{} as follows:
\[
\RmaxT
\,\lesssim\,
    n\RT +
    \sum_{i=1}^n \sum_{t} \EE[|\Amit|\,|\Et] +
    n\sum_{t} \EE[|\Bt|\,|\Et],
\]
where,
$\Amit = \Valmi(\allocmiopt) - \Fmit(\allocmiopt)$,
$\Bt = \Gt(\alloct) - \Val(\allocopt)$, and
 the $\lesssim$ notation ignores lower order terms.
Here, $\Bt$ is due to the error in estimating the optimum outcome and
$\Amit$  is due to the error in estimating the optimum without agent $i$.
This decomposition bounds the \valueregret{} in terms of the difference between the true values of
the agents and their upper or lower confidence bounds.
To bound the $\sum_t\Amit$ terms, we use the fact that the information obtained
in the explore phase rounds lead to a $n^2\Ttwth$ rate for both $\estmethod$ choices.
For the $\RwT$ and $\sum_t \Bt$ terms, we similarly obtain a $n\Ttwth$ rate when
$\estmethod=\estee$ and a $n\sqrtT$ rate when $\estmethod=\estopt$.

\subsection{Bounding the welfare, agent, and seller regrets}
\label{sec:theoryagent}

Finally, in this section, to better understand the behaviour of the algorithm under various
hyperparameter choices, we individually bound the
welfare, agent, and seller regrets defined in~\eqref{eqn:regret}.
While the \valueregret{} provides a bound on the welfare and seller regrets, we find that
tighter bounds are possible based on the different hyperparameters.
First, in Proposition~\ref{prop:welfare}, we bound the welfare regret.

\insertprethmspacing
\begin{proposition}
\label{prop:welfare}
Assume all agents are truthful.
Let $\RT$ be as defined in~\eqref{eqn:regret}.
The following statements hold for any \emph{$\pricemethod\in (\priceage,\pricesel)$}.
When \emph{$\estmethod=\estee$},
for all $T>2K$, 
\[\hspace{-0.05in}\EE[\RT] \leq 
        \Big(3\Vmaxit  + 10 n  \sqrtlogST \Big) K^{\nicefrac{1}{3}} T^{\nicefrac{2}{3}}
        + 4\Vmaxit n
\; \in \; \bigOtilde \Big( n K^{\nicefrac{1}{3}} T^{\nicefrac{2}{3}}\Big).
\]
Moreover, when \emph{$\estmethod=\estopt$},
for all $T>2K$, the welfare regret satisfies,
\begin{align*}
\EE[\RT] \leq\;&  9 n \sqrt{|\Scal|T\logST}
    + 3\Vmaxit K^{\nicefrac{1}{3}}T^{\nicefrac{2}{3}} + 6\Vmaxit n
\; \in \; \bigOtilde \left( n \sqrtST + \Vmaxit\KotTtt\right).
\end{align*}
The above results imply that in both cases Algorithm~\ref{alg:vcglearn} is asymptotically efficient.
\end{proposition}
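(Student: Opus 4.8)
The plan is to run the standard optimism-based welfare-regret argument while carefully accounting for the bracket structure of Algorithm~\ref{alg:vcglearn}. First I would define the \emph{good event} $\mathcal{E}$ on which all confidence bounds are valid, i.e. $\vlbit(s) \le \vali(s) \le \vubit(s)$ for every agent $i$, allocation $s\in\Scal$, and round $t$ for which these quantities are defined in~\eqref{eqn:learnterms}. Using the $\sigma$-sub-Gaussian concentration of the reported rewards together with a union bound over $i$, $s$, and $t$, the particular choice of confidence radius (the $5\log(t-qK+1)+2\log|\Scal|$ inside the square root) is calibrated so that the probability that $\mathcal{E}$ fails is summable in $t$; hence its contribution to $\EE[\RT]$ is $\bigO(1)$, which I would fold into the additive constants. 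On the complement of $\mathcal{E}$ I bound the per-round welfare deficit trivially by $\Vmax$.

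The next step is to split $\RT$ into the contributions of the explore phases and the exploit (UCB) rounds. Writing $Q$ for the number of brackets reached within $T$ rounds, bracket $q$ has $K+\lfloor\frac56 K q^{1/2}\rfloor$ rounds, so summing gives $T \asymp K Q^{3/2}$, i.e. $Q \asymp (T/K)^{2/3}$. The explore phases therefore occupy $KQ \asymp K^{1/3}T^{2/3}$ rounds, and bounding each of their welfare deficits by $\Vmax$ already yields the $3\Vmax K^{1/3}T^{2/3} + \bigO(\Vmax n)$ term common to both bounds. For the exploit rounds I would invoke optimism: on $\mathcal{E}$ we have $\Vubt(\allocopt)\ge\Val(\allocopt)$, and since line~\ref{lin:alloct} selects $\alloct=\argmax_\alloc\Vubt(\alloc)$ this gives $\Vubt(\alloct)\ge\Val(\allocopt)$, so using~\eqref{eqn:Vubt} the instantaneous deficit obeys
\[
\Val(\allocopt)-\Val(\alloct)\;\le\;\Vubt(\alloct)-\Val(\alloct)\;=\;\sum_{i=1}^n\big(\vubit(\alloct)-\vali(\alloct)\big)\;\le\;\sum_{i=1}^n\big(\vubit(\alloct)-\vlbit(\alloct)\big),
\]
i.e. twice the sum of confidence radii at the chosen outcome.

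It then remains to sum these radii over the exploit rounds, and this is where the two estimation methods diverge. When $\estmethod=\estee$, by the exploit phase of bracket $q$ each agent has observed each allocation exactly $q$ times (using the convention that repeated explore-phase assignments of a given allocation count only once), so $\Nit(s)=q$ and every radius equals $\sigma\,\sqrtlogST/\sqrt{q}$; in particular this uniformity is what makes the $\Vubt$-maximiser coincide with the empirical-welfare maximiser, as noted in the footnote. Multiplying by the $\bigO(Kq^{1/2})$ exploit rounds in bracket $q$ gives $\bigO(n\sigma\sqrtlogST\,K)$ per bracket, and summing over the $Q\asymp(T/K)^{2/3}$ brackets yields the claimed $\bigOtilde(nK^{1/3}T^{2/3})$ rate. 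When $\estmethod=\estopt$, the counts $\Nit(s)$ accumulate across all rounds, so I instead apply the standard pigeonhole/Cauchy--Schwarz bound $\sum_t 1/\sqrt{\Nit(\alloct)}\le 2\sqrt{|\Scal|T}$ (per agent), producing the $\bigOtilde(n\sqrt{|\Scal|T})$ exploit term, which together with the $\Vmax K^{1/3}T^{2/3}$ explore-phase term gives the stated bound. Asymptotic efficiency, $\EE[\RT]\in\littleO(T)$, is then immediate since both bounds are $o(T)$.

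The main obstacle I anticipate is the bookkeeping for the bracket structure when summing the confidence radii. In the $\estmethod=\estopt$ case one must reconcile the per-bracket reset of the logarithmic term $\log(t-qK+1)$ (bounded uniformly by $\sqrtlogST$) with the globally accumulating counts $\Nit(s)$ in order to apply the Cauchy--Schwarz sum cleanly, and one must also verify the explore-phase term really only costs $\bigO(\Vmax K^{1/3}T^{2/3})$ rather than being absorbed. In the $\estmethod=\estee$ case one must verify $\Nit(s)=q$ exactly so that the radii are uniform across allocations and the per-bracket telescoping $q^{1/2}/q^{1/2}=1$ goes through. These are the steps where the constants in the theorem are pinned down, but conceptually they are routine once the good event and the optimism inequality above are in place.
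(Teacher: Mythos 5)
Your proposal is correct and follows essentially the same route as the paper's proof: the good event with summable failure probability (contributing the $\bigO(\Vmax n)$ additive term), the explore/exploit split with the explore phases bounded by $\Vmax \cdot \qT K \lesssim \Vmax K^{\nicefrac{1}{3}}T^{\nicefrac{2}{3}}$, the optimism inequality $\Val(\allocopt)-\Val(\alloct)\le \Vubt(\alloct)-\Vlbt(\alloct)$ on the good event, and then the two summation arguments — uniform radii $\propto 1/\sqrt{q}$ under $\estmethod=\estee$ versus the pigeonhole bound $\sum_t 1/\sqrt{\Nit(\sit)}\le 2\sqrt{|\Scal|T}$ under $\estmethod=\estopt$. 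The only cosmetic difference is that you sum the confidence radii bracket-by-bracket where the paper converts $\Nit(s)=q_t\gtrsim (t/K)^{\nicefrac{2}{3}}$ into a per-round bound $\sigmait(\sit)\lesssim K^{\nicefrac{1}{3}}t^{-\nicefrac{1}{3}}$ and sums over $t$; these are the same computation.
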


When $\estmethod=\estopt$, the welfare regret is $\bigOtilde(n\sqrtST + \Vmax\KotTtt)$
whereas, when $\estmethod=\estee$, it is $\bigOtilde(n\KotTtt)$.
When $\Vmax \in \littleO(n)$, the former is better
(recall from Section~\ref{sec:setup}, that the maximum welfare $\Vmax\in\bigO(n)$,
but could be much smaller).
More precisely, there are two factors contributing to the welfare regret: first, the rounds
spent in the exploration phase during which the instantaneous regret may be arbitrarily
bad; second,
the effects of the estimation errors of the values.
For both choices of $\estmethod$, the former can be bound by  $\Vmax\KotTtt$.
In contrast, when $\estmethod=\estopt$, the latter  can be bound by $n\sqrtST$
as we use data from all the rounds,
whereas when $\estmethod=\estee$, it can only be bound by $n\KotTtt$.
We will see this effect empirically as well,
with $\estmethod=\estopt$ performing significantly better than $\estmethod=\estee$.

Next we will consider the agent and seller regrets.
For this, we define $\valiexpl$ below, which can be used to bound the instantaneous
regret of agent $i$ during the exploration phase,
i.e. $\utiliopt - \utilit \leq \valiexpl$.
If the agent prefers any allocation $s\in\Scal$ for free than paying the
VCG price~\eqref{eqn:vcg} for the socially optimal outcome,
she will incur no regret during the exploration rounds,
and correspondingly, $\valiexpl=0$.
\begin{align*}
\numberthis
\label{eqn:valiexpl}
\valiexpl = \max\big(\vali(\allocopt) - \priceiopt - \min_s\vali(s), \;0\big).
\end{align*}
Proposition~\ref{prop:agentseller} bounds the agent and seller regrets for the
different $\estmethod,\pricemethod$ choices.


\insertprethmspacing
\begin{proposition}
\label{prop:agentseller}
Assume all agents are truthful.
Let $\RiT$ and $\RaT$ be as defined in~\eqref{eqn:regret}.
Let $\kappaiv=1$ if agent $i$ participates by rewards and $0$ if she participates by bids.
The following statements hold after $T>2K$ rounds for the
\emph{$\estmethod,\pricemethod$} choices specified.

\textbf{1.}
Let \emph{$\estmethod=\estee$}. Then, when \emph{$\pricemethod=\priceage$}, we have
\begingroup
\allowdisplaybreaks
\begin{align*}
 \EE[\RiT] &\leq 
       \big(3\valiexpl  + 10 \sigma \kappaiv \sqrtlogST \,\big) K^{\nicefrac{1}{3}} T^{\nicefrac{2}{3}} + 4n
\; \in \; \bigOtilde \left( K^{\nicefrac{1}{3}} T^{\nicefrac{2}{3}}\right)
\\
 \EE[\RmechT] &\leq 
       \big(3\Vmaxit  + 20\sigma n^2  \sqrtlogST \,\big) K^{\nicefrac{1}{3}} T^{\nicefrac{2}{3}} +
4\Vmaxit n
\; \in \; \bigOtilde \left( n^2 K^{\nicefrac{1}{3}} T^{\nicefrac{2}{3}}\right)
\end{align*}
\endgroup
If \emph{$\pricemethod=\pricesel$}, we have
\begingroup
\allowdisplaybreaks
\begin{align*}
 \EE[\RiT] &\leq 
       \big(3\valiexpl  + 20 \sigma n\sqrtlogST \,\big) K^{\nicefrac{1}{3}} T^{\nicefrac{2}{3}} + 4n
\; \in \; \bigOtilde \left( n K^{\nicefrac{1}{3}} T^{\nicefrac{2}{3}}\right)
,
\\
 \EE[\RmechT] &\leq 
       3\Vmaxit K^{\nicefrac{1}{3}} T^{\nicefrac{2}{3}} + 4\Vmaxit n
\; \in \; \bigOtilde \left( \Vmaxit K^{\nicefrac{1}{3}} T^{\nicefrac{2}{3}}\right)
\end{align*}
\endgroup

\textbf{2.}
Let \emph{$\estmethod=\estopt$}. Then, when \emph{$\pricemethod=\priceage$}, we have
\begingroup
\allowdisplaybreaks
\begin{align*}
  \EE[\RiT] &\leq 
       9 \sigma \kappaiv \sqrt{|\Scal|T \logST} + 3\valiexpl K^{\nicefrac{1}{3}}
T^{\nicefrac{2}{3}} + 6n
\; \in \; \bigOtilde \left(\sqrtST + \valiexpl\KotTtt\right)
\\[0.05in]
 \EE[\RmechT] &\leq 
       9\sigma n^2 \sqrt{|\Scal|T \logST}
        + \big(3\Vmaxit  + 10\sigma n^2  \sqrtlogST \,\big) K^{\nicefrac{1}{3}} T^{\nicefrac{2}{3}}
    + 6\Vmaxit n
\\
\; & \hspace{0.1in} \in \; \bigOtilde \left(n^2\sqrtST + n^2\KotTtt\right).
\end{align*}
\endgroup
If \emph{$\pricemethod=\pricesel$}, we have
\begingroup
\allowdisplaybreaks
\begin{align*}
  \EE[\RiT] &\leq 
       9\sigma n \sqrt{|\Scal|T \logST}  +  \big(3\valiexpl + 20 \sigma n\sqrtlogST\,\big) K^{\nicefrac{1}{3}}
T^{\nicefrac{2}{3}} + 6n
\\
&\hspace{0.1in}\in \; \bigOtilde \left(n\sqrtST +n\KotTtt\right),
\\[0.05in]
 \EE[\RmechT] &\leq 
       3\Vmaxit K^{\nicefrac{1}{3}} T^{\nicefrac{2}{3}} + 6\Vmaxit n
\; \in \; \bigOtilde \left(\Vmaxit\KotTtt\right).
\end{align*}
\endgroup
\end{proposition}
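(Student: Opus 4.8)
The plan is to prove every bound by conditioning on the ``good event'' $\mathcal{E}$ on which all confidence intervals are simultaneously valid, i.e. $\vlbit(s)\le\vali(s)\le\vubit(s)$ for all $i,s$ and all rounds. Writing $w_{it}(s)=\sigma\sqrt{(5\log(t-qK+1)+2\log|\Scal|)/\Nit(s)}$ for the half-width, so that $\vubit-\vlbit=2w_{it}$ and $0\le\vubit-\vali,\ \vali-\vlbit\le 2w_{it}$, the $5\log(t-qK+1)+2\log|\Scal|$ choice in~\eqref{eqn:learnterms} makes the failure probability summable, so $\mathcal{E}^c$ contributes only an additive constant to each expected regret (these are the $+4n,\,+6n,\,+4\Vmaxit n,\,+6\Vmaxit n$ terms). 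I would then split the horizon into explore and exploit rounds. On an explore round the price is $0$, so $\utiliopt-\utilit=\utiliopt-\vali(\alloct)\le\utiliopt-\min_s\vali(s)\le\valiexpl$ by~\eqref{eqn:valiexpl}, and the instantaneous seller regret is at most $\Vmaxit$; since the bracket schedule $K+\lfloor\tfrac56 Kq^{1/2}\rfloor$ forces $Q\asymp(T/K)^{2/3}$ brackets and hence at most $3\KotTtt$ explore rounds through $T$, these phases contribute the $3\valiexpl\KotTtt$ and $3\Vmaxit\KotTtt$ terms. It remains to bound the per-round exploit regret under $\mathcal{E}$.

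For the \textbf{agent regret} I would use $\utiliopt-\utilit=A_t+P_t$, where $A_t=\Val(\allocopt)-\vali(\alloct)-\Gmit(\alloct)$ and $P_t=\max_\alloc\Fmit(\alloc)-\max_\alloc\Valmi(\alloc)$, obtained from $\utiliopt=\Val(\allocopt)-\max_\alloc\Valmi(\alloc)$ and $\priceit=\max_\alloc\Fmit(\alloc)-\Gmit(\alloct)$. When $\pricemethod=\priceage$ we have $\fjt=\vlbjt,\ \gjt=\vubjt$, so $\Fmit\le\Valmi$ (hence $P_t\le0$) and the full optimistic welfare coincides with $\Vubt$, which $\alloct$ maximises; therefore $\Gmit(\alloct)=\Vubt(\alloct)-\vubit(\alloct)\ge\Vubt(\allocopt)-\vubit(\alloct)\ge\Val(\allocopt)-\vubit(\alloct)$, giving the clean bound $\utiliopt-\utilit\le\vubit(\alloct)-\vali(\alloct)\le 2w_{it}(\alloct)$, which involves \emph{only} agent $i$'s own interval at the chosen allocation (no $n$ factor, and $\equiv 0$ for a truthful bidder, whence $\kappaiv$). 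When $\pricemethod=\pricesel$ the roles swap: $P_t\in[0,\ \sum_{j\ne i}2w_{jt}(\allocmiopt)]$ and, after bounding $\Gmit(\alloct)$ below by $\Valmi(\alloct)-\sum_{j\ne i}2w_{jt}(\alloct)$, one gets $A_t\le\Val(\allocopt)-\Val(\alloct)+\sum_{j\ne i}2w_{jt}(\alloct)$; combining with the exploit-round welfare bound $\Val(\allocopt)-\Val(\alloct)\le\sum_j 2w_{jt}(\alloct)$ (immediate from $\alloct=\argmax\Vubt$ under $\mathcal{E}$) leaves a sum of $O(n)$ widths, hence the extra $n$ factor.

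For the \textbf{seller regret} I would instead expand $\utilmechopt-\utilmecht=[\valmech(\allocopt)-\valmech(\alloct)]+\sum_i(\priceiopt-\priceit)$, where $\priceiopt-\priceit=[\max_\alloc\Valmi(\alloc)-\max_\alloc\Fmit(\alloc)]+[\Gmit(\alloct)-\Valmi(\allocopt)]$, and use the identity $\sum_i\Valmi=\valmech+(n-1)\Val$. The $\valmech(\allocopt)-\valmech(\alloct)$ term then cancels against $\sum_i[\Valmi(\alloct)-\Valmi(\allocopt)]$, leaving $(n-1)[\Val(\alloct)-\Val(\allocopt)]\le0$. When $\pricemethod=\pricesel$, $\Fmit\ge\Valmi$ and $\Gmit(\alloct)\le\Valmi(\alloct)$ force both brackets to be non-positive, so $\utilmechopt-\utilmecht\le0$ on every exploit round and the seller regret collapses to the exploration term $\bigOtilde(\Vmaxit\KotTtt)$, with no $n$. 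When $\pricemethod=\priceage$ the two brackets are controlled by $\sum_{j\ne i}2w_{jt}(\allocmiopt)$ and $\sum_{j\ne i}2w_{jt}(\alloct)$; summing over $i$ produces up to $n(n-1)$ widths, which is the source of the $n^2$ factor.

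The remaining ingredient, which I expect to be the main obstacle, is summing these interval widths correctly, and this is precisely where $\estmethod$ and the distinction between the chosen allocation $\alloct$ and the non-chosen maximiser $\allocmiopt$ enter. Widths evaluated at $\alloct$ sum, via Cauchy--Schwarz over the visit counts $\Nit(\cdot)$, to $\bigOtilde(\sqrtST)$ when $\estmethod=\estopt$ (all rounds feed $D_t$), but only to $\bigOtilde(\KotTtt)$ when $\estmethod=\estee$ (the $q$-th explore phase raises the count only to $q$, while bracket $q$ carries $\asymp Kq^{1/2}$ exploit rounds). Crucially, widths evaluated at $\allocmiopt$ sum to $\bigOtilde(\KotTtt)$ \emph{even} under $\estopt$, because that allocation need not be played during exploit rounds and is guaranteed only the once-per-bracket explore samples $\Njt(\allocmiopt)\ge q$; this is why the $\estopt$ seller bound carries an irreducible $n^2\KotTtt$ term alongside $n^2\sqrtST$, and likewise for the $\pricesel$ agent bound. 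Assembling these width sums with the explore-phase and bad-event contributions, while tracking the case-dependent signs of $A_t,P_t$ and of the two price brackets, yields each of the eight stated bounds.
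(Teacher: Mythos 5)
Your proposal is correct and follows essentially the same route as the paper's own proof: your good-event conditioning, explore/exploit split with the $3\valiexpl\KotTtt$ and $3\Vmaxit\KotTtt$ exploration terms, your agent decomposition $A_t+P_t$ (algebraically identical to the paper's $\bt+\at$ in Lemma~\ref{lem:RiTgenbound}), your seller expansion (which, via $\sum_i\Valmi=\valmech+(n-1)\Val$, reduces exactly to the paper's $\sum_i\Amit+(n-1)\Bt$ in Lemma~\ref{lem:RmechTgenbound}), the sign analysis per $\pricemethod$, and the width-summation dichotomy (pigeonhole at the played allocation under $\estopt$ versus the explore-count fallback at allocations that need not be played) all mirror Sections~\ref{sec:proofagentseller} of the paper. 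The only nits are cosmetic: your upper bound on $P_t$ when $\pricemethod=\pricesel$ should be evaluated at $\argmax_\alloc\Fmit(\alloc)$ rather than at $\allocmiopt$ (immaterial, since the width bound invoked is uniform over allocations once every allocation has $q_t$ explore samples), and your two-step bound on $A_t$ in that case is roughly a factor of two looser in the constant than the paper's direct bound via $\Vubt(\alloct)-\Vlbt(\alloct)$, though it yields the same $\bigOtilde$ rates.
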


While, generally speaking, the agent and seller regrets scale at rate $\Ttwth$,
the dependence on other problem parameters are determined by the choices for $\estmethod$ and
$\pricemethod$.
First consider the case $\estmethod=\estee$.
If we choose $\pricemethod=\pricesel$, which, as we explained before, is favourable to the seller,
the seller's regret scales at rate $\Vmax\KotTtt$, with at most linear dependence on $n$.
However, this is disadvantageous for an agent---her regret and asymptotic individual rationality
bounds (Theorem~\ref{thm:ir}) scale linearly with $n$.
On the other hand, if we choose $\pricemethod=\priceage$, then the agent regret is the smallest,
but the seller suffers some disadvantageous consequences.
Since $\vlbjt\leq\vubjt$, in line~\ref{lin:pricing} of Algorithm~\ref{alg:vcglearn},
$\priceit$ could be negative,
i.e., the seller makes a payment to the customer.
This violates the
no-positive-transfers property which is considered desirable in mechanism design.
The seller's regret is also poor, with $n^2$ scaling.
We may draw similar conclusions when $\estmethod=\estopt$, with the main difference being
that some terms can be bounded by $\sqrtT$ rates.
It is also worth noting that when $\estmethod=\estopt$,
for agents for whom $\valiexpl=0$, we achieve $\sqrt{T}$ regret.

It is worth observing that
while the welfare regret is simply the sum of the agent and seller regrets
$\RT = \RmechT + \sum_{i=1}^n \RiT$ (see~\eqref{eqn:regret}),
the bounds for $\RT$ given in Proposition~\ref{prop:welfare} is smaller
than the sum of the agent and seller bounds in Proposition~\ref{prop:welfare}
for all $\estmethod$ choices.
For instance, when $\estmethod=\estee$, we have
$\RT \in \bigOtilde(n\KotTtt)$,
but naively summing the bounds on the agent and seller regrets would yield a bound
$\bigOtilde(n^2\KotTtt)$.
This discrepancy can be explained by the fact that
 the prices do not affect the welfare and therefore the error in estimating the
prices need not be accounted for in the welfare regret.
However, the agent and seller utilities depend on the price, and consequently their regret bounds
should account for this error.
As we explained in Section~\ref{sec:lowerbound}, estimating the prices is one of the main
bottlenecks in this set up.
We are able to bound the welfare regret separately and
obtain a better bound than the sum of individual regrets.
For example, in our simulation in Figure~\ref{fig:figsimulation}, the regret of the mechanism and
the first agent are fairly large while the regret of many of the other agents is negative.
This highlights the fact that the regret of any one agent or the seller might be large due to
the error in estimating the prices, even though the sum of these regret terms, which is the welfare
regret, is small.

\subsection{Discussion}

It is worth contextualising the above results with prior work on mechanism design with bandit
feedback in the online advertising setting.
As explained in Section~\ref{sec:related}, these settings, where an agent submits a single bid ahead
of time and the stochasticity is observed by the mechanism on each round, is different
from ours, where the mechanism needs to rely on
the agents to report their values on each round.
In a fixed-horizon version of this problem,~\citet{babaioff2014characterizing}
describe an almost surely truthful mechanism with $\Ttwth$ welfare regret and
~\citet{devanur2009price}
describe an almost surely truthful mechanism with $\Ttwth$ seller regret.
While they focus on a simpler problem and provide stronger truthfulness guarantees,
it is worth noting that both works
use an explore-then-commit style algorithm to guarantee truthfulness.
~\citet{babaioff2015truthful} describe a
truthful-in-expectation mechanism with $\sqrt{T}$ welfare regret.
However, they do not bound the agent and seller regrets.


Finally, 
we note that our algorithm and analysis assumes that seller values are known.
If this is unknown, one can define lower and upper confidence bounds for the seller similar
to~\eqref{eqn:learnterms} and use them in
Algorithm~\ref{alg:vcglearn} in place of $\valmech$, similar to those of the agents.
While $\Ttwth$ rates are still possible, there are additional considerations.
First, in many applications, it may not be reasonable to assume that this distribution
has the same sub-Gaussian constant $\sigma$ (e.g. PaaS);
the variance of the seller might scale with $n$ and this will invariably  be reflected in
the regret bounds, including that of the agents.
Second, since $\Alloc$ may be much larger than $\Scal$, this results in long exploration phases
and worse regret bounds reflected via the parameter $K$.

\section{A Simulation}
\label{sec:experiments}

We present some simulation results in a single-parameter single-item environment.
Here, ten agents are competing for a single item and all of them are participating by rewards.
When an agent receives the item, her value is drawn stochastically from a $\Ncal(\mu, 0.5)$
distribution where $\mu$ is
chosen uniformly on a grid in the interval $(0.2, 0.9)$.
Agent 1 has a value of $0.9$ for receiving the item
(and will be the agent who receives the item if values are known)
and agent 10 has a value of $0.2$.
If an agent does not receive the item, their value is non-stochastically zero.
Observe that this is environment is rather noisy---the variance of the
reward distribution is large when compared to the range of the values of the agents.
The game is repeated for $3000$ rounds.

We have shown the pseudo-regrets for the welfare, the seller, and some of the agents in
Figure~\ref{fig:figsimulation} for all possible choices of the $\estmethod$ and $\pricemethod$
hyperparameters.
As we see, $\estmethod=\estopt$ performs better than $\estmethod=\estee$ on all plots as it uses
all the data.
Moreover, we see that the agents have lower regret when $\pricemethod=\priceage$
than when $\pricemethod=\pricesel$, and vice versa for the seller.
The regrets of agents 2 to 10 decrease indefinitely leading to negative regret
 since their utility at the socially optimal
outcome is zero,
but they occasionally get the item assigned to them during the exploration phase.

\insertFigSimulation


\section{Proof of Theorem~\ref{thm:lowerbound}}
\label{sec:lb_app}

In this section, we present our proof of the lower bound in Section~\ref{sec:lowerbound}.
We will first describe notation and definitions that will be used throughout
our proofs in Sections~\ref{sec:lb_app} and \ref{sec:vcglearnproofs}.

\paragraph{Notation:}
$\EE, \PP$ will denote expectations and probabilities.
$\EEt, \PPt$ will denote expectation and probability when conditioned on observations
up to time $t-1$; for example, $\PPt(\cdot) = \PP(\cdot|D_t)$, where
$D_t = \{\sil, \Yil\}_{i\leq n, \ell\leq t-1}$.

Recall that $\allocopt = \argmax_\alloc \Val(\alloc)$ is the socially optimal outcome.
Let $\siopt = \mapfuni(\allocopt)$ be the allocation for agent $i$ at the optimum.
Similarly, $\Valmi$ and $\allocmiopt$, defined below, will denote the welfare without
agent $i$ and its optimiser respectively.
\begin{align*}
\Valmi(\alloc) = \valmech(\alloc) + \sum_{j\neq i} \vali(\alloct),
\hspace{0.35in}
\allocmiopt = \argmax_{\alloc\in\Alloc}\Valmi(\alloc).
\numberthis
\label{eqn:Valmidefn}
\end{align*}

We will first state the following fact, which is straightforward
to verify, regarding agent and seller utilities in the
VCG mechanism.

\insertprethmspacing
\begin{fact}
\label{fac:vcgutils}
When the outcome and the prices are chosen according to the VCG mechanism,
\begin{align*}
\utiliopt &= \vali(\siopt) - \priceiopt = \Val(\allocopt) - \Valmi(\allocmiopt),
\\
\utilmechopt &= \valmech(\allocopt) + \sum_{i=1}^n\priceiopt =
            \sum_{i=1}^n\Valmi(\allocmiopt) - (n-1)\Val(\allocopt).
\end{align*}
\end{fact}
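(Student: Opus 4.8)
The plan is to verify both identities by direct substitution of the VCG price~\eqref{eqn:vcg} into the definitions of $\utiliopt$ and $\utilmechopt$, followed by elementary regrouping of the welfare sums; no optimization argument or inequality is required, as both statements are pure identities. The two facts I would lean on at the outset are that the price can be written in terms of the optimizer $\allocmiopt$, namely $\priceiopt = \max_\alloc \Valmi(\alloc) - \Valmi(\allocopt) = \Valmi(\allocmiopt) - \Valmi(\allocopt)$, and that the overloaded notation gives $\vali(\siopt) = \vali(\mapfuni(\allocopt)) = \vali(\allocopt)$, which immediately justifies the first equality in the statement.

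For the first identity I would start from $\utiliopt = \vali(\allocopt) - \priceiopt$ and substitute the price above to obtain $\utiliopt = \vali(\allocopt) + \Valmi(\allocopt) - \Valmi(\allocmiopt)$. The key step is then to observe that adding agent $i$'s own value back into the welfare-without-$i$ recovers the full welfare: $\vali(\allocopt) + \Valmi(\allocopt) = \vali(\allocopt) + \valmech(\allocopt) + \sum_{j\neq i}\valj(\allocopt) = \Val(\allocopt)$. Substituting this yields $\utiliopt = \Val(\allocopt) - \Valmi(\allocmiopt)$, as claimed.

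For the second identity I would sum the prices, writing $\sum_{i=1}^n \priceiopt = \sum_{i=1}^n \Valmi(\allocmiopt) - \sum_{i=1}^n \Valmi(\allocopt)$, and then expand the subtracted term. The one step requiring genuine care is the nested sum: since $\sum_{i=1}^n \Valmi(\allocopt) = n\valmech(\allocopt) + \sum_{i=1}^n\sum_{j\neq i}\valj(\allocopt)$ and each $\valj(\allocopt)$ is counted once for every index $i\neq j$, i.e. exactly $n-1$ times, we get $\sum_{i=1}^n\sum_{j\neq i}\valj(\allocopt) = (n-1)\sum_{j=1}^n\valj(\allocopt)$. Combining this with $\utilmechopt = \valmech(\allocopt) + \sum_{i=1}^n\priceiopt$ and regrouping the mechanism and agent terms via $\valmech(\allocopt) - n\valmech(\allocopt) - (n-1)\sum_j\valj(\allocopt) = -(n-1)\Val(\allocopt)$ gives exactly $\sum_{i=1}^n \Valmi(\allocmiopt) - (n-1)\Val(\allocopt)$.

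There is no substantive obstacle here; as the paper notes, the statement is straightforward to verify, and the whole argument is linear substitution. The only thing to guard against is the bookkeeping in the double sum, where one must avoid double-counting $\valmech(\allocopt)$ or miscounting the multiplicity $n-1$ with which each $\valj(\allocopt)$ appears.
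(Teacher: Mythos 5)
Your proposal is correct and is exactly the verification the paper intends: the paper states this fact without proof, calling it ``straightforward to verify,'' and your argument --- substituting $\priceiopt = \Valmi(\allocmiopt) - \Valmi(\allocopt)$, using $\vali(\allocopt) + \Valmi(\allocopt) = \Val(\allocopt)$, and counting each $\valj(\allocopt)$ with multiplicity $n-1$ in the double sum --- is that verification, with the bookkeeping done correctly.
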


Our second result expresses the regret terms in~\eqref{eqn:regret}
in a way that is convenient for analysis.
For this, we define quantities $\HT, \WT$ below.
\begin{align*}
\numberthis
\label{eqn:HTWTdefn}
\HT = \frac{1}{T}\sum_{i=1}^n\sum_{t=1}^T\left(\priceit + \Valmi(\alloct)\right),
\hspace{0.5in}
\WT =  \HT - \sum_{i=1}^n \Valmi(\allocmiopt).
\end{align*}
$\HT$ is computed using observations from rounds $1$ to $T$, and can be thought of as
the algorithm's estimate of $\sum_i \Valmi(\allocopt)$ at the end of $T$ rounds.
The following lemma expresses $\RaT$ and $\RmechT$ in terms of $\RT$ and $\WT$.

\insertprethmspacing
\begin{lemma}
\label{lem:WTdecomposition}
Let $\RaT, \RmechT, \RT$ be as defined in~\eqref{eqn:regret}.
Then,
\[
\RaT = n\RT + T\WT,
\hspace{1in}
\RmechT = -(n-1)\RT - T\WT.
\]
\end{lemma}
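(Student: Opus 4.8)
The plan is to prove the first identity $\RaT = n\RT + T\WT$ by a direct computation, and then obtain the second identity almost for free using the elementary fact that the welfare regret decomposes as the sum of the agent and seller regrets.

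First I would record the conservation identity $\RT = \RaT + \RmechT$. This holds because the sum of all utilities on any round equals the welfare, independent of the prices: $\sum_{i=1}^n \utilit + \utilmecht = \sum_{i=1}^n(\vali(\alloct) - \priceit) + \valmech(\alloct) + \sum_{i=1}^n \priceit = \Val(\alloct)$, and the same cancellation of prices gives $\sum_{i=1}^n \utiliopt + \utilmechopt = \Val(\allocopt)$. Summing the regret definitions in~\eqref{eqn:regret} over all agents and the seller then yields $\RaT + \RmechT = T\Val(\allocopt) - \sum_{t=1}^T \Val(\alloct) = \RT$. Granting this, once the first identity is in hand the second follows immediately, since $\RmechT = \RT - \RaT = \RT - n\RT - T\WT = -(n-1)\RT - T\WT$.

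The core of the argument is therefore the first identity. Here I would exploit the relation $\vali(\alloc) = \Val(\alloc) - \Valmi(\alloc)$, immediate from $\Val(\alloc) = \sum_j \valj(\alloc) + \valmech(\alloc)$ and $\Valmi(\alloc) = \valmech(\alloc) + \sum_{j\neq i}\valj(\alloc)$. Summing over agents gives $\sum_{i=1}^n \vali(\alloct) = n\Val(\alloct) - \sum_{i=1}^n \Valmi(\alloct)$, so that $\sum_{i=1}^n \UiT = \sum_{t=1}^T\sum_{i=1}^n (\vali(\alloct) - \priceit) = n\sum_{t=1}^T \Val(\alloct) - \sum_{i=1}^n\sum_{t=1}^T(\Valmi(\alloct) + \priceit) = n\sum_{t=1}^T \Val(\alloct) - T\HT$, recognising the definition of $\HT$ in~\eqref{eqn:HTWTdefn}. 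For the optimal term I would invoke Fact~\ref{fac:vcgutils}, which gives $\sum_{i=1}^n \utiliopt = n\Val(\allocopt) - \sum_{i=1}^n \Valmi(\allocmiopt)$.

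Substituting these two expressions into $\RaT = T\sum_{i=1}^n \utiliopt - \sum_{i=1}^n \UiT$ and regrouping, the $\Val(\allocopt)$ and $\sum_t \Val(\alloct)$ contributions assemble into $n(T\Val(\allocopt) - \sum_t \Val(\alloct)) = n\RT$, while the remaining $T\HT$ and $T\sum_i \Valmi(\allocmiopt)$ terms assemble into $T(\HT - \sum_i \Valmi(\allocmiopt)) = T\WT$ by the definition of $\WT$. This yields $\RaT = n\RT + T\WT$ and, via the conservation identity above, completes the lemma. I expect no genuinely hard step: the only care required is bookkeeping—correctly expanding the $\Valmi$ sums, tracking exactly where the prices cancel, and checking the conservation identity $\RT = \RaT + \RmechT$ before using it as a shortcut for the seller term.
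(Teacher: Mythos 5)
Your proof is correct, and it diverges from the paper's in a way worth noting. For the agent identity $\RaT = n\RT + T\WT$ you do essentially what the paper does: substitute $\vali(\alloct) = \Val(\alloct) - \Valmi(\alloct)$, use the agent half of Fact~\ref{fac:vcgutils}, and recognise $\HT$ and $\WT$ in the regrouped sums (the paper packages this through the shorthand $\hit = \priceit + \Valmi(\alloct)$, but the computation is the same). Where you differ is the seller identity: the paper proves it by a second direct computation, deriving $\utilmecht = \sum_i \hit - (n-1)\Val(\alloct)$ and invoking the seller half of Fact~\ref{fac:vcgutils}, namely $\utilmechopt = \sum_i \Valmi(\allocmiopt) - (n-1)\Val(\allocopt)$, whereas you first establish the conservation identity $\RaT + \RmechT = \RT$ (prices cancel in the total utility, both at the VCG optimum and on each round) and then read off $\RmechT = \RT - \RaT = -(n-1)\RT - T\WT$. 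Your route is slightly more economical: it needs only the agent half of Fact~\ref{fac:vcgutils}, and the conservation identity is an elementary price-cancellation argument. It is also consistent with how the paper itself later uses this lemma --- in the proof of Lemma~\ref{lem:MaxRegretgenbound} the paper cites $\RaT + \RmechT = \RT$ as a consequence of Lemma~\ref{lem:WTdecomposition}, whereas for you it is the starting point rather than a corollary. The paper's version, by verifying both identities independently, gets that consistency check for free; yours gets the second identity for free. Both are complete proofs.
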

\begin{proof}
Let $\hit=\priceit + \Valmi(\alloct)$ so that
$\HT =\frac{1}{T}\sum_{i=1}^n\sum_{t=1}^T \hit$.
For agent $i$, we can use Fact~\ref{fac:vcgutils} and the fact that
$\utilit = \vali(\alloct) - \priceit = \Val(\alloct) - \hit$ to obtain,
\[
\utiliopt - \utilit =  (\Val(\allocopt) - \Val(\allocmiopt)) - (\Val(\alloct) - \hit)
    = (\Val(\allocopt) - \Val(\alloct)) + (\hit - \Val(\allocmiopt)).
\]
Then, since $\RaT = \sum_i\sum_t (\utiliopt - \utilit)$, we have
\begin{align*}
\RaT &= \sum_{t=1}^T \sum_{i=1}^n 
\left((\Val(\allocopt) - \Val(\alloct)) + (\hit - \Val(\allocmiopt))\right)
\\
&= n \sum_{t=1}^T (\Val(\allocopt) - \Val(\alloct)) \;+\;
        T\left(\HT - \sum_{i=1}^n\Valmi(\allocmiopt)\right) \,.
\end{align*}
This proves the first claim.
For the seller, at time $t$, we observe
\begin{align*}
\utilmecht = \valmech(\alloct) + \sum_{i=1}^n\priceit 
    = \valmech(\alloct) + \sum_{i=1}^n \hit - \sum_{i=1}^n \Valmi(\alloct)
    = \sum_{i=1}^n \hit - (n-1)\Val(\alloct).
\end{align*}
As before, we can now use Fact~\ref{fac:vcgutils} to write,
\begin{align*}
\RmechT &= \sum_{t=1}^T\left(\utilmechopt - \utilmecht\right)
    = \sum_{t=1}^T\sum_{i=1}^n \left(\Valmi(\allocmiopt) - \hit\right)
    + (n-1) \sum_{t=1}^T(\Val(\alloct) - \Val(\allocopt)).
\end{align*}
The claim follows by observing that the first term in the RHS is $-T\WT$ and that the second term
is $-\RT$.
\end{proof}

Our proof of Theorem~\ref{thm:lowerbound} uses techniques from binary hypothesis
testing to establish a lower bound on the \valueregret.
For this, we begin by reviewing some facts about the KL divergence $\KL(\cdot\|\cdot)$.
Recall that for two probabilities $P, Q$ with
$Q$ absolutely continuous with respect to $P$, the KL divergence is
$\KL(P\|Q) = \EE_P\big[\big(\frac{\ud P}{\ud Q}(X)\big)\big]$.
For distributions $P, P', Q, Q'$ with $\supp(P)=\supp(Q)$ and $\supp(P')=\supp(Q')$,
the KL divergence between the product distributions satisfies
$\KL(P\times P' \|Q \times Q') = \KL(P\|Q) +  \KL(P'\|Q')$.
Additionally, for two univariate Gaussians $\Ncal(\mu_1, 1), \Ncal(\mu_2, 1)$,
we know $\KL(\Ncal(\mu_1, 1) \| \Ncal(\mu_2, 1)) = (\mu_1 - \mu_2)^2/2$.
The following result from~\citet{tsybakov08nonparametric} will be useful in our proof.

\insertprethmspacing
\begin{lemma}[\citet{tsybakov08nonparametric}, Lemmas 2.1 and 2.6]
\label{lem:hyptestlemma}
Let $P, Q$ be probabilities such that $Q$ is absolutely continuous with respect to $P$.
Let $A$ be any event. Then,
\[
P(A) + Q(A^c) \geq \frac{1}{2}\exp\big(-\KL(P\| Q)\big).
\]
\end{lemma}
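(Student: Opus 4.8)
The plan is to prove this as a generic two-measure statement, detached from the mechanism-design setting. First I would pass to densities with respect to a common dominating measure: take $\mu = P + Q$ (which dominates both $P$ and $Q$) and write $p = \ud P/\ud\mu$, $q = \ud Q/\ud\mu$. The first step is to reduce the left-hand side to an overlap integral. For any event $A$,
\[
P(A) + Q(A^c) = \int_A p \,\ud\mu + \int_{A^c} q \,\ud\mu \;\geq\; \int_A \min(p,q)\,\ud\mu + \int_{A^c} \min(p,q)\,\ud\mu = \int \min(p,q)\,\ud\mu,
\]
so it suffices to show the affinity-type lower bound $\int \min(p,q)\,\ud\mu \geq \tfrac12 e^{-\KL(P\|Q)}$, which holds uniformly over $A$.

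Next I would introduce the Hellinger affinity $\rho = \int \sqrt{pq}\,\ud\mu$ as an intermediary and bound the overlap below by $\rho^2/2$. Using $pq = \min(p,q)\max(p,q)$ and applying the Cauchy--Schwarz inequality to $\sqrt{\min(p,q)}$ and $\sqrt{\max(p,q)}$,
\[
\rho = \int \sqrt{\min(p,q)\,\max(p,q)}\,\ud\mu \;\leq\; \left(\int \min(p,q)\,\ud\mu\right)^{1/2}\left(\int \max(p,q)\,\ud\mu\right)^{1/2}.
\]
Since $\max(p,q) \leq p + q$, we have $\int \max(p,q)\,\ud\mu \leq 2$, and squaring the displayed inequality and rearranging gives $\int \min(p,q)\,\ud\mu \geq \rho^2/2$.

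Finally I would relate $\rho$ to the KL divergence by Jensen's inequality. On $\{p > 0\}$ (a set of full $P$-measure) we may write $\rho = \EE_P[\sqrt{q/p}] = \EE_P\big[\exp(\tfrac12 \log(q/p))\big]$, and convexity of the exponential yields
\[
\rho \;\geq\; \exp\!\left(\tfrac12\,\EE_P[\log(q/p)]\right) = \exp\!\left(-\tfrac12\KL(P\|Q)\right),
\]
using $\EE_P[\log(q/p)] = -\KL(P\|Q)$. Hence $\rho^2 \geq e^{-\KL(P\|Q)}$, and chaining the three steps gives $P(A) + Q(A^c) \geq \int \min(p,q)\,\ud\mu \geq \rho^2/2 \geq \tfrac12 e^{-\KL(P\|Q)}$, as claimed.

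This result is standard, so there is no deep obstacle; the step requiring the most care is the Cauchy--Schwarz manipulation, where one must remember to split $pq = \min\cdot\max$ and invoke $\int\max(p,q)\,\ud\mu \leq 2$ to convert the affinity bound into the overlap bound. The only other point to watch is absolute continuity: on $\{p=0\}$ the integrand $\min(p,q)$ vanishes and contributes nothing, while if $P$ charges a set where $q=0$ then $\KL(P\|Q)=+\infty$ and the claimed bound holds trivially, so the Jensen identity $\EE_P[\log(q/p)] = -\KL(P\|Q)$ is valid with the usual $\pm\infty$ conventions.
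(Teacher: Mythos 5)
Your proof is correct, and it coincides with the argument behind the result the paper invokes: the paper gives no proof of its own, citing \citet{tsybakov08nonparametric}, and your three steps (reduction to the overlap integral $\int\min(p,q)\,\ud\mu$, Le Cam's Cauchy--Schwarz bound $\int\min(p,q)\,\ud\mu \geq \tfrac{1}{2}\big(\int\sqrt{pq}\,\ud\mu\big)^2$, and Jensen to get $\int\sqrt{pq}\,\ud\mu \geq e^{-\KL(P\|Q)/2}$) are exactly the proofs of Tsybakov's Lemmas 2.1 and 2.6, including the correct handling of the degenerate case $\KL(P\|Q)=+\infty$.
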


We are now ready to prove the theorem.

\textbf{Proof of Theorem~\ref{thm:lowerbound}.}
Let $n> 1$.
Since the maximum is larger than an average, for any set of real numbers $\{a_i\}_i$, we have
$\max(\{a_i\}_i) \geq \sum_i\alpha_i a_i $ for any $\{\alpha_i\}_i$ such that
$\alpha_i \geq 0$, $\sum_i\alpha_i = 1$.
Using Lemma~\ref{lem:WTdecomposition} and the fact that $\RT$ is positive,
 we obtain the following two upper bounds
on $\max(n\RT, \RaT, \RmechT)$:
\begin{align*}
\max(n\RT, \RaT, \RmechT) &\geq \frac{4}{5} n\RT + \frac{1}{5}\RmechT
\geq \frac{2}{5} n\RT - \frac{1}{5}T\WT,
\\
\max(n\RT, \RaT, \RmechT) &\geq \frac{4}{5} n\RT + \frac{1}{5}T\RaT
=  n\RT + \frac{1}{5}T\WT
\geq \frac{2}{5} n\RT + \frac{1}{5}T\WT.
\end{align*}
The LHS should be larger than both of the above lower bounds.
Since $\max(a+b, a+c) = a + \max(b, c)$, we have,
\begin{align*}
\RmaxT = \max(n\RT, \RaT, \RmechT) &\geq
\frac{2}{5} n\RT + \frac{1}{5}T|\WT|
\;\, \defeq \QT.
\end{align*}
We will obtain a lower bound on $\inf_\Acal \sup_\Theta \EE\QT$ which translates to a lower bound
on the desired quantity.
Our strategy for doing so is to consider two problems in $\Theta$ and
show that any algorithm will not be able to distinguish between them.
Both problems will have the same set of outcomes
$\Alloc=\{0,1,\dots,\dots, |\Alloc|-1\}$ with $\Alloc=\Scal$ and
$|\Alloc| \geq n + 1$.
In the first problem, henceforth called $\lbprobone$, the optimal outcome is $0$ with
$\vali(0) = 1/2$ for all agents $i$.
For outcome $j\in \{1,\dots, n\}$, $\valj(j) = 0$ and $\vali(j) = 1/2$
for every other agent $i\neq j$.
For $j>n$, $\vali(j) < 1/4$ for all $i$.
When an outcome $\alloc$ is chosen, agent $i$ realises a value drawn from
$\Ncal(\vali(\alloc), 1)$.
Finally, the seller has $0$ value for all outcomes, $\valmech(j) = 0$ for all $j\in\Alloc$.
The following statements are true about problem $\lbprobone$:
\[
\Val(\allocopt) = \Val(0) = \frac{n}{2},
\hspace{0.4in}
\Valmi(\allocmiopt) = \Val(i) = \frac{n}{2} - \frac{1}{2},
\hspace{0.4in}
\sum_{i=1}^n\Valmi(\allocmiopt) = \frac{n^2}{2} - \frac{n}{2}.
\]
The second problem, henceforth called $\lbprobtwo$, is the same as $\lbprobone$ but differs in
outcomes $j\in \{1,\dots,n\}$, as shown below.
Here, the value of $\delta\in(0, 1/(2(n-1)))$ will be specified shortly. 
We have:
\[
\vali(j) = \begin{cases}
0 \hspace{0.4in} &\quad\text{if $i=j$}, \\
\frac{1}{2} + \delta \quad&\quad\text{if $i\neq j$}.
\end{cases}
\]
The following statements are true about problem $\lbprobtwo$:
\begin{align*}
&\Val(\allocopt) = \Val(0) = \frac{n}{2},
\hspace{0.4in}
\Valmi(\allocmiopt) = \Val(i) = \frac{n}{2} - \frac{1}{2} + (n-1)\delta,
\\
&\sum_{i=1}^n\Valmi(\allocmiopt) = \frac{n^2}{2} - \frac{n}{2} + n(n-1)\delta.
\end{align*}
In the above problems,
if $\delta$ is set to be too large, then it becomes easier
to distinguish between the values of different outcomes using stochastic observations
thus making the problem easy.
If $\delta$ is set to be too small, then the regret terms become small since all outcomes have
similar values.
The largest lower bound is obtained by careful choice of $\delta$ (dependent on both $n$ and $T$)
so as to balance between these two cases.

We will make the dependence of $\QT$ on the problem explicit and write $\QT(\lbprobone), \QT(\lbprobtwo)$
respectively.
Consider any algorithm in $\Acal$.
Expectations and probabilities when we execute this algorithm problem in $\lbprobone$ will
be denoted $\EEtheta, \PPtheta$, and in problem $\lbprobtwo$,
they will be denoted $\EEthetap, \PPthetap$.
Let $\Nt(\alloc)=\sum_{i=1}^{t-1} \indfone(\alloct=\alloc)$
denote the number of times outcome $\alloc\in\Alloc$ was chosen in the
first $t-1$ time steps.
With this notation, we can upper bound the welfare regret in problem
$\theta\in\{\lbprobone,\lbprobtwo\}$ as,
\[
\EE_{\theta}[\RT] = \sum_{j\geq 1} (\Val(0) - \Val(j)) \EE_\theta[\Nt(j)]
\geq \sum_{j=1}^n (\Val(0) - \Val(j)) \EE_\theta[\Nt(j)].
\]
Using the observation that the gap between the optimal and any other outcome  in
problem $\lbprobone$ is at least $1/2$,
and that when
$\HT > n^2/2 - n/2 + n(n-1)\delta/2$,  $|\WT|$ is at least $n(n-1)\delta/2$,
we obtain the following lower bound on $\EEtheta[\QT(\lbprobone)]$:
\begin{align*}
\EEtheta[\QT(\lbprobone)] &\geq\;
\frac{2n}{5}\sum_{k=1}^n \frac{1}{2} \EEtheta\left[\Ntpo(k)\right]
 \;+\; \frac{T}{5} \frac{n(n-1)\delta}{2}\,
\PPtheta\bigg(\underbrace{\vphantom{\Bigg(}
    \HT > \frac{n^2}{2} - \frac{n}{2} + \frac{1}{2}n(n-1)\delta}_{\text{event } A}\bigg), \\
&\geq\; \frac{n}{10}\Bigg(\;
\sum_{k=1}^n 2\EEtheta\left[\Ntpo(k)\right]
 \;+\; Tn(n-1)\delta\;\PPtheta(A)
\;\Bigg).
\label{eqn:lbprobonebound}
\numberthis
\end{align*}
By a similar argument regarding $\HT$ under the event $A^c$ in problem $\lbprobtwo$, we obtain the
following.
Here, we have dropped the $\EEthetap\left[\Ntpo(k)\right]$ terms which are
positive.
\begin{align*}
\EEthetap[\QT(\lbprobtwo)] &\geq\;
\frac{n}{10}
Tn(n-1)\delta\;\PPthetap(A^c).
\end{align*}
To combine these results we will apply Lemma~\ref{lem:hyptestlemma} on
$\PPtheta(A) + \PPtheta'(A^c)$ in a manner similar to~\citet{bubeck2013bounded}.
Letting $\lbproboneT, \lbprobtwoT$ denote the probability laws of the observed rewards up to round
$T$ in problems $\lbprobone, \lbprobtwo$ respectively, we obtain
\begin{align*}
\PPtheta(A) + \PPtheta'(A^c) \geq
 \frac{1}{2}\exp\left(-\KL(\lbproboneT\|\lbprobtwoT)\right)
= \frac{1}{2}\exp\left( - \frac{(n-1)\delta^2}{2}\sum_{j=1}^n\EEtheta[\NTpo(j)]  \right).
\end{align*}
For the first step we have used the fact that $A$ is measurable with respect to the 
$\sigma$-field generated by observations up to round $T$.
For the second step, observe that the outcomes $0, n+1, n+2, |\Alloc|-1$
have the same distributions under both $\lbprobone$ and $\lbprobtwo$.
For any outcome $i\in\{1,\dots, n\}$, the distribution of agent $i$ is also the same in
both problems.
For all other agents $j\neq i$,
the KL divergence between the corresponding distributions  in the two problems is $\delta^2/2$.
By combining the three previous bounds, we obtain an upper bound on 
$\EEtheta[\QT(\lbprobone)] + \EEthetap[\QT(\lbprobtwo)]$:
\begingroup
\allowdisplaybreaks
\begin{align*}
\frac{10}{n}\Big(\EEtheta&[\QT(\lbprobone)] + \EEthetap[\QT(\lbprobtwo)]\Big)
\\
&\geq\;   \sum_{k=1}^n 2\EEtheta\left[\Ntpo(k)\right]
 \;+\; T(n-1)\delta\;\Big(\PPtheta(A) + \PPtheta'(A^c)\Big), \\
&\geq\;  2 \sum_{k=1}^n \EEtheta\left[\Ntpo(k)\right]
 \;+\; \frac{1}{2}T(n-1)\delta\;
    \exp\left( - \frac{(n-1)\delta^2}{2}\sum_{k=1}^n\EEtheta[\Ntpo(k)]  \right) \\
&\geq\;  \min_{x} \left\{ 2x 
 \;+\; \frac{1}{2}T(n-1)\delta\;
    \exp\left( - \frac{(n-1)\delta^2}{2} x  \right) \right\} \\
&\geq \frac{4}{(n-1)\delta^2} \log\left( \frac{T(n-1)^2\delta^3}{8}  \right).
\end{align*}
\endgroup
Finally, we choose $\delta = \left(\frac{16}{T(n-1)^2} \right)^{1/3}$ so that the $\log$
term above can be upper bounded by a constant.
This results in the following bound:
\begin{align*}
\frac{1}{2}\Big(\EEtheta[\QT(\lbprobone)] + \EEthetap[\QT(\lbprobtwo)]\Big)
\geq\; \frac{\log(2)}{5\cdot 16^{\nicefrac{2}{3}}} \cdot \Ttwth (n-1)^{\nicefrac{4}{3}},
\end{align*}
where $\delta < \frac{1}{2(n-1)}$ is satisfied if $T>128n$. The claim follows
by observing $\;\sup_{\theta\in\Theta} \EE[\QT(\theta)]
\geq \max\left(\EEtheta[\QT(\lbprobone)], \EEthetap[\QT(\lbprobtwo)]\right)
\geq  \frac{1}{2}\EEtheta[\QT(\lbprobone)] + \frac{1}{2}\EEthetap[\QT(\lbprobtwo)]$.
\qedblack

\section{Proofs of Results in Section~\ref{sec:analysis}}
\label{sec:vcglearnproofs}

In this section, we analyse Algorithm~\ref{alg:vcglearn}.
Section~\ref{sec:boundingpeitc} controls the probability that the confidence intervals
given in~\eqref{eqn:learnterms} capture the true values.
The proofs of Theorems~\ref{thm:truthfulness},~\ref{thm:ir}, and~\ref{thm:value}
are given in Sections~\ref{sec:prooftruthfulness},~\ref{sec:proofir}, and~\ref{sec:proofvalue}
respectively.
Sections~\ref{sec:proofwelfare} and~\ref{sec:proofagentseller} prove Propositions~\ref{prop:welfare}
and~\ref{prop:agentseller} respectively.
The bounds on $\RmechT$ and $\RwT$ will be useful in bounding $\RmaxT$.
In Section~\ref{sec:prooftechnical}, we state some technical results that are used in our proofs.
We begin with some notation and definitions.

%


\paragraph{Notation \& Definitions:}
Recall that Algorithm~\ref{alg:vcglearn} proceeds in a sequence of brackets.
In our proofs,
$q_t$ will denote the bracket index round $t$ belongs to and
$T_q$ will be the number of rounds completed by $q$ brackets.
Then,
\begin{align*}
\numberthis
\label{eqn:qtdefn}
T_{\qt-1} < t \leq T_{\qt}.
\end{align*}

$\Eit$, defined below, will denote the event that agent $i$'s values are trapped by the lower and
upper confidence bounds at round $t$ when she participates truthfully.
$\Et$ denotes the same for all agents.
Here $\vubit, \vlbit$ are as defined in~\eqref{eqn:learnterms}.  We have:
\begin{align*}
\numberthis
\label{eqn:Etdefn}
\Eit=\Big(\forall\, s\in\Scal,\;\, \vali(s) \in [\vlbit(s), \vubit(s)]\Big),
\hspace{0.6in}
\Et=\bigcap_{i=1}^n \Eit.
\end{align*}

For the outcome $\alloct$ at time $t$, let $\sit = \mapfuni(\alloct)$ be the allocation for
agent $i$.
Hence, for instance, we can write $\Val(\alloct) = \valmech(\alloct) + \sum_{i=1}^n\vali(\alloct)
= \valmech(\alloct) + \sum_{i=1}^n \vali(\sit)$.
We will similarly use the following definitions for
the upper and lower bound on the welfare at time $t$, the functions $\Fit,\Git$
used in the pricing calclulation, and their optimisers.
Some of these terms have been defined before.
\begingroup
\allowdisplaybreaks
\begin{align*}
&\Vubt(\alloc) = \valmech(\alloc) + \sum_{i=1}^n \vubit(\alloct),
\hspace{0.15in}
\alloct = \argmax_{\alloc\in\Alloc}\Vubt(\alloc),
\hspace{0.25in}
\Vlbt(\alloc) = \valmech(\alloc) + \sum_{i=1}^n \vlbit(\alloct),
\\
&\Fit(\alloc) = \valmech(\alloc) + \sum_{j\neq i} \fit(\alloct),
\hspace{0.15in}
\allocmit = \argmax_{\alloc\in\Alloc}\Fit(\alloc),
\numberthis
\label{eqn:welfaredefns}
\\
&
\Gmit(\alloc) = \valmech(\alloc) + \sum_{j\neq i} \git(\alloct),
\hspace{0.25in}
\Gt(\alloc) = \valmech(\alloc) + \sum_{i=1}^n \git(\alloct).
\end{align*}
\endgroup

Next, we define some quantities related to the mean and confidence intervals defined
in~\eqref{eqn:learnterms}.
For brevity, we will denote the unclipped empirical mean in~\eqref{eqn:learnterms} by
$\vbarbarit(s)$. 
Next, we define $\betat, \sigmait$ as shown below.
With this, we can rewrite the upper and lower confidence
bounds in~\eqref{eqn:learnterms} as follows:
\begin{align*}
\numberthis
\label{eqn:betat}
&\betat = \sqrt{5\log(t-qK+1) + 2\log(|\Scal|)},
\hspace{0.35in}
\sigmait(s) =\begin{cases}
    \,0 \hspace{0.1in} &\text{if $i$ plays by bids} \\
    \,\frac{\sigma}{\sqrt{\Nit(s)}} \hspace{0.1in}&\text{otherwise}
    \end{cases},
\\
&\vubit(s) = \vbarit(s) + \betat\sigmait(s),
\hspace{1.10in}
\vlbit(s) = \vbarit(s) - \betat\sigmait(s).
\end{align*}

\subsection{Bounding $\PPt(\Etc)$}
\label{sec:boundingpeitc}

In this section, we control the probability that the upper and lower confidence
bounds do not trap the true values $\{\vali(s)\}_{i,s}$.
Recall that sub-Gaussian random variables satisfy the following concentration
property.
Let $\{X_i\}_{i=1}^n$ be $n$ i.i.d samples from a $\sigma$ sub-Gaussian
distribution and $\overline{X} = \frac{1}{n}\sum_i
X_i$ be its sample mean. Then,
\[
\PP( \overline{X} > \epsilon) \leq e^{\frac{-n\epsilon^2}{2\sigma^2}}, \hspace{0.45in}
\PP( \overline{X} < \epsilon) \leq e^{\frac{-n\epsilon^2}{2\sigma^2}}.
\]

\insertprethmspacing
\begin{lemma}
\label{lem:Eitcbound}
Assume that agent $i$ participates truthfully and let
$\Eit$ be as defined in~\eqref{eqn:Etdefn}.
When \emph{$\estmethod=\estee$},
for $t\notin\Expl$ in bracket $q$, $\PPt(\Eitc) \leq 2(t-qK)^{-\nicefrac{5}{2}}$.
Moreover, for all $T$, $\sum_{t=1, t\notin\Expl}^T\PPt(\Etc) \leq 4$.
When \emph{$\estmethod=\estopt$},
for $t\notin\Expl$ in bracket $q$, $\PPt(\Eitc) \leq 2(t-qK)^{-\nicefrac{3}{2}}$.
Moreover, for all $T$, $\sum_{t=1, t\notin\Expl}^T\PPt(\Etc) \leq 6$.
\end{lemma}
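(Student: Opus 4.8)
The plan is to bound, for each agent $i$ and each exploit round $t$, the probability $\PPt(\Eitc)$ that some true value $\vali(s)$ escapes $[\vlbit(s),\vubit(s)]$, and then to obtain $\PPt(\Etc)$ and its cumulative sum by a union bound over agents together with a reindexing of the exploit rounds. The first step is to dispose of the clipping: writing $\vbarbarit(s)$ for the unclipped empirical mean, I would note that since $\vali(s)\in[0,1]$, clipping $\vbarbarit(s)$ into $[0,1]$ can only move it towards $\vali(s)$, so whenever $\vali(s)$ lies in the \emph{unclipped} interval $[\vbarbarit(s)-\betat\sigmait(s),\,\vbarbarit(s)+\betat\sigmait(s)]$ it also lies in $[\vlbit(s),\vubit(s)]$ (checking the three cases $\vbarbarit(s)<0$, $\vbarbarit(s)\in[0,1]$, $\vbarbarit(s)>1$ is routine). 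This lets me apply the sub-Gaussian tail bound stated above directly to $\vbarbarit$. A single application gives, for fixed $i,s$ and a fixed sample count $m$, $\PP(|\widehat\mu_m-\vali(s)|>\betat\sigma/\sqrt m)\le 2\exp(-\betat^2/2)$, and since $\betat^2=5\log(t-qK+1)+2\log|\Scal|$ this equals $2(t-qK+1)^{-\nicefrac{5}{2}}|\Scal|^{-1}$, independent of $m$.

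For $\estmethod=\estee$ the count $\Nit(s)$ is deterministic: it equals the number $q$ of completed explore phases, because only explore-phase data are used and each phase contributes exactly one observation of $s$. Thus a single tail bound per pair $(i,s)$ suffices, and a union bound over the $|\Scal|$ allocations cancels the $|\Scal|^{-1}$, yielding $\PPt(\Eitc)\le 2(t-qK+1)^{-\nicefrac{5}{2}}\le 2(t-qK)^{-\nicefrac{5}{2}}$.

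The main obstacle is $\estmethod=\estopt$, where $\Nit(s)$ is data-dependent (allocations in the exploit rounds are chosen adaptively), so I cannot fix $m$ in advance. The key observation is that at round $t$ of bracket $q$ the total number of explore rounds so far is exactly $qK$, hence there are $t-qK$ exploit rounds, and therefore $\Nit(s)$ can take at most $t-qK+1$ distinct values. I would then take a union bound over these at most $t-qK+1$ possible counts: the per-count probability $2(t-qK+1)^{-\nicefrac{5}{2}}|\Scal|^{-1}$ gets multiplied by $(t-qK+1)$, which is precisely the role the $+1$ inside $\log(t-qK+1)$ plays, giving $2(t-qK+1)^{-\nicefrac{3}{2}}|\Scal|^{-1}$ per pair; the union over $s$ then yields $\PPt(\Eitc)\le 2(t-qK)^{-\nicefrac{3}{2}}$. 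So handling the adaptive count costs exactly one power of $(t-qK)$, which is the only essential difference from the $\estee$ case and explains the drop from exponent $\nicefrac{5}{2}$ to $\nicefrac{3}{2}$.

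Finally, $\PPt(\Etc)\le\sum_i\PPt(\Eitc)$ by a union bound over agents. For the cumulative bound I would use that, as $t$ ranges over all exploit rounds, the quantity $t-qK$ runs through the consecutive integers $1,2,3,\dots$, since bracket $q$ contributes a block of $\lfloor\tfrac56 Kq^{\nicefrac{1}{2}}\rfloor$ exploit rounds immediately following the previous bracket's block. Hence the per-agent sums reduce to convergent series: $\sum_{t\notin\Expl}\PPt(\Eitc)\le 2\sum_{u\ge 1}u^{-\nicefrac{5}{2}}\le 4$ when $\estmethod=\estee$ and $\le 2\sum_{u\ge 1}u^{-\nicefrac{3}{2}}\le 6$ when $\estmethod=\estopt$ (these are $2\zeta(\nicefrac{5}{2})\approx 2.68$ and $2\zeta(\nicefrac{3}{2})\approx 5.22$, which is exactly where the constants $4$ and $6$ come from), and combining with the union over agents delivers the stated cumulative bounds on $\sum_{t\notin\Expl}\PPt(\Etc)$.
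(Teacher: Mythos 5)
Your proof is correct and takes essentially the same route as the paper's: dispose of the clipping (you via non-expansiveness of the projection onto $[0,1]$, the paper via a short case analysis), apply the fixed-count sub-Gaussian tail bound, handle the random count $\Nit(s)$ in the $\estopt$ case by a union bound over its at most $t-qK+1$ possible values---which is exactly the paper's mechanism for the drop from exponent $\nicefrac{5}{2}$ to $\nicefrac{3}{2}$---and sum over exploit rounds using the fact that $t-q_tK$ runs through consecutive integers across brackets, giving the convergent series behind the constants $4$ and $6$. The only blemish is your closing sentence: a union bound over agents yields $4n$ and $6n$ for $\sum_{t\notin\Expl}\PPt(\Etc)$ (as in Lemma~\ref{lem:Etcbound}), not $4$ and $6$; the cumulative bound in the statement is evidently a typo for $\sum_{t\notin\Expl}\PPt(\Eitc)$, and that per-agent sum is precisely what you (and the paper's own proof) establish.
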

\begin{proof}
If the agent participates by bids truthfully, then $\vubit = \vlbit = \vali$ and the claim
is trivially true.
For agents participating by rewards, we will first prove this for $\estmethod=\estopt$.
Consider the event $\{\vali(s) > \vubit(s)\}$ and recall the
definitions in~\eqref{eqn:learnterms}.
Let $\vbarbarit(s)$ be the unclipped empirical mean in~\eqref{eqn:learnterms}.
Let $\vbbbarit(s) = \max(0, \vbarbarit(s))$ and $\vuuubit(s) = \vbbbarit(s) + \betat\sigmait(s)$.
Since $\vbarit(s) = \min(1, \vbbbarit(s))$, we have $\vuuubit(s) \geq \vubit(s)$.
However, the following calculations show that 
$\PP\left(\vali(s) > \vubit(s)\right) =  \PP(\vali(s) > \vuuubit(s))$.
\begin{align*}
&\PPt\left(\vali(s) > \vubit(s)\right)
\\
&= \PPt\left(\vali(s) > \vubit(s)| \vbarbarit(s) \geq 1\right) \PPt(\vbarbarit(s) \geq 1)
 + \PPt\left(\vali(s) > \vubit(s)| \vbarbarit(s) < 1\right) \PPt(\vbarbarit(s) < 1) \\
&= \PPt\left(\vali(s) > \vuuubit(s)| \vbbbarit(s) \geq 1\right) \PPt(\vbbbarit(s) \geq 1)
 + \PPt\left(\vali(s) > \vuuubit(s)| \vbbbarit(s) < 1\right) \PPt(\vbbbarit(s) < 1) \\
&= \PPt\left(\vali(s) > \vuuubit(s)\right).
\end{align*}
Here, the second step uses two arguments.
First, when $\vbarbarit(s)<1$, then $\vubit(s) = \vuuubit(s)$.
Second, when $\vbarbarit(s)\geq 1$, then $\PPt(\vali(s) > \vubit(s))
= \PPt(\vali(s) > \vuuubit(s)) = 1$ since $\vali(s) \leq 1 < \vubit(s) \leq \vuuubit(s)$.
We can now bound,
\begingroup
\allowdisplaybreaks
\begin{align*}
\PPt(\vali(s) > \vubit(s))
&= \PPt\big(\vali(s) > \max(0, \vbarbarit(s)) + \betat\sigmait(s)\big)
\\
&\leq \PPt\bigg(\vali(s) \,>\, \frac{1}{\Nit(s)}\sum_{\ell=1}^{t-1}\Xil\indfone(\sit = s) +
            \betat\frac{\sigma}{\sqrt{\Nit(s)}} \bigg) \\
&\leq \PPt\bigg(\exists\tau\in\{q,\dots,t-(K-1)q\},\;
        \vali(s) \,>\, \frac{1}{\tau}\sum_{\ell=1}^\tau X'_{i\ell} + \betat\frac{\sigma}{\sqrt{\tau}}
              \bigg) \\
&\leq
\sum_{\tau=q}^{t-qK +q} \PPt\bigg(
        \vali(s) \,>\, \frac{1}{\tau}\sum_{\ell=1}^\tau X'_{i\ell} + \betat\frac{\sigma}{\sqrt{\tau}}
              \bigg)
\;\leq\; 
(t-qK + 1) e^{-\betat^2/2}
\\
&\leq\; \frac{1}{|\Scal|(t-qK+1)^{\nicefrac{3}{2}}}
\end{align*}
\endgroup
In the second step,
if $\vbarbarit(s)$ was clipped below at $0$, then we can replace it with a smaller quantity.
In the third step, we have used the fact that $\Nit(s)$ would take a value
in $\{q,\dots,t-(K-1)q\}$ since there have been $qK$ exploration rounds thus far,
during which we have collected
rewards from agent $i$ for allocation $s$ exactly $q$ times.
$\{X'_{i\ell}\}_{\ell=1}^{\tau}$ denotes the rewards $\Xil$ collected when $\Nit(s)=\tau$.
The fourth step uses a union bound and the fourth step applies the sub-Gaussian condition.
A similar bound can be shown for the event $\{\vali(s) < \vlbit(s)\}$.
The first claim follows
by applying a union bound over these two events and over all $s\in\Scal$.
The second claim follows from the observation
$\sum_{t=1}^\infty t^{-3/2} \leq 1 + \int_1^\infty t^{-3/2} \leq 3$.

Now consider $\estmethod=\estee$.
The calculations above can be repeated, except  $\Nit(s)=\qt$~\eqref{eqn:qtdefn}
deterministically for all $i,t$.
(When $\estmethod=\estopt$, $\Nit(s)$ is random and depends
on the reward realised.)
Therefore, we will not need the sum over $\tau\in\{q,\dots, t-(K-1)q\}$,
resulting in the
bound $e^{-\betat^2/2}\leq \frac{1}{|\Scal|(t-qK)^{\nicefrac{5}{2}}}$.
The second claim follows from 
$\sum_{t} t^{-5/2} \leq  2$.
\end{proof}

\insertprethmspacing
\begin{lemma}
\label{lem:Etcbound}
Assume that all agents participate truthfully and let $\Et$ be as defined in~\eqref{eqn:Etdefn}.
When \emph{$\estmethod=\estee$},
for $t\notin\Expl$ in bracket $q$, $\PPt(\Etc) \leq 2n(t-qK)^{-\nicefrac{5}{2}}$.
Moreover, for all $T$, $\sum_{t=1, t\notin\Expl}^T\PPt(\Etc) \leq 4n$.
When \emph{$\estmethod=\estopt$},
for $t\notin\Expl$ in bracket $q$, $\PPt(\Etc) \leq 2n(t-qK)^{-\nicefrac{3}{2}}$.
Moreover, for all $T$, $\sum_{t=1, t\notin\Expl}^T\PPt(\Etc) \leq 6n$.
\end{lemma}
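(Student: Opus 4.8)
The plan is to reduce Lemma~\ref{lem:Etcbound} to the single-agent bounds of Lemma~\ref{lem:Eitcbound} by a union bound over agents, which is all that is required. By the definition in~\eqref{eqn:Etdefn}, $\Et = \bigcap_{i=1}^n \Eit$, so taking complements gives $\Etc = \bigcup_{i=1}^n \Eitc$, and hence $\PPt(\Etc) \leq \sum_{i=1}^n \PPt(\Eitc)$. Since all agents are assumed to participate truthfully, the hypothesis of Lemma~\ref{lem:Eitcbound} holds for each individual agent $i$, so its conclusions may be applied term by term.

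For the per-round statements, I would substitute the single-agent per-round bound directly. When $\estmethod=\estee$, each $\PPt(\Eitc) \leq 2(t-qK)^{-\nicefrac{5}{2}}$ for $t\notin\Expl$ in bracket $q$, and summing over the $n$ agents yields $\PPt(\Etc) \leq 2n(t-qK)^{-\nicefrac{5}{2}}$. Identically, when $\estmethod=\estopt$, each $\PPt(\Eitc) \leq 2(t-qK)^{-\nicefrac{3}{2}}$ gives $\PPt(\Etc) \leq 2n(t-qK)^{-\nicefrac{3}{2}}$.

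For the cumulative statements, I would interchange the two finite sums, writing $\sum_{t=1,\,t\notin\Expl}^T \PPt(\Etc) \leq \sum_{i=1}^n \sum_{t=1,\,t\notin\Expl}^T \PPt(\Eitc)$, and then apply the cumulative single-agent bound of Lemma~\ref{lem:Eitcbound} to each inner sum. That bound is $4$ per agent when $\estmethod=\estee$ and $6$ per agent when $\estmethod=\estopt$, so the outer sum over the $n$ agents produces $4n$ and $6n$ respectively, matching the claim.

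There is essentially no obstacle here, as the result is a direct corollary of Lemma~\ref{lem:Eitcbound}. The only point meriting care is that the per-agent bounds require the agent in question to be truthful, which is guaranteed by the blanket assumption that all agents participate truthfully, so that all $n$ per-agent bounds are simultaneously valid. Notably, nothing about the joint distribution or the (in)dependence of the events $\{\Eitc\}_{i=1}^n$ across agents is needed, since the union bound is agnostic to how these events correlate.
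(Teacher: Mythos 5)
Your proposal is correct and is exactly the paper's own argument: the paper proves Lemma~\ref{lem:Etcbound} by a union bound over the $n$ agents applied to the per-agent bounds of Lemma~\ref{lem:Eitcbound}, which is precisely your decomposition $\Etc = \bigcup_{i=1}^n \Eitc$ followed by term-by-term substitution of the per-round and cumulative bounds. Your write-up simply makes explicit the sum interchange and the truthfulness hypothesis that the paper leaves implicit.
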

\begin{proof}
This follows by an application of the union bound over the agents $i\in\{1,\dots,n\}$
on the results of Lemma~\ref{lem:Eitcbound}.
\end{proof}


\subsection{Proof of Theorem~\ref{thm:truthfulness}}
\label{sec:prooftruthfulness}

We will first prove Theorem~\ref{thm:truthfulness}.
We begin with the following Lemma.
To state it, consider any strategy $\pi$ that agent $i$ may follow when reporting her rewards.
Let $\utilit$ be the utility of the agent on round $t$
when she reports truthfully on rounds $1$ to $t-1$ (recall that the allocation the agent receives on
round $t$ depends on the rewards $\{\Yil\}_{\ell=1}^{t-2}$
she reported on rounds in the first $t-1$ rounds),
let $\utilpiit$ be the utility of the agent when she follows strategy $\pi$ from rounds 1 through
$t-1$,
and let $\utilitmot$ be the utility of agent on round $t$ when she follows $\pi$ on rounds
1 thorough $t-2$ and then switches to truth-telling at the end of round $t-1$.
If participating by bids, this means it will change the bid function, and if participating by rewards,
it means it will replace the reported rewards $\Yil$ for rounds $1,\dots,t-2$ with the true
rewards $\Xil$ and then report truthfully at round $t-1$.
Agent $i$'s allocation at round $t$ when the agent replaces her rewards this way will be different to
the allocation chosen when simply reporting truthfully since  her past
untruthful behaviour will have affected the outcomes chosen by the mechanism in the previous rounds
(this is particularly the case when $\estmethod=\estopt$).
We should also emphasise that this behaviour of replacing the rewards is only for the purposes
of our proof below.
We have the following result.

\insertprethmspacing
\begin{lemma}
\label{lem:decomposition}
Let $\utilit,\utilpiit,\utilitmot$ be as defined above.
Then,
\begin{align}
\label{eqn:decomposition}
\UpiiT - \UiT = \sum_{t=1}^T \utilipit - \utilitmot  + \sum_{t=2}^T \utilitmot - \utilit.
\end{align}
\begin{proof}
The claim follows by adding and subtracting $\sum_{t=1}^T \utilitmot$, rearranging the terms,
and noting that $\utilipptt{0}{1} = \utiliitt{i}{1}$.
\end{proof}
\end{lemma}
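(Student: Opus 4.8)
The plan is to prove the identity by a standard hybrid (telescoping) argument that isolates, for each round $t$, the benefit of reporting via $\pi$ on round $t-1$ from the residual effect that $\pi$'s earlier reports have on round $t$. First I would start directly from the definitions $\UpiiT = \sum_{t=1}^T \utilpiit$ and $\UiT = \sum_{t=1}^T \utilit$, so that the left-hand side is literally $\UpiiT - \UiT = \sum_{t=1}^T (\utilpiit - \utilit)$. The whole lemma is then an exercise in reorganising this single sum.

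The one algebraic step is to insert the intermediate quantity $\utilitmot$ (the utility on round $t$ under the policy that follows $\pi$ on rounds $1,\dots,t-2$ and then reports truthfully at the end of round $t-1$) by adding and subtracting $\sum_{t=1}^T \utilitmot$. This splits the difference into $\sum_{t=1}^T(\utilipit - \utilitmot)$, in which the two trajectories agree on rounds $1,\dots,t-2$ and differ \emph{only} in the report on round $t-1$, and $\sum_{t=1}^T(\utilitmot - \utilit)$, in which both trajectories report truthfully on round $t-1$ so that only the carried-over effect of the earlier $\pi$-reports remains. I would stress in the write-up that these two groupings are exactly the ``instantaneous gain'' and ``delayed effect'' terms referenced in the proof sketch of Theorem~\ref{thm:truthfulness}, which is the reason for introducing $\utilitmot$ in the first place.

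The only point needing care is the boundary term at $t=1$. I would argue that $\utilitmot$ and $\utilit$ coincide at $t=1$: for $t=1$ there are no earlier rounds on which the two policies could differ (``following $\pi$ on rounds $1,\dots,-1$'' is vacuous), so both describe the very first round played with no prior history, i.e. $\utilipptt{0}{1} = \utiliitt{i}{1}$. Hence the $t=1$ summand of the second sum is zero and its lower limit can be raised to $t=2$, giving precisely the claimed decomposition in~\eqref{eqn:decomposition}. I do not expect a genuine obstacle here; the content of the lemma is purely bookkeeping of the three utility notions, and the only thing to get right is aligning the indices so that $\utilpiit$ and $\utilitmot$ share the same history through round $t-2$ while $\utilitmot$ and $\utilit$ share truthful play on round $t-1$.
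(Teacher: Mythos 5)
Your proposal is correct and is essentially identical to the paper's own proof: both add and subtract $\sum_{t=1}^T \utilitmot$, rearrange, and observe that the $t=1$ term of the second sum vanishes because $\pi^{0}$ and truthful play coincide on the first round ($\utilipptt{0}{1} = \utiliitt{i}{1}$), which is exactly your boundary argument.
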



When applying the above Lemma,
we will denote the strategy which follows $\pi$ up to round $t-2$ and switches to truth-telling
at the end of round $t-1$ as $\pi^{t-1}$.
We will denote the outcomes at round $t$
when following $\pi$, $\pi^{t-1}$ and truth-telling 
by $\allpit,\allpptt{t-1}{t}$ and $\alloct$ respectively,
and the allocations for agent $i$ by $\sipit,\sipptt{r}{t},\sit$ for respectively;
therefore, $\sipit=\mapfuni(\allpit)$, $\sipptt{r}{t}=\mapfuni(\allpptt{t-1}{t}$,
and $\sit=\mapfuni(\alloct)$.

\subsubsection{Proof of Theorem~\ref{thm:truthfulness}.1}
\label{sec:prooftruthfulnessetc}


We begin with Lemma~\ref{lem:decomposition}.
First, consider the second summation in its RHS, where we claim that each term inside the summation
is $0$.
To see this, note that $\pi^{t-1}$ is also participating truthfully at round $t$.
It has replaced its reported rewards with its true realised rewards in the previous rounds.
The mechanism only uses rewards reported in the exploration rounds to decide outcomes on
the exploitation rounds, and the outcomes in the exploration rounds are chosen independent
of the bids/rewards reported by the agent.
As the outcome and prices in round $t$ will be the same for both policies,
we have $\utilitmot = \utilit$.
(As we will see shortly in Section~\ref{sec:prooftruthfulnessopt}, this will not be
the case when $\estmethod=\estopt$, and the second sum will be non-zero.)

Now turn to the first summation in the RHS of~\ref{lem:decomposition}.
In the remainder of the proof,
$\git$ will denote the appropriate quantity, either $\vlbit$ or $\vubit$ depending
on the value of hyperparameter $\pricemethod$,
for agent $i$ when following $\pi^{t-1}$.
Since, at time $t$, she has switched to being truthful and only rewards from the exploration
phase are used in computing outcomes, 
this will be the same as had she been truthful throughout.
Similarly, let $\Gt$ denote either $\Vlbt$ or $\Vubt$ 
when agent $i$ follows  $\pi^{t-1}$.
Using these, we can write for $t\notin\Expl$,
\begingroup
\allowdisplaybreaks
\begin{align*}
\numberthis
\label{eqn:eetruthcalcs}
\utilipit - \utilitmot \;&=\; \bigg(\vali(\sipit) +
                                \Big(\valmech(\allpit) + \sum_{j\neq i}\gjt(\allpit) \Big) -
                                \max_\alloc\Fmit(\alloc) \bigg) \\
            &\hspace{0.8in} -   \bigg(\vali(\sitmot) +
                                \Big(\valmech(\alltmot) + \sum_{j\neq i}\gjt(\alltmot) \Big) -
                                \max_\alloc\Fmit(\alloc) \bigg), \\
    &=\;  \vali(\sipit) - \vali(\sitmot) + 
                \Big(\valmech(\allpit) + \sum_{j\neq i}\gjt(\allpit) \Big) -
                \Big(\valmech(\alltmot) + \sum_{j\neq i}\gjt(\alltmot) \Big), \\
    &=\;  (\vali(\sipit) - \git(\sipit)) + (\git(\sitmot) - \vali(\sitmot)) \,+\, \\
     &\hspace{0.8in}
       \Big(\underbrace{\valmech(\allpit) + \sum_{i=1}^n\gjt(\allpit) }_{\Gt(\allpit)}\Big) -
      \Big(\underbrace{\valmech(\alltmot) + \sum_{i=1}^n\gjt(\alltmot)}_{\Gt(\alltmot)} \Big),
        \\
    &\leq\;(\vali(\sipit) - \git(\sipit)) + (\git(\sitmot) - \vali(\sitmot)).
\end{align*}
\endgroup
Here, the first step substitutes expressions for $\utilipit,\utilitmot$ from
Fact~\ref{fac:vcgutils}.
The $\max_\alloc\Fmit(\alloc)$ terms are cancelled out in the second step; 
they will be the same for both policies $\pi, \pi^{t-1}$ since
it is computed using the rewards reported by other agents in rounds $1,\dots,t-1$
and hence does not depend on the fact that agent $i$ has switched policies in the current round.
The third step adds and subtracts $\git(\sipit) + \git(\sitmot)$ and observes
that the last two terms are $\Gt(\allpit), \Gt(\alltmot)$, where, recall
$\Gt$ is the appropriate quantity computed \emph{after}
agent $i$ switches to truthful reporting.

To obtain the last step,
recall that $\alloct=\alltmot = \argmax_\alloc \Vubt(\alloc)$ by line~\ref{lin:alloct} of
Algorithm~\ref{alg:vcglearn}.
Moreover,  when $\estmethod=\estee$,
$\vlbit, \vbarit, \vubit$ are vertically shifted functions;
for agents participating by bids, they are identical while for agents participating by rewards,
 we use only one observation per allocation
per agent in each exploration phase.
Therefore,
$\Vlbt, \Vbart, \Vubt$ are also vertically shifted functions and hence
 $\alloct = \alltmot = \argmax\Vubt = \argmax\Vlbt = \argmax\Vbart$.
Therefore, regardless of the value of $\pricemethod$, we have $\Gt(\alltmot) \geq \Gt(\allpit)$.
We emphasise that the above calculations do not use the fact that $\vubjt$ is an upper confidence
bound on $\valj$ for agents $j\neq i$; this may not be true since agent $j$ may not be truthful.
Instead, it is simply treated as a function of rewards reported by agent $j$ in previous rounds.

To complete the proof, we can use the fact that that the $\git$ terms
are computed under truthful reporting from agent $i$.
If the agent participates by bids, then $\git = \vali$ and hence
$\utilipit - \utilitmot \leq 0$ a.s.
Combining this with the fact that the utilities
for all policies are the same during $t\in\Expl$, we have
$\UpiiT - \UiT \leq 0$ a.s..
For an agent participating by rewards,
under $\Eit$,
\begin{align*}
\vali(\sipit) - \git(\sipit) + \git(\sitmot) - \vali(\sitmot)
&= 
\vali(\sipit) - \vubit(\sipit) + \vubit(\sitmot) - \vali(\sitmot)
\numberthis
\label{eqn:visipitbound}
\\
&\leq
2\betat\sigmait(\mapfuni(\allocmit)) \;\leq\;
      2\sqrt{2}\betat \sigma K^{\nicefrac{1}{3}} t^{\nicefrac{-1}{3}}.
\end{align*}
Above, we have used the fact
 the widths of the confidence intervals are all equal.
For the last step, we use the following argument to bound $\sigmait(s)$ for
any $s\in\Scal$.
It uses Lemma~\ref{lem:qTbound}
and the fact that at time $t$, agent $i$ will have experienced all allocations $s\in\Scal$
at least $q_t$ times.
\begin{align*}
\forall\; i\in\{1,\dots, n\},\;t\geq 1,\; s\in\Scal,\quad
\sigmait(s) = \sigma/\sqrt{\Nit(s)} \leq \sigma/\sqrt{q_t} \leq \sqrt{2} K^{1/3} t^{-1/3}.
\numberthis
\label{eqn:sigmaitsbound}
\end{align*}
We will use the bound in~\eqref{eqn:sigmaitsbound} repeatedly in our proofs.

Therefore,~\eqref{eqn:visipitbound}
leads us to $\EE[\utilipit - \utilitmot|\Eit] \leq 2\sqrt{2}\betat \sigma
K^{\nicefrac{1}{3}} t^{\nicefrac{-1}{3}}$
and consequently,
\[
\EE[\UpiiT - \UiT] = \sum_{t} \EE[\utilipit - \utilitmot|\Eit] + \sum_t\PP(\Eit)
\leq 3\sqrt{2}\betaT K^{\nicefrac{1}{3}} T^{\nicefrac{2}{3}} + 4.
\]
The last step uses Lemma~\ref{lem:Eitcbound} to bound $\sum_t\PP(\Eit)$.
The claim follows by substituting for $\betaT$~\eqref{eqn:betat}.
\qedwhite

\subsubsection{Proof of Theorem~\ref{thm:truthfulness}.2}
\label{sec:prooftruthfulnessopt}

The main difference in applying Lemma~\ref{lem:decomposition} in the $\estmethod=\estopt$ case
 is that now the mechanism uses all of the rewards reported by the agents,
and this needs to be accounted for when bounding the two summations.
Unlike in Section~\ref{sec:prooftruthfulnessetc}, we cannot take values such as $\vubit,\vlbit$
to be the same for $\pi^{t-1}$ and truth-telling because now the mechanism is using reported rewards
from all rounds to determine the outcome at round $t$; while we have swapped all false
reports with the true rewards in $\pi^{t-1}$, the outcomes in the rounds outside the exploration
phase will have been different, and therefore so are the rewards realised and the quantities
computed based on the rewards.
Therefore,
in this proof, we will annotate quantities related to strategy $\pi^{t-1}_t$ at time $t$ with a prime.
For example, $\vpubit:\Scal\rightarrow\RR$ (see~\eqref{eqn:learnterms})
will be the upper confidence bounds at time $t$ for agent $i$ when following $\pi^{t-1}$.
On the same note, $\Epit$ denotes the event that agent $i$'s true values fall within
the confidence interval at time $t$ when she follows $\pi^{t-1}$.

For the terms in the first summation in the RHS of Lemma~\ref{lem:decomposition},
by repeating the calculations in~\eqref{eqn:eetruthcalcs}, we obtain
(using our above notation),
\begin{align*}
\utilipit - \utilitmot = (\vali(\sipit) - \gpit(\sipit)) + (\gpit(\sitmot) - \vali(\sitmot)) +
                            \Gpt(\allpit) - \Gpt(\alltmot).
\end{align*}
Recall that $(\gpit, \Gpt)$ denote
either $(\vplbit, \Vplbt)$ or $(\vpubit, \Vpubt)$ as per the value of
$\pricemethod$ being $\pricesel$ or $\priceage$.
They are computed in round $t$ under truthful reporting.
If agent $i$ participates by bids, then $\gpit = \vali$ and hence $\vali(s) - \gpit(s) = 0$ for all
$s$.
If she participates by rewards, then for all choices of $\pricemethod$,
\begin{align*}
\label{eqn:gdiffreasoning}
\numberthis
\EE[(\vali(\sipit) - \gpit(\sipit)) + (\gpit(\sitmot) - \vali(\sitmot)|\Epit] \leq
    2\sqrt{2}\sigma \betat K^{\nicefrac{1}{3}} t^{\nicefrac{-1}{3}}.
\end{align*}
This follows from the observation that when $\pricemethod=\priceage$,
the first term is less than $0$ while the second is less than
$2\betat\sigmait(\sitmot) \leq 2\sqrt{2} \sigma\betat \Kottmot$ by Lemma~\ref{lem:qTbound}
and the fact that there have been $\qt$ exploration phases (see~\eqref{eqn:sigmaitsbound});
a similar argument holds for $\pricemethod=\pricesel$, but with the terms reversed.

Next, we use $\alltmot=\argmax_\alloc\Vpubt(\alloc)$
to bound the difference $\Gpt(\allpit) - \Gpt(\alltmot)$.
When $\pricemethod=\priceage$,
\begin{align*}
&\Gpt(\allpit) - \Gpt(\alltmot) = \Vpubt(\allpit) - \Vpubt(\alltmot) \leq  0.
\end{align*}
When $\pricemethod=\pricesel$, we can use the fact that at round $t\notin\Expl$
all allocations will have been experienced by each agent at least $\qt$ times~\eqref{eqn:qtdefn}
to obtain,
\begin{align*}
\Gpt(\allpit) - \Gpt(\alltmot) &= \Vplbt(\allpit) - \Vplbt(\alltmot) \\
&\leq 
\Vpubt(\allpit) - \Vpubt(\alltmot) + 2\betat\sum_i(\sigmapit(\sitmot) - \sigmapit(\allpit)) \\
&\leq 2\betat\sum_i\sigmapit(\alltmot)
 \leq 2\sigma \betat\sum_i \frac{1}{\sqrt{\qt}}
\leq 2\sqrt{2}\sigma \betat n \Kottmot.
\end{align*}
The last step uses Lemma~\ref{lem:qTbound}.
Now summing over all $t$ and using Lemma~\ref{lem:subounds}, we obtain
\begin{align*}
\label{eqn:optfirstbound}
\numberthis
\sum_{t\notin\Expl} \EE[\utilipit - \utilitmot | \Epit]
\leq
\begin{cases}
3\sqrt{2}\sigma \kappaiv \betaT \KotTtt \hspace{0.2in}&\text{if $\pricemethod=\priceage$}, \\
3\sqrt{2}\sigma (n + \kappaiv) \betaT \KotTtt  \hspace{0.2in}&\text{if $\pricemethod=\pricesel$}.
\end{cases}
\end{align*}

We now move to the second summation in the RHS of Lemma~\ref{lem:decomposition}.
To bound this term, we will use the fact that 
all agents except $i$ are adopting stationary policies.
Therefore,
the rewards reported by any agent $j\neq i$ for any $s\in\Scal$ concentrates around some mean,
and we can apply Lemma~\ref{lem:Eitcbound} for that agent.
For the remainder of this proof,
$\valj(s)$ will denote the mean of this distribution.
(This may not be equal to the true value of agent $j$ for allocation $s$ since she may not be
truthful.)
$\Et,\Ept$ denote the events that $\valj(s)$ falls within
the confidence intervals $(\vlbjt(s),\vubjt(s))$, $(\vplbjt(s),\vpubjt(s))$ respectively
for all agents $j$ at round $t$.
Here, recall, the former interval is obtained for agent $j$ when agent $i$ is being truthful
from the beginning
and the latter when $i$ is following $\pi^{t-1}$.
We now expand each term in the second summation as follows,
\begingroup
\allowdisplaybreaks
\begin{align*}
\numberthis
\label{eqn:opttruthsecondsum}
\utilitmot - \utilit \;&=\; \bigg(\vali(\sitmot) +
                                \Big(\valmech(\alltmot) + \sum_{j\neq i}\gpjt(\alltmot) \Big) -
                                \max_\alloc\Fpmit(\alloc) \bigg) \\
               &\hspace{1.0in}  - \bigg(\vali(\sit) +
                                \Big(\valmech(\alloct) + \sum_{j\neq i}\gjt(\alloct) \Big) -
                                \max_\alloc\Fmit(\alloc) \bigg), \\
    &=\;  \Big(\vali(\sitmot) - \gpit(\sitmot)\Big) + \Big(\git(\sit) - \vali(\sit)\Big) \\
       &\hspace{1in}     \,+\, 
            \Big(\Gpt(\alltmot) -\Gt(\alloct)\Big) + 
        \Big(\max_\alloc \Fmit(\alloc) -  \max_\alloc \Fpmit(\alloc)\big)\Big).
\end{align*}
The first step uses the expressions in Fact~\ref{fac:algoutils}, while the second step
adds and subtracts $\gpit(\sitmot) + \git(\sit)$ and rearranges the terms.
To bound all four terms in~\eqref{eqn:opttruthsecondsum},
we will use that $\git$, $\gpit$, $\Gt$, $\Gpt$, $\Fmit$, $\Fpmit$ are all
computed under truthful reporting from agent $i$,
that all other agents are adopting stationary policies,
 and that  each agent has experienced each allocation at least
$\qt$ times~\eqref{eqn:qtdefn} in round $t$.
The first two terms are $0$ for an agent participating by bids.
If participating by rewards,
via a similar reasoning to that used in~\eqref{eqn:gdiffreasoning},
\[
\EE[(\vali(\sitmot) - \gpit(\sitmot)) + (\git(\sit) - \vali(\sit))|\Epit, \Eit]
\leq 2\sqrt{2} \sigma\betat \Kottmot.
\]
%
%
To bound the third term in~\eqref{eqn:opttruthsecondsum},
observe that $\Vpubt-\Vubt$ is uniformly bounded under $\Et\cap\Ept$.
\begin{align*}
\Vpubt(\alloc) - \Vubt(\alloc)
&= \sum_i (\vpubit(\alloc) - \vubit(\alloc))
= \sum_i (\vpubit(\alloc) - \vali(\alloc)) + \sum_i(\vali(\alloc) - \vubit(\alloc))
\\
&\leq 2\sqrt{2} \sigma\betat n \Kottmot.
\end{align*}
\endgroup
Observing that $\alltmot = \argmax_\alloc\Vpubt(\alloct)$ and
$\alloct = \argmax_\alloc\Vubt(\alloct)$, we use Lemma~\ref{lem:boundmax} to obtain,
\begin{align*}
\Gpt(\alltmot) - \Gt(\alloct)
&= \Vpubt(\alltmot) - \Vubt(\alloct) + \betat\kappag\sum_j(\sigmajt(\alloct) - \sigmapjt(\alltmot))
\\
&\leq (2+\kappag)\sqrt{2} \sigma\betat n \Kottmot.
\end{align*}
Here $\kappag=0$ if $\pricemethod=\priceage$ and $\kappag=2$ if $\pricemethod=\pricesel$.
Above,
the first step rewrites the expression for $\Gt, \Gpt$ in terms of
$\Vubt$,  $\Vpubt$, and $\kappag$.
The second step drops the $\sigmapit(\alltmot)$ terms and bounds the $\sigmait(\alltmot)$ terms
using Lemma~\ref{lem:qTbound}.
To bound the last term, we observe that $\Fpmit-\Fmit$ is uniformly bounded under $\Et\cap\Ept$.
Using a similar reasoning to~\eqref{eqn:gdiffreasoning},
\begin{align*}
\Fpt(\alloc) - \Ft(\alloc) &= \sum_i (\fpit(\alloc) - \fit(\alloc))
= \sum_i (\fpit(\alloc) - \vali(\alloc)) + \sum_i(\vali(\alloc) - \fit(\alloc))
\\
&\leq 2\sqrt{2} \sigma\betat n \Kottmot.
\end{align*}
By Lemma~\ref{lem:boundmax}, we therefore have,
$
\max_\alloc \Fmit(\alloc) -  \max_\alloc \Fpmit(\alloc)
\leq 2\sqrt{2} \sigma\betat n \Kottmot.
$
Summing over all $t$ and using Lemma~\ref{lem:subounds}, we can now bound the second summation
in Lemma~\ref{lem:decomposition}.
\begin{align*}
\label{eqn:optsecondbound}
\numberthis
\sum_{t\notin\Expl} \EE[\utilitmot - \utilit | \Ept,\Et]
\leq
3\sqrt{2}\sigma \big(\kappaiv + n(4+\kappag/2)\big) \betaT \KotTtt.
\end{align*}
Finally,
we can combine the results in~\eqref{eqn:optfirstbound},~\eqref{eqn:optsecondbound} to
obtain
\begingroup
\allowdisplaybreaks
\begin{align*}
\EE[\UpiiT - \UiT] 
    &= \sum_{t\notin\Expl} \EE[\utilipit - \utilit]
    = \sum_{t\notin\Expl} \EE[\utilipit - \utilit|\Et,\Ept]
            + \sum_{t\notin\Expl}\PP(\Etc\cup\Eptc) \\
    &= \sum_{t\notin\Expl} \EE[\utilipit - \utilitmot|\Epit]  +
                    \sum_{t\notin\Expl} \EE[\utilitmot - \utilit|\Et,\Ept] + 12n \\
    &\leq 3\sqrt{2}\sigma \big(2\kappaiv + n(4+\kappag)\big) \betaT \KotTtt + 12n.
\end{align*}
\endgroup
The first step observes that the allocations and prices are the same during
the exploration phase rounds $\Expl$.
The third step uses Lemma~\ref{lem:Etcbound}, although  $\Ept$
now refers to an event when the strategy changes at each step.
The claim follows by substituting for $\betaT$~\eqref{eqn:betat} and then
observing $\kappag\leq 2$ and $\kappaiv\leq 1$.
\qedwhite

\subsection{Proof of Theorem~\ref{thm:ir}}
\label{sec:proofir}

In this Section, we prove the individual rationality properties of Algorithm~\ref{alg:vcglearn}.
In our proofs, we will only assume that agent $i$ is participating truthfully.
While the computed upper/lower confidence bounds of all agents
will appear in the analysis, we will not use the fact that $\vlbjt \leq \valj \leq \vubjt$
for $j\neq i$.
We will however use Lemma~\ref{lem:Eitcbound} to control the probability of
the event $\vlbit \leq \vali \leq \vubit$ for agent $i$.

\subsubsection{Proof of Theorem~\ref{thm:ir}.1}
\label{sec:proofiretc}

We will first consider the $\estmethod=\estee$ case.
For all agents, $\utilit\geq 0$ when $t\in\Expl$, so let us consider $t\notin\Expl$.
By Fact~\ref{fac:algoutils}, we have for $t\notin\Expl$,
\begin{align*}
\numberthis
\label{eqn:utilitirdecomp}
\utilit = \underbrace{\vali(\sit) - \git(\sit)}_{\ct} + \underbrace{\Gt(\alloct) -
\Fmit(\allocmit)}_{\dt}.
\end{align*}
We will first bound $\ct$.
If agent $i$ participates by bids truthfully, $\git = \vali$ and hence $\ct=0$ a.s.
To bound $\ct$ when she participates by rewards truthfully,
let $\ctildet = \max(0, \git(\sit) - \vali(\sit))$.
Clearly, $\ctildet\geq 0$ and $\ct \geq -\ctildet$.
Observing that  $\vlbit \leq \vali\leq \vubit$ under $\Eit$~\eqref{eqn:Etdefn},
and that $\git=\vubit$ when $\pricemethod=\priceage$ and
$\git=\vlbit$ when $\pricemethod=\pricesel$,
we have,
\[
\EE[\ctildet|\Eit] \leq 0 \hspace{0.10in}\text{if $\pricemethod=\pricesel$},
\hspace{0.4in}
\EE[\ctildet|\Eit] \leq 2 \betat \sigmait(\sit)
    \hspace{0.10in}\text{if $\pricemethod=\priceage$}.
\]
To bound $\dt$,
we first observe that
$\alloct=\argmax_\alloc \Gt(\alloc)$ since $\Vlbt,\Vbart,\Vubt$ are vertically shifted functions
(using the same argument used in Section~\ref{sec:prooftruthfulnessetc}).
Now, consider the case $\pricemethod=\priceage$.
Since, $\Vubt = \Vubmit + \vubit$ and $\vubit\geq\vbarit\geq 0$ (recall from~\eqref{eqn:learnterms}
that we clip $\vbarit$ between $0$ and $1$),
we have that  $\Vubit \geq \Vubmit$.
By observing $\Vlbmit\leq\Vubmit$, we have
\[
\dt = \max_\alloc \Gt - \max_\alloc\Fmit
= \max\Vubt - \max \Vlbmit \geq \max\Vubt - \max\Vubmit \geq 0.
\]
When $\pricemethod=\pricesel$, and therefore $\Gt=\Vlbt$ and $\Fmit=\Vubmit$,
 one no longer has $\dt \geq 0$ since $\Vubmit$ can be larger than
$\Vlbt$.
However, we can obtain a weaker bound of the form,
\[
\dt = 
\Vlbt(\alloct) - \Vubmit(\allocmit)
=  \max\Vubt - \max\Vubmit - 2\sum_j \sigmajt(\sjt)
\geq - 2\sqrt{2}\sigma\betat  n K^{\nicefrac{1}{3}} t^{\nicefrac{-1}{3}}.
\]
Above, the last step uses that $\Vubt \geq \Vubmit$ as before and~\eqref{eqn:sigmaitsbound} to
bound the $\sigmajt(\sjt)$ terms.

We can now bound the utilities for the various cases in the theorem for agent $i$.
First, when $\pricemethod=\pricesel$ and agent $i$ is participating by bids,
we have, $\utilit=\ct + \dt\geq 0$ a.s. for all $t\notin\Expl$.
Therefore, $\UiT \geq 0$ for all $T$ and the mechanism is individually rational for this agent.
That is, the algorithm is (almost surely) individually rational.
If agent $i$ participates by rewards truthfully, we have
\begin{align*}
\EE[-\UiT] \leq \sum_{t\notin\Expl} \EEt[-c_t|\Eit] +  \sum_{t\notin\Expl} \EEt[-d_t|\Eit]
+ \sum_{t\notin\Expl} \PPt(\Eitc).
\label{eqn:UiTirdecomp}
\end{align*}
By combining the results above and applying Lemma~\ref{lem:Eitcbound}, we obtain,
\begin{align*}
\EE[\UiT] \geq
\sum_{t\notin \Expl}\EE[\utilit] \geq
\begin{cases}
- 3\sqrt{2} \sigma\betat K^{\nicefrac{1}{3}} \Ttwth - 4, \;& \text{if $\pricemethod=\priceage$},
\\
- 3\sqrt{2} n \sigma\betat K^{\nicefrac{1}{3}} \Ttwth - 4, \;& \text{if $\pricemethod=\pricesel$}.
\end{cases}
\end{align*}
The claim follows by substituting for $\betaT$~\eqref{eqn:betat}.
\qedwhite

\subsubsection{Proof of Theorem~\ref{thm:ir}.2}
\label{sec:proofiropt}

Now we will consider the $\estmethod=\estopt$ case.
As in Section~\ref{sec:proofiretc},
we will write $\utilit = \ct + \dt$ where $\ct,\dt$ are as defined in~\eqref{eqn:utilitirdecomp},
and consider rounds $t\notin\Expl$.
First consider $\ct$.
If agent $i$ participates by bids truthfully, $\git = \vali$ and hence $\ct=0$.
To bound $\ct$ when she participates by rewards truthfully,
let $\ctildet = \max(0, \git(\sit) - \vali(\sit))\geq 0$.
Using a similar argument as above, we have
\[
\EE[\ctildet|\Eit] \leq 0 \hspace{0.10in}\text{if $\pricemethod=\pricesel$},
\hspace{0.5in}
\EE[\ctildet|\Eit] \leq 2 \betat \sigmait(\sit)
    \hspace{0.10in}\text{if $\pricemethod=\priceage$}.
\]
To bound $\dt$, first note that $\Vubt(\alloct) - \max\Vubmit \geq 0$ since
$\Vubt = \Vubmit + \vubit$ and $\vubit \geq 0$.
Therefore, when $\pricemethod=\priceage$,
\begin{align*}
&\dt = \Vubt(\alloct) - \Fmit(\allocmit) \geq \Vubt(\alloct) - \max\Vubmit \geq 0,
\end{align*}
and when $\pricemethod=\pricesel$,
\begin{align*}
\dt = \Vlbt(\alloct) - \Fmit(\allocmit) \geq \Vubt(\alloct) - \max\Vubmit -
2\betat\sum_j\sigmajt(\sjt) \geq - 2\betat\sum_j\sigmajt(\sjt).
\end{align*}
To bound the $\sum_t\sigmait(\sit)$ terms in $\ct$ and $\dt$ when $\estmethod=\estopt$, we use the
following argument which leads to a tighter upper bound bound.
\begin{align*}
\label{eqn:sqrtSTbound}
\numberthis
\sum_{t\notin\Expl} \frac{\sigmait(\sit)}{\sigma}
&=
\sum_{t\notin\Expl} \frac{1}{\sqrt{\Nit(\sit)}}
\leq
\sum_{t>K} \frac{1}{\sqrt{\Nit(\sit)}} \leq
\sum_{s\in\Scal}  \sum_{j=1}^{\NiT(s)} \frac{1}{\sqrt{j}}
\\
&\leq 2 \sum_{s\in\Scal} \sqrt{\NiT(s)}
\leq 2 \sqrt{|\Scal|(T-qK)}
\leq 2 \sqrt{|\Scal|T}.
\end{align*}
The first step simply adds more terms to the summation.
The second step observes that the summation can be written as $|\Scal|$ different summations,
one for each $s\in\Scal$.
The third step uses Lemma~\ref{lem:subounds}.
We will use the above bound in~\eqref{eqn:sqrtSTbound} elsewhere in our proofs for the
$\estmethod=\estopt$ case.

By the same argument for other agents $j$ and using the fact that $\betat \leq \betaT$ for all
$t<T$, we have
$\sum_t \betat\sigmajt(\sjt) \leq 2\sigma \betaT \sqrtST$ for all agents $j$.
Therefore, for an agent participating by bids,
$\UiT \geq 0$ if $\pricemethod=\priceage$ and
$\UiT \geq -4\betaT \sigma n \sqrtST$ if $\pricemethod=\pricesel$.
For an agent participating by rewards, we have:
\begin{align*}
\EE[\UiT] &\geq - \sum_{t\notin\Expl} \EE[\ctildet|\Eit] 
    + \sum_{t\notin\Expl} \EE[\dt]
        - \sum_{t\notin\Expl} \PP(\Eitc)
\geq \begin{cases}
-4\betaT \sigma \sqrtST  \,- 6\hspace{0.8in} &\text{if $\pricemethod=\priceage$}, \\
-4\betaT \sigma n \sqrtST  \,- 6 \hspace{0.8in} &\text{if $\pricemethod=\pricesel$}.
\end{cases}
\end{align*}
\qedwhite

\subsection{Proof of Proposition~\ref{prop:welfare}}
\label{sec:proofwelfare}

In this section, we bound the welfare regret $\RwT$.
The bounds we establish for the welfare regret here and for the seller regret in
Section~\ref{sec:proofagentseller} will be useful when we bound the VCG regret in
Section~\ref{sec:proofvalue}.
The following lemma provides a bound that will be useful in the proof of
Proposition~\ref{prop:welfare}.

\insertprethmspacing
\begin{lemma}
\label{lem:RTgenbound}
The welfare regret $\RwT$~\eqref{eqn:regret} satisfies the following bound.
\[
\EE[\RT] \espleq  3\Vmaxit K^{\nicefrac{1}{3}} T^{\nicefrac{2}{3}}
            +  2\betaT\sum_{t\notin\Expl} \sum_{i=1}^n \EEt[\sigmait(\sit)]
            + \Vmaxit\sum_{t\notin\Expl}\PPt(\Etc).
\]
\end{lemma}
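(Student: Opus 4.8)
The plan is to bound the instantaneous welfare regret $\Val(\allocopt) - \Val(\alloct)$ separately over the exploration-phase rounds $t\in\Expl$ and the remaining rounds $t\notin\Expl$, exploiting that on the former the outcome $\alloct$ is forced to cycle through all allocations (so its welfare may be arbitrarily bad), whereas on the latter it is chosen optimistically as $\alloct=\argmax_\alloc\Vubt(\alloc)$. Writing $\RT = \sum_{t=1}^T(\Val(\allocopt)-\Val(\alloct))$ and using that the instantaneous welfare regret never exceeds the welfare range $\Vmax$, the exploration rounds contribute at most $\Vmax$ times the number of such rounds, while the non-exploration rounds will be controlled by the confidence-interval widths on the good event $\Et$ and crudely by $\Vmax$ on $\Etc$.

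For the exploration contribution I would count the rounds in $\Expl$ up to time $T$. Each bracket contributes exactly $K$ exploration rounds, so the total is $q_T K$, where $q_T$ is the bracket index of round $T$. Summing the bracket lengths $K + \lfloor\frac56 K q^{1/2}\rfloor$ (whose exploitation part grows like $Q^{3/2}$) and inverting, together with the bound on $q_t$ supplied by Lemma~\ref{lem:qTbound}, gives $q_T K \le 3K^{1/3}T^{2/3}$, producing the leading term $3\Vmax K^{1/3}T^{2/3}$.

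For the non-exploration rounds I would run the standard optimistic argument on the good event. Since all agents are truthful here, on $\Et$ every value is trapped below its upper confidence bound, so $\Val(\allocopt)\le\Vubt(\allocopt)\le\Vubt(\alloct)$, where the second inequality holds because $\alloct$ maximises $\Vubt$ by line~\ref{lin:alloct}. Subtracting $\Val(\alloct)$ and using the interval widths $\vubit(s)-\vali(s)\le\vubit(s)-\vlbit(s)=2\betat\sigmait(s)$ (valid on $\Et$ as $\vlbit\le\vali$) yields $\Val(\allocopt)-\Val(\alloct)\le 2\betat\sum_i\sigmait(\sit)$, while on $\Etc$ I bound the instantaneous regret by $\Vmax$. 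Taking the conditional expectation $\EEt$, this gives the per-round bound $2\betat\sum_i\EEt[\sigmait(\sit)] + \Vmax\,\PPt(\Etc)$; summing over $t\notin\Expl$ and using $\betat\le\betaT$ produces the last two terms of the claim.

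The main obstacle will be the counting of the exploration rounds: establishing $q_T K\le 3K^{1/3}T^{2/3}$ requires carefully summing the bracket lengths and inverting the resulting $\sim Q^{3/2}$ growth to control $q_T$, which is precisely what Lemma~\ref{lem:qTbound} is set up to deliver. Once that counting is in hand, the optimistic step and the good-event/bad-event split are routine, and the only care needed is that $\alloct$, the widths $\sigmait(\sit)$, and the event $\Et$ are all measurable with respect to the conditioning in $\EEt$, so that bounding the $\Etc$ contribution by $\Vmax\,\PPt(\Etc)$ is legitimate.
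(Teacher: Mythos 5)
Your proposal is correct and follows essentially the same route as the paper's proof: the same split of $\RT$ into exploration and non-exploration rounds, the same count $\qT K \leq 3K^{\nicefrac{1}{3}}T^{\nicefrac{2}{3}}$ via Lemma~\ref{lem:qTbound}, the same optimistic chain $\Val(\allocopt)\leq\Vubt(\allocopt)\leq\Vubt(\alloct)$ on $\Et$ reduced to confidence widths $2\betat\sigmait(\sit)$, and the same $\Vmax\,\PPt(\Etc)$ handling of the bad event with $\betat\leq\betaT$ at the end. No gaps to report.
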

\begin{proof}
Write $\RT = \sum_{t=1}^T \rt$ where $\rt = \Val(\allocopt) - \Val(\alloct)$.
Recall that $\Expl$ denotes time indices belonging to the explore phase.
We split the instantaneous regret terms to obtain,
\[
\RT = \sum_{t=1, t\in\Expl}^T \rt + \sum_{t=1, t\notin\Expl}^T \rt.
\]
First consider the second summation.
Using the notation in~\eqref{eqn:betat}, we obtain,
\begin{align*}
\EEt[\rt] \;&\leq\; 
    \EE[\rt|\Et] + \EE[\rt|\Etc]\PP(\Etc)
    \,=\,
    \EEt[\Val(\allocopt) - \Vubt(\alloct) + \Vubt(\alloct)  - \Val(\alloct)|\Et] 
                + \Vmax\PPt(\Etc)
 \\
         &\leq\; \EEt[\Val(\allocopt) - \Vubt(\allocopt) + \Vubt(\alloct)  - \Val(\alloct)| \Et]
                + \Vmax\PPt(\Etc)
        \\
         &\leq\, \EEt[\Vubt(\alloct)  - \Vlbt(\alloct)| \Etc] + \Vmax\PPt(\Etc) 
          \;\leq 2\betat\sum_{i=1}^n \EEt[\sigmait(\sit)] + \Vmax\PPt(\Etc).
\numberthis
\label{eqn:eertbreakdown}
\end{align*}
Here, the third step uses the fact that $\Vubt$ is maximised at $\alloct$.
The fourth step uses that $\Vubt \geq \Val$ and $\Vlbt \leq \Val$ under $\Et$.
Now summing over all $t$, we obtain
\begin{align*}
\EE[\RT] &\espleq \sum_{t\in\Expl}\EEt[\rt] \spplus \sum_{t\notin\Expl} \EEt[\rt]
          \espleq \sum_{t\in\Expl}\Vmax \spplus 
            \sum_{t\notin\Expl} 
                \Big(2\sum_{i=1}^n \betat\EEt[\sigmait(\sit)]  + \Vmax\PPt(\Etc)\Big).
\end{align*}
Now,
the number of terms in the first summation can be bound by
$\qT K\leq 3K^{\nicefrac{1}{3}} T^{\nicefrac{2}{3}}$ using Lemma~\ref{lem:qTbound}.
The claim follows by observing $\betat\leq\betaT$ for all $t\leq T$.
\end{proof}

\proofheader{Proof of Proposition~\ref{prop:welfare}}
We will first consider the case $\estmethod=\estee$,
and apply Lemma~\ref{lem:RTgenbound}.
By Lemma~\ref{lem:Etcbound}, we have $\sum_t\PPt(\Etc) \leq 4n$.
By following a similar argument to~\eqref{eqn:sigmaitsbound}, we obtain
$\sigmait(\sit) \leq \sqrt{2} K^{1/3} t^{-1/3}$.
Then, using Lemma~\ref{lem:subounds} to bound $\sum t^{-\nicefrac{1}{3}}$, we have
\begin{align*}
\numberthis
\label{eqn:eewelfareregbound}
\EE[\RT] \espleq  4n\Vmax + 3\Vmax K^{\nicefrac{1}{3}} T^{\nicefrac{2}{3}}
            +  3\sqrt{2}\betaT n \KotTtt.
\end{align*}
Next, consider $\estmethod=\estopt$.
In order to use Lemma~\ref{lem:RTgenbound},
we will use a similar argument as in~\eqref{eqn:sqrtSTbound} to obtain
$\sum_{t\notin\Expl} \sigmait(\sit) \leq 2\sigma\sqrtST$.
%
Next, by Lemma~\ref{lem:Etcbound}, we have $\sum_t\PPt(\Etc) \leq 6n$.
These results when applied with Lemma~\ref{lem:RTgenbound} yield:
\begin{align*}
\EE[\RT] \leq 6 \Vmax n + 4\sigma n \betaT \sqrtST + 3\KotTtt.
\numberthis
\label{eqn:ucbwelfareregbound}
\end{align*}
The claims follow by substituting for $\betaT$~\eqref{eqn:betat} in~\eqref{eqn:eewelfareregbound}
and~\eqref{eqn:ucbwelfareregbound}.
\qedwhite

\subsection{Proof of Proposition~\ref{prop:agentseller}}
\label{sec:proofagentseller}

In this section, we bound the agent and seller regrets.
First, in Lemma~\ref{lem:RiTgenbound} we provide an upper bound on the agent regret.
Recall 
that $\valiexpl = \max(\utiliopt - \min_s\vali(s), 0)$ from~\eqref{eqn:valiexpl}.
If the agent prefers receiving any item in $\Scal$ for free instead of the socially optimal outcome
at the VCG price, then this term will be $0$ and the agent does not incur any regret during
the exploration phase rounds.

\insertprethmspacing
\begin{lemma}
\label{lem:RiTgenbound}
Consider any agent $i$ and define $\at, \bt$ as follows for $t\geq 0$.
\begin{align*}
\at = \Fmit(\allocmit) - \Vlbt(\allocmiopt), 
\hspace{0.5in}
\bt &= \git(\sit) - \vlbit(\sit) + \Vubt(\alloct) - \Gt(\alloct).
\end{align*}
Then, the following bound holds on the regret of agent $i$.
\begin{align*}
\EE[\RiT] \;&\leq\; 
   3\valiexpl K^{\nicefrac{1}{3}} T^{\nicefrac{2}{3}} \;+\;
    \sum_{t\notin\Expl} \EEt[\at|\Et] \;+\;
    \sum_{t\notin\Expl} \EEt[\bt|\Et] \,+\,
    \sum_{t\notin\Expl}\PPt(\Etc).
\end{align*}
\end{lemma}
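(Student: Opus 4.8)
The plan is to write $\RiT=\sum_{t=1}^T(\utiliopt-\utilit)$ and split the sum into exploration rounds $t\in\Expl$ and exploitation rounds $t\notin\Expl$. For $t\in\Expl$ the mechanism charges $\priceit=0$, so $\utilit=\vali(\sit)\ge\min_s\vali(s)$ and therefore $\utiliopt-\utilit\le\utiliopt-\min_s\vali(s)\le\valiexpl$ by the definition of $\valiexpl$ in~\eqref{eqn:valiexpl}. Since the number of exploration rounds up to $T$ is at most $\qT K\le 3K^{\nicefrac{1}{3}}T^{\nicefrac{2}{3}}$ (Lemma~\ref{lem:qTbound}), the exploration rounds contribute at most $3\valiexpl K^{\nicefrac{1}{3}}T^{\nicefrac{2}{3}}$, which is exactly the first term of the stated bound.

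The heart of the argument is a per-round inequality $\utiliopt-\utilit\le\at+\bt$ valid on the event $\Et$ for $t\notin\Expl$. First I would substitute the price $\priceit=\Fmit(\allocmit)-\Gmit(\alloct)=\Fmit(\allocmit)-\Gt(\alloct)+\git(\sit)$ (using $\Gt=\Gmit+\git$ on the chosen allocation) into $\utilit=\vali(\sit)-\priceit$, and use the VCG identity $\utiliopt=\Val(\allocopt)-\Valmi(\allocmiopt)$ from Fact~\ref{fac:vcgutils} together with the expression for $\utilit$ in Fact~\ref{fac:algoutils}. After cancelling the shared $\Fmit(\allocmit)$, $\Gt(\alloct)$ and $\git(\sit)$ terms, the definitions of $\at,\bt$ collapse to the clean identity $\at+\bt=\priceit+\Vubt(\alloct)-\Vlbt(\allocmiopt)-\vlbit(\sit)$, whence $(\at+\bt)-(\utiliopt-\utilit)$ is the sum of three gaps: $\Vubt(\alloct)-\Val(\allocopt)$, $\vali(\sit)-\vlbit(\sit)$, and $\Valmi(\allocmiopt)-\Vlbt(\allocmiopt)$. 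On $\Et$ each gap is nonnegative: the first because $\alloct$ maximises $\Vubt$ and $\Vubt(\allocopt)\ge\Val(\allocopt)$ when the upper bounds trap the values, the second because $\vlbit(\sit)\le\vali(\sit)$, and the third because the lower confidence bounds underestimate the corresponding welfare at $\allocmiopt$. This yields $\utiliopt-\utilit\le\at+\bt$ on $\Et$.

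I would then assemble the exploitation rounds in expectation by conditioning on $\Et$ versus $\Etc$. Directly from~\eqref{eqn:welfaredefns} one checks that $\at\ge0$ and $\bt\ge0$ deterministically for both values of $\pricemethod$ (the $f$- and $g$-estimates dominate the lower confidence bounds, and $\Vubt\ge\Gt$), so that $\EEt[(\at+\bt)\indfone(\Et)]\le\EEt[\at+\bt\mid\Et]$ and the factor $\PPt(\Et)\le1$ may be dropped. On the complementary event $\Etc$ the instantaneous regret is bounded crudely by a constant (the values lie in $[0,1]$), contributing the $\sum_{t\notin\Expl}\PPt(\Etc)$ term. Summing the three contributions over $t\notin\Expl$ and adding the exploration bound gives the claimed inequality; the quantitative sums over $\sigmait(\sit)$ and $\PPt(\Etc)$ (via Lemma~\ref{lem:Etcbound}) are deferred to the later specialisations.

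The step I expect to be the main obstacle is the sign bookkeeping in the per-round decomposition, specifically pairing each confidence bound with the correct optimiser. The cancellation only goes through if the lower-bound term subtracted in $\at$ is the lower confidence bound of the welfare \emph{omitting} agent $i$ evaluated at $\allocmiopt$, so that the residual gap $\Valmi(\allocmiopt)-\Vlbt(\allocmiopt)$ is a sum of per-agent nonnegative gaps over $j\neq i$ under $\Et$; and the optimality of $\alloct$ for $\Vubt$ must be invoked before comparing against $\Val(\allocopt)$. Getting these signs right, together with verifying $\at,\bt\ge0$ so the conditioning factors can be discarded, is where the care lies, while the remainder is routine substitution.
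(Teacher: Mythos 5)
Your proposal is correct and takes essentially the same route as the paper: the same split into exploration rounds (bounded by $\valiexpl$ times the round count from Lemma~\ref{lem:qTbound}) and exploitation rounds, the same decomposition of $\utiliopt - \utilit$ via Facts~\ref{fac:vcgutils} and~\ref{fac:algoutils} into three gaps controlled under $\Et$, and the same crude $\PPt(\Etc)$ term on the complement (your explicit check that $\at,\bt\geq 0$ so the conditioning factor can be dropped is a point the paper glosses over). The obstacle you flagged is real and you resolved it exactly as the paper intends: the statement's $\Vlbt(\allocmiopt)$ is a typo for the lower confidence bound on the welfare \emph{omitting} agent $i$ — the paper's own proof bounds $\Valmi(\allocmiopt) \geq \Valmi(\allocmit) \geq \Vlbmit(\allocmit)$ and takes $\at = \Fmit(\allocmit) - \Vlbmit(\allocmit)$ (evaluated at $\allocmit$ rather than your $\allocmiopt$, an immaterial difference), and its downstream applications (e.g.\ $\pricemethod=\priceage$ gives $\Fmit=\Vlbmit$ and hence $\at=0$) confirm this reading.
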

\begin{proof}
As above, we will write
$\RiT = \sum_{t\in\Expl} \rit + \sum_{t\notin\Expl} \rit$,
where $\rit = \utiliopt - \utilit$.
We will first bound the second summation  in expectation.
For $t\notin\Expl$, we use Facts~\ref{fac:vcgutils} and~\ref{fac:algoutils} to obtain,
\begin{align*}
\rit = \git(\sit) - \vali(\sit) + \Val(\allocopt) - \Gt(\alloct) + 
        \Fmit(\allocmit) - \Valmi(\allocmiopt).
\end{align*}
Under $\Et$, the following are true;
$\vali(\sit) \geq \vlbit(\sit)$;
$\Val(\allocopt) \leq \Vubt(\allocopt)\leq \Vubt(\alloct)$ since $\alloct$ maximises $\Vubt$,
and $\Valmi(\allocmiopt) \geq \Valmi(\allocmit) \geq \Vlbt(\allocmit)$ since $\allocmiopt$ maximises
$\Valmi$.
This leads us to,
\begin{align*}
\EEt[\rit] \;&\leq\; 
    \EEt[\underbrace{\Fmit(\allocmit) - \Vlbt(\allocmit)}_{\at}|\Et] \,+\,
    \EEt[\underbrace{\git(\sit) - \vlbit(\sit) + \Vubt(\alloct) - \Gt(\alloct)}_{\bt}|\Et]
    \,+\, \PPt(\Etc).
\end{align*}
Summing over all $t$ yields the following bound on the agent regret:
\begin{align*}
\label{eqn:RiTbound}
\numberthis
\EE[\RiT] \;&\leq\; 
            \sum_{t\in\Expl} \valiexpl \;+\; 
        \sum_{t\notin\Expl} \EEt[\rit|\Et]  \;+\; \sum_{t\notin\Expl} \PPt(\Etc) \\
   &\leq\; 3\valiexpl K^{\nicefrac{1}{3}} T^{\nicefrac{2}{3}} \;+\;
    \sum_{t\notin\Expl} \EEt[\at|\Et] \;+\;
    \sum_{t\notin\Expl} \EEt[\bt|\Et] \,+\,
    \sum_{t\notin\Expl} \PPt(\Etc).
\end{align*}
Here, for the first summation, we applied Lemma~\ref{lem:qTbound} to obtain
$\valiexpl\qT K \leq 3\valiexpl K^{\nicefrac{1}{3}} T^{\nicefrac{2}{3}}$.
\end{proof}

When applying the above lemma,
the value of hyperparameter $\pricemethod$ in Algorithm~\ref{alg:vcglearn} will
decide the bounds for $\at,\bt$ respectively.
Additionally, note that $\at, \bt$ are measurable with respect to the
sigma field generated by observations up to time $t-1$.
Hence, $\EEt[\at], \EEt[\bt]$ are deterministic quantities.
Our next lemma bounds the seller regret.
For this, we first
define $\Amit$, for $i\in\{1,\dots,n\}$ and $\Bt$ as follows for $t\geq 0$:
\begin{align*}
\Amit = \Valmi(\allocmiopt) - \Fmit(\allocmiopt), 
\hspace{0.5in}
\Bt &= \Gt(\alloct) - \Val(\allocopt).
\label{eqn:AtBtdefn}
\numberthis
\end{align*}

\insertprethmspacing
\begin{lemma}
\label{lem:RmechTgenbound}
Let $\Amit$ and $\Bt$ be as defined in~\eqref{eqn:AtBtdefn}.
Then, the following bound holds on the regret of the seller~\eqref{eqn:regret}:
\begin{align*}
\EE[\RmechT] \;&\leq\; 
   3\Vmaxit K^{\nicefrac{1}{3}} T^{\nicefrac{2}{3}} \;+\;
    \sum_{t\notin\Expl} \sum_{i=1}^n \EEt[\Amit|\Et] \;+\;
    (n-1)\sum_{t\notin\Expl} \EEt[\Bt|\Et] \,+\,
    \Vmaxit \sum_{t\notin\Expl}\PPt(\Etc).
\end{align*}
\end{lemma}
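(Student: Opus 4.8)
The plan is to follow the template already used for the agent-regret bound in Lemma~\ref{lem:RiTgenbound}: split the horizon into exploration and non-exploration rounds, bound the instantaneous seller regret crudely on the exploration rounds, and on the remaining rounds express it through the estimation-error quantities $\Amit$ and $\Bt$. Writing $\RmechT = \sum_{t\in\Expl}\rho_t + \sum_{t\notin\Expl}\rho_t$ with $\rho_t \defeq \utilmechopt - \utilmecht$, the exploration rounds are handled by noting that the instantaneous seller regret is at most $\Vmaxit$ and that, by Lemma~\ref{lem:qTbound}, the number of exploration rounds completed by round $T$ is $\qT K \leq 3K^{\nicefrac{1}{3}}T^{\nicefrac{2}{3}}$; this produces the leading $3\Vmaxit K^{\nicefrac{1}{3}}T^{\nicefrac{2}{3}}$ term.

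The crux is the algebraic identity for $\utilmecht$ on non-exploration rounds. Using the pricing rule $\priceit = \Fmit(\allocmit) - \Gmit(\alloct)$ and summing over $i$, the seller's own value $\valmech(\alloct)$ is counted once in $\utilmecht$ but $n$ times across the $\Gmit(\alloct)$ terms, so that $\sum_i \Gmit(\alloct) = \valmech(\alloct) + (n-1)\Gt(\alloct)$; this yields $\utilmecht = \sum_i \Fmit(\allocmit) - (n-1)\Gt(\alloct)$, which is recorded in Fact~\ref{fac:algoutils}. Combining it with the VCG expression $\utilmechopt = \sum_i \Valmi(\allocmiopt) - (n-1)\Val(\allocopt)$ from Fact~\ref{fac:vcgutils} gives $\rho_t = \sum_i\big(\Valmi(\allocmiopt) - \Fmit(\allocmit)\big) + (n-1)\Bt$, where $\Bt = \Gt(\alloct) - \Val(\allocopt)$. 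Since $\allocmit = \argmax_\alloc \Fmit(\alloc)$, we have $\Fmit(\allocmit) \geq \Fmit(\allocmiopt)$, hence $\Valmi(\allocmiopt) - \Fmit(\allocmit) \leq \Valmi(\allocmiopt) - \Fmit(\allocmiopt) = \Amit$, producing the pointwise bound $\rho_t \leq \sum_i \Amit + (n-1)\Bt$.

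Finally I would convert this into the conditional form of the statement. For each $t\notin\Expl$ I split $\EEt[\rho_t] = \EEt[\rho_t\,\indfone(\Et)] + \EEt[\rho_t\,\indfone(\Etc)]$, bound the second term by $\Vmaxit\,\PPt(\Etc)$ using $\rho_t \leq \Vmaxit$ deterministically, and bound the first by $\sum_i \EEt[\Amit|\Et] + (n-1)\EEt[\Bt|\Et]$ after absorbing the factor $\PPt(\Et) \leq 1$. Summing over $t\notin\Expl$ then assembles the four claimed terms.

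The step requiring the most care is this last conditioning argument. Unlike the welfare regret, which is nonnegative and so admits the clean $\PPt(\Et)\le 1$ manipulation of~\eqref{eqn:eertbreakdown}, the instantaneous seller regret $\rho_t$ is not sign-definite, so discarding $\PPt(\Et)$ from $\big(\sum_i\EEt[\Amit|\Et] + (n-1)\EEt[\Bt|\Et]\big)\PPt(\Et)$ is only legitimate once the conditional expectation is known to be nonnegative; this nonnegativity is precisely what the per-$\pricemethod$ bounds on $\Amit$ and $\Bt$ in the proof of Proposition~\ref{prop:agentseller} will supply. The other point to get right is the bookkeeping of the $(n-1)$ factor multiplying $\Gt$ and $\Bt$, which arises solely from the double-counting of the seller's value across the $n$ VCG-style prices and must not be conflated with the outer $\sum_i$ running over agents.
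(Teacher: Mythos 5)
Your proof is correct and is essentially the paper's own argument: the same explore/non-explore split (with Lemma~\ref{lem:qTbound} bounding the $\qT K$ exploration rounds by $3K^{\nicefrac{1}{3}}T^{\nicefrac{2}{3}}$), the same identity $\rmecht = \sum_{i=1}^n\big(\Valmi(\allocmiopt) - \Fmit(\allocmit)\big) + (n-1)\Bt$ obtained from Facts~\ref{fac:vcgutils} and~\ref{fac:algoutils}, the same use of $\Fmit(\allocmit)\geq\Fmit(\allocmiopt)$ to pass to $\Amit$, and the same conditioning on $\Et$ versus $\Etc$. One caveat on your closing remark: when $\pricemethod=\pricesel$, the bounds in the proof of Proposition~\ref{prop:agentseller} give $\EEt[\Amit|\Et]\leq 0$ and $\EEt[\Bt|\Et]\leq 0$ under $\Et$ --- \emph{nonpositivity}, not the nonnegativity you invoke to license discarding the factor $\PPt(\Et)$ --- so your proposed justification does not cover that case; since the paper's proof performs the identical step with no justification at all, this is a wrinkle shared with, rather than a departure from, the paper's argument.
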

\begin{proof}
Write
$\RmechT = \sum_{t\in\Expl}^T \rmecht + \sum_{t\notin\Expl}^T \rmecht$,
where $\rmecht = \utilmechopt - \utilmecht$.
To bound the second summation,  we use Facts~\ref{fac:vcgutils} and~\ref{fac:algoutils} to obtain
the following expression for $\rmecht$ when $t\notin\Expl$:
\begin{align*}
\rmecht = \utilmechopt - \utilmecht \;=\;
        \sum_{i=1}^n\left(\Valmi(\allocmiopt) - \Fmit(\allocmit)\right)
         \,+\,    (n-1)\left(\Gt(\alloct) - \Val(\allocopt)\right).
\label{eqn:rmechtdecomp}
\numberthis
\end{align*}
Hence, for $t\notin\Expl$,
we have $\EEt[\rmecht] \leq \EEt[\rmecht|\Et] + \EEt[\rmecht|\Etc]\PP(\Etc)
\leq \sum_{i=1}^n \EE[\Amit|\Et] + (n-1)\EE[\Bt|\Et] + \Vmax\PPt(\Etc)$.
Summing over all $t$ yields the following bound on the seller regret:
\begin{align*}
\EE[\RmechT] \;&\leq\; 
            \sum_{t\in\Expl} \Vmax \;+\; 
        \sum_{t\notin\Expl} \EEt[\rmecht] 
    \\
   &\leq\; 3\Vmax K^{\nicefrac{1}{3}} T^{\nicefrac{2}{3}} \;+\;
    \sum_{t\notin\Expl} \sum_{i=1}^n \EEt[\Amit|\Et] \;+\;
    (n-1)\sum_{t\notin\Expl} \EEt[\Bt|\Et] \,+\,
    \Vmax \sum_{t\notin\Expl}\PPt(\Etc).
\end{align*}
Here, for the first summation, we applied Lemma~\ref{lem:qTbound} to obtain
$\qT K \leq 3 K^{\nicefrac{1}{3}} T^{\nicefrac{2}{3}}$.
\end{proof}

\subsubsection{Proof of Proposition~\ref{prop:agentseller}.1, agent regret}
\label{sec:agentetcproof}

Let us first consider $\RiT$, the regret for agent $i$, when $\estmethod=\estee$.
We will apply Lemma~\ref{lem:RiTgenbound} and proceed to control the $\at,\bt$ terms
for the two different choices for $\pricemethod$ when $\Et$ holds.
First consider $\at$.
When $\pricemethod=\priceage$, we have $\Fmit = \Vlbmit$ and therefore $\at = 0$ a.s.
When $\pricemethod=\pricesel$, we have $\Fmit = \Vubmit$ and therefore under $\Et$,
\begin{align*}
\at \espeq \Vubmit(\allocmit) - \Vlbmit(\allocmit) \espeq \sum_{i=1}^n
2\betat\sigmait(\mapfuni(\allocmit)) \;\leq\;
      2\sqrt{2}\betat n \sigma K^{\nicefrac{1}{3}} t^{\nicefrac{-1}{3}}.
\end{align*}
The last step uses an argument similar to~\eqref{eqn:sigmaitsbound}
 followed by Lemma~\ref{lem:qTbound}.
Along with Lemma~\ref{lem:subounds}, we have the following bounds on the sum of $\at$'s:
\begin{align*}
\numberthis
\label{eqn:atagent}
\sum_{t\notin\Expl} \EE[\at|\Et] \,\leq\;
    \begin{cases}
        \;0 &\text{if $\pricemethod=\priceage$}, \\
        \;3\sqrt{2}\betaT n \sigma K^{\nicefrac{1}{3}} T^{\nicefrac{2}{3}}
            \hspace{0.7in}&\text{if $f=\pricesel$}.
    \end{cases}
\end{align*}
Now consider $\bt$ and assume the agent participates by rewards.
When $\pricemethod=\priceage$, $\git=\vubit$ and $\Gmit=\Vubmit$.
We therefore have,
$\bt = \vubit(\sit) - \vlbit(\sit) = 2\betat\sigmait(\sit)
\leq 2\sqrt{2}\sigma \betat \Kottmot$ under $\Et$.
Similarly,
when $\pricemethod=\pricesel$, $\git=\vlbit$ and $\Gmit=\Vlbmit$, which results in
$\bt= \Vubt(\alloct) - \Vlbt(\alloct) = 2\betat\sum_i\sigmait(\alloct)
\leq 2\sqrt{2}\sigma \betat n \Kottmot$.
%
For an agent participating by bids $\vlbit = \git = \vali$.
The only change in the analysis is that now $\bt = \Vubt(\alloct) - \Gt(\alloct)$ which
can be bound in a similar fashion to above depending on the value of
$\pricemethod$.
Accounting for these considerations, and using
Lemma~\ref{lem:subounds}, we have the following bounds on the sum of $\bt$'s:
\begin{align*}
\numberthis
\label{eqn:btagent}
\sum_{t\notin\Expl} \EE[\bt|\Et] \,\leq\;
    \begin{cases}
        \;3\sqrt{2}\betaT \sigma \kappaiv K^{\nicefrac{1}{3}} T^{\nicefrac{2}{3}}
            \hspace{0.2in}&\text{if $\pricemethod=\priceage$}, \\
        \;3\sqrt{2}\betaT \sigma n K^{\nicefrac{1}{3}} T^{\nicefrac{2}{3}}
            \hspace{0.2in}&\text{if $\pricemethod=\pricesel$}.
    \end{cases}
\end{align*}
Recall that $\kappaiv=1$ if the agent participates by rewards and $0$ if she participates by bids.
%
Finally, an application of Lemma~\ref{lem:Etcbound}
leads to the following bounds for the agent regret:
\begin{align*}
 \EE[\RiT] &\leq
    \begin{cases}
       4n +  \big(3\valiexpl  + 3\sqrt{2}\kappaiv \sigma \betaT \big) K^{\nicefrac{1}{3}} T^{\nicefrac{2}{3}},
            \hspace{0.2in}&\text{if $\pricemethod=\priceage$}, \\
       4n +  \big(3\valiexpl  + 6\sqrt{2}\sigma \betaT n \,\big) K^{\nicefrac{1}{3}} T^{\nicefrac{2}{3}},
            \hspace{0.2in}&\text{if $\pricemethod=\pricesel$}.
    \end{cases}
\end{align*}

\subsubsection{Proof of Proposition~\ref{prop:agentseller}.1, seller regret}
\label{sec:selleretcproof}

Now, we will consider $\RmechT$, the seller regret, when $\estmethod=\estee$.
We will apply Lemma~\ref{lem:RmechTgenbound} and proceed to control the $\Amit, \Bt$ terms
for the two different choices for $\pricemethod$ under $\Et$.
First consider the $\Amit$ terms.
When $\pricemethod=\pricesel$, then $\Fmit=\Vubmit$ is an upper bound for $\Valmi$ under $\Et$.
Hence, $\Amit = \max\Valmi - \max\Vubmit \leq 0$.
When $\pricemethod=\priceage$, under $\Et$, we obtain the following uniform bound on
$\Valmi(\alloc) - \Fmit(\alloc)$:
\begin{align*}
\forall\alloc\in\Alloc, \;
\Valmi(\alloc) - \Fmit(\alloc)
&\leq \Vubmit(\alloc) - \Vlbmit(\alloc) =
\sum_{j\neq i} 2\betat\sigmajt(\mapfunj(\alloc))
\\
&\leq 2\sqrt{2}\sigma\betat (n-1) K^{\nicefrac{1}{3}} t^{\nicefrac{-1}{3}}.
\label{eqn:Amituniformbound}
\numberthis
\end{align*}
Here, the last step uses an argument similar to~\eqref{eqn:sigmaitsbound}.
Hence, by  Lemma~\ref{lem:boundmax}, we have
$\Amit = \max \Valmi - \max \Fmit
\leq 2\sqrt{2}\sigma\betat (n-1) K^{\nicefrac{1}{3}} t^{\nicefrac{-1}{3}}$ under $\Et$.
Along with Lemma~\ref{lem:subounds}, we obtain the following.
\begin{align*}
\label{eqn:Atmech}
\numberthis
\sum_{i=1}^n \sum_{t\notin\Expl} \EE[\Amit|\Et] \espleq
    \begin{cases}
        \;3\sqrt{2}\sigma\betaT n(n-1) K^{\nicefrac{1}{3}} T^{\nicefrac{2}{3}}
        \hspace{0.2in} &\text{if $\pricemethod=\priceage$}, \\
        \;0 &\text{if $\pricemethod=\pricesel$}.
    \end{cases}
\end{align*}
Now, we turn to $\Bt$.
For this, note that under $\Et$, $\Val(\allocopt) \geq \Val(\alloct) \geq \Vlbt(\alloct)$.
When $\pricemethod=\pricesel$, we have $\Gt=\Vlbt$ and therefore $\Bt \leq 0$.
When $\pricemethod=\priceage$, we have $\Gt=\Vubt$ and therefore,
\begin{align*}
\Bt \;\leq\; \Vubt(\alloct) - \Vlbt(\alloct)
        \;=\; 2\betat\sum_{i=1}^n\sigmait(\sit)
        \;\leq\; 2\sqrt{2}\sigma n \betat K^{\nicefrac{1}{3}} t^{\nicefrac{-1}{3}}.
\end{align*}
This yields the following bounds for the sum of $\Bt$'s.
\begin{align*}
\label{eqn:Btmech}
\numberthis
(n-1)\sum_{t\notin\Expl} \EE[\Bt|\Et] \espleq
    \begin{cases}
        \;3\sqrt{2}\sigma\betaT n(n-1) K^{\nicefrac{1}{3}} T^{\nicefrac{2}{3}}
        \hspace{0.2in} &\text{if $\pricemethod=\priceage$}, \\
        \;0 &\text{if $\pricemethod=\pricesel$}.
    \end{cases}
\end{align*}
Combining the above results with Lemma~\ref{lem:RmechTgenbound} and Lemma~\ref{lem:Etcbound}
leads to the following bounds for the seller regret:
\begin{align*}
 \EE[\RmechT] &\leq
    \begin{cases}
       4n\Vmax + \Big(3\Vmax  + 6\sqrt{2}\sigma \betaT n(n-1) \Big) K^{\nicefrac{1}{3}} T^{\nicefrac{2}{3}}.
            \hspace{0.2in}&\text{if $\pricemethod=\priceage$}, \\
       4n\Vmax + 3\Vmax K^{\nicefrac{1}{3}} T^{\nicefrac{2}{3}}.
            \hspace{0.2in}&\text{if $\pricemethod=\pricesel$}.
    \end{cases}
\end{align*}

\subsubsection{Proof of Proposition~\ref{prop:agentseller}.2, agent regret}
\label{sec:agentoptproof}

Next, we will consider agent $i$'s regret $\RiT$, when $\estmethod=\estopt$.
As in Section~\ref{sec:agentetcproof},
we will use Lemma~\ref{lem:RiTgenbound} and control the $\at,\bt$ terms
for the two different choices for $\pricemethod$ under $\Et$.
First, $\at$ is bounded identically to obtain the upper bound in~\eqref{eqn:atagent};
this uses the fact that even when $\estmethod=\estopt$, there will have been $\qT$ exploration
phases by round $T$.

Next, consider $\bt$.
When $\pricemethod=\priceage$, $\git=\vubit$ and $\Gmit=\Vubmit$.
We therefore have,
$\bt = \vubit(\sit) - \vlbit(\sit) = 2\betat\sigmait(\sit)$ if the agent is participating
by rewards and $\bt=0$ if she is participating by bids.
Similarly, when $\pricemethod=\pricesel$,
$\bt= \Vubt(\alloct) - \Vlbt(\alloct) \leq 2 \betat\sum_i\sigmait(\alloct)$ under $\Et$.
Using a similar argument to~\eqref{eqn:sqrtSTbound},
we obtain the following bounds on the sum of $\bt$'s when $t\notin\Expl$:
\begin{align*}
\numberthis
\label{eqn:btagentopt}
\sum_{t\notin\Expl} \EE[\bt|\Et] \,\leq\;
    \begin{cases}
        \;4 \kappaiv \betaT \sigma \sqrtST
            \hspace{0.2in}&\text{if $\pricemethod=\priceage$}, \\
        \;4\betaT \sigma n \sqrtST
            \hspace{0.2in}&\text{if $\pricemethod=\pricesel$}.
    \end{cases}
\end{align*}
Combining the above results with Lemma~\ref{lem:RiTgenbound} and Lemma~\ref{lem:Etcbound}
leads to the following bounds for the agent regret:
\begin{align*}
 \EE[\RiT] &\leq
    \begin{cases}
       6n  + 4\sigma\betaT\kappaiv\sqrtST + 3\valiexpl K^{\nicefrac{1}{3}} T^{\nicefrac{2}{3}},
            \hspace{0.2in}&\text{if $\pricemethod=\priceage$}, \\
       6n + 4\sigma\betaT n \sqrtST + (3\valiexpl + 3\sqrt{2}\sigma\betaT n)K^{\nicefrac{1}{3}} T^{\nicefrac{2}{3}}, 
            \hspace{0.2in}&\text{if $\pricemethod=\pricesel$}.
    \end{cases}
\end{align*}

\subsubsection{Proof of Proposition~\ref{prop:agentseller}.2, seller regret}
\label{sec:selleroptproof}

Finally, we will consider the seller regret $\RmechT$, when $\estmethod=\estopt$.
Following along the same lines as Section~\ref{sec:selleretcproof}, we will use
Lemma~\ref{lem:RmechTgenbound} to control the seller regret, and moreover,
use the expression in~\eqref{eqn:Atmech} to bound the $\EE[\Amit|\Et]$ terms.
The same bounding technique can be used since, even when $\estmethod=\estopt$,
 there will have been $\qT$ exploration phases by round $T$.

To bound the $\Bt$ terms, we first observe that under $\Et$, $\Val(\allocopt) \geq \Val(\alloct)
\geq \Vlbt(\alloct)$.
When $\pricemethod=\pricesel$, we have $\Gt=\Vlbt$, and therefore $\Bt \leq 0$.
When $\pricemethod=\priceage$, we have $\Gt=\Vubt$, and therefore  $\Bt \leq \Vubt(\alloct) - \Vlbt(\alloct) =
2\betat\sum_i\sigmait(\sit)$.
Putting these results together and using a similar argument to~\eqref{eqn:sqrtSTbound},
we obtain the following bounds on the sum of $\Bt$'s:
\begin{align*}
\label{eqn:Btmechopt}
\numberthis
(n-1)\sum_{t\notin\Expl} \EE[\Bt|\Et] \espleq
    \begin{cases}
        \;4\sigma\betaT n(n-1) \sqrtST
        \hspace{0.2in} &\text{if $\pricemethod=\priceage$}, \\
        \;0 &\text{if $\pricemethod=\pricesel$}.
    \end{cases}
\end{align*}
Combining the above results with Lemma~\ref{lem:RmechTgenbound} and Lemma~\ref{lem:Etcbound}
leads to the following bounds for the seller regret:
\begin{align*}
 \EE[\RmechT] &\leq
    \begin{cases}
       6n\Vmax + 4\sigma\betaT n(n-1) \sqrtST + \Big(3\Vmax  + 3\sqrt{2}\sigma \betaT n(n-1) \Big)
K^{\nicefrac{1}{3}} T^{\nicefrac{2}{3}},
            \hspace{0.2in}&\text{if $\pricemethod=\priceage$}, \\
       6n\Vmax + 3\Vmax K^{\nicefrac{1}{3}} T^{\nicefrac{2}{3}},
            \hspace{0.2in}&\text{if $\pricemethod=\pricesel$}.
    \end{cases}
\end{align*}

\subsection{Proof of Theorem~\ref{thm:value}}
\label{sec:proofvalue}

We now bound $\RmaxT$.
First, we provide a bound on $\RmaxT$ in terms of $\Amit$ and $\Bt$ defined in~\eqref{eqn:AtBtdefn}.

\insertprethmspacing
\begin{lemma}
\label{lem:MaxRegretgenbound}
Let $\Amit, \Bt$ be as defined in~\eqref{eqn:AtBtdefn}.
Then, the following bound holds on the \valueregret{} $\RmaxT$ defined in~\eqref{eqn:regret}.
\emph{
\begin{align*}
\EE[\RmaxT] \leq \;&
    2\sum_{i=1}^n \sum_{t\notin\Expl} \EE[|\Amit|\,|\Et] +
    2(n-1)\sum_{t\notin\Expl} \EE[|\Bt|\,|\Et] +
    6\Vmax\KotTtt
    \\
    & + (n+1)\RT +
    2\Vmax\sum_{t\notin\Expl}\PPt(\Etc).
\end{align*}
}
\end{lemma}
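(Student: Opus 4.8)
The plan is to bound $\RmaxT$ by $n\RT$ plus the two one-sided parts of $T\WT$, and then to expand $T\WT$ through the pricing rule so that the quantities $\Amit$ and $\Bt$ emerge. First I would invoke Lemma~\ref{lem:WTdecomposition}, which gives $\RaT - n\RT = T\WT$ and $\RmechT - n\RT = -(2n-1)\RT - T\WT$. Since $n\RT \ge 0$, the elementary identity $\max(a,b,c) = a + \max(0, b-a, c-a)$ yields
\[ \RmaxT \;\le\; n\RT + \max(T\WT, 0) + \max(-T\WT, 0), \]
so it suffices to bound each one-sided part of $T\WT$. Bounding both parts by essentially the same expression is exactly what produces the factor $2$ in front of the $\Amit$ and $\Bt$ sums and in front of the $\Vmax\sum_t\PPt(\Etc)$ term.

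Next I would expand $T\WT$ from its definition in~\eqref{eqn:HTWTdefn}, writing $T\WT = \sum_t\sum_i(\priceit + \Valmi(\alloct) - \Valmi(\allocmiopt))$ with $\priceit = \max_\alloc\Fmit(\alloc) - \Gmit(\alloct)$ off the explore phase and $\priceit = 0$ on it. Using the two aggregation identities $\sum_i\Gmit(\alloct) = (n-1)\Gt(\alloct) + \valmech(\alloct)$ and $\sum_i\Valmi(\alloct) = (n-1)\Val(\alloct) + \valmech(\alloct)$, the per-round summand for $t\notin\Expl$ collapses to $-\sum_i\Amit - (n-1)(\Gt(\alloct) - \Val(\alloct))$, where I keep $\Amit = \max_\alloc\Valmi(\alloc) - \max_\alloc\Fmit(\alloc)$ written as a difference of two maxima, exactly as it is used in the proof of Proposition~\ref{prop:agentseller}. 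Writing $\Gt(\alloct) - \Val(\alloct) = \Bt + \rt$ with $\rt = \Val(\allocopt) - \Val(\alloct) \ge 0$ the instantaneous welfare regret, this gives the exact identity $T\WT = \Theta_{\Expl} - \sum_{t\notin\Expl}\sum_i\Amit - (n-1)\sum_{t\notin\Expl}\Bt - (n-1)\sum_{t\notin\Expl}\rt$, where $\Theta_{\Expl} = \sum_{t\in\Expl}\sum_i(\Valmi(\alloct) - \Valmi(\allocmiopt)) \le 0$ collects the explore rounds.

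I would then bound the two one-sided parts. For $\max(T\WT,0)$, both $\Theta_{\Expl}\le 0$ and $-(n-1)\sum\rt\le 0$ may be dropped, leaving $\sum_{t\notin\Expl}\sum_i|\Amit| + (n-1)\sum_{t\notin\Expl}|\Bt|$. For $\max(-T\WT,0)$ the delicate term is $-\Theta_{\Expl}$; here I would use Fact~\ref{fac:vcgutils} to rewrite $\sum_i(\Valmi(\allocmiopt) - \Valmi(\alloct)) = (n-1)\rt + \utilmechopt - \valmech(\alloct)$, so that $-\Theta_{\Expl}$ splits into $(n-1)\sum_{t\in\Expl}\rt$, which combines with $-(n-1)\sum_{t\notin\Expl}\rt$ and the leading $n\RT$ to give the $\bigO(n)$ coefficient on $\RT$ recorded as $(n+1)\RT$, and a remainder $\sum_{t\in\Expl}(\utilmechopt - \valmech(\alloct))$ whose absolute value is at most $2\Vmax\,\qT K \le 6\Vmax\KotTtt$ by Lemma~\ref{lem:qTbound}. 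Finally I would condition on $\Et$: the terms $|\Amit|$ and $|\Bt|$ are kept under $\Et$, while on $\Etc$ each round is charged at most $\Vmax$, contributing $\Vmax\sum_{t\notin\Expl}\PPt(\Etc)$ to each one-sided bound and hence $2\Vmax\sum_{t\notin\Expl}\PPt(\Etc)$ overall. Summing the two one-sided bounds gives the claim.

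The main obstacle is the treatment of the exploration-phase remainder $\Theta_{\Expl}$. A crude bound $|\Theta_{\Expl}| \le n\Vmax|\Expl|$ would inject a spurious factor of $n$ into the $\Vmax\KotTtt$ term; avoiding it requires the VCG-utility identity of Fact~\ref{fac:vcgutils} to separate the $(n-1)$-weighted welfare regret, which must be folded into the $\RT$ term rather than bounded by $\Vmax$, from the genuinely $\bigO(\Vmax)$-per-round mechanism-value remainder. The remaining care is purely sign bookkeeping in the positive/negative-part split, together with keeping $\Amit$ written as $\max\Valmi - \max\Fmit$ throughout so that it matches~\eqref{eqn:AtBtdefn} and the bounds already derived for $\sum_i\sum_t\EE[|\Amit|\,|\Et]$ and $\sum_t\EE[|\Bt|\,|\Et]$ elsewhere in the analysis.
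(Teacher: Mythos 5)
Your overall route---bounding $\RmaxT$ by $n\RT$ plus the two one-sided parts of $T\WT$ and then expanding $T\WT$ through the pricing rule---is a genuinely different organisation from the paper's proof, which instead uses $\RaT + \RmechT = \RT$ (Lemma~\ref{lem:WTdecomposition}) to get $\RmaxT \le n\RT + |\RaT| + |\RmechT| \le (n+1)\RT + 2|\RmechT|$, and then bounds $\EE[|\RmechT|]$ round by round as in Lemma~\ref{lem:RmechTgenbound}. Your per-round identity for $T\WT$ is correct (it is essentially the same identity as~\eqref{eqn:rmechtdecomp}), and your reading of $\Amit$ as $\max_\alloc\Valmi(\alloc) - \max_\alloc\Fmit(\alloc)$ is consistent with how the paper itself uses that symbol in its proofs. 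However, there is a genuine gap in the final bookkeeping: your derivation does not yield the coefficient $(n+1)$ on $\RT$; it yields $(2n-1)$.

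Concretely, once you relax $\max\big(0,\, T\WT,\, -(2n-1)\RT - T\WT\big)$ to $\max(T\WT,0) + \max(-T\WT,0)$, you have already discarded the term $-(2n-1)\RT$. When you then expand $-T\WT$, the explore-phase piece $(n-1)\sum_{t\in\Expl}\rt$ coming from $-\Theta_{\Expl}$ and the piece $(n-1)\sum_{t\notin\Expl}\rt$ both enter with a \emph{positive} sign (negating $T\WT$ flips the sign of the latter), so they add up to $(n-1)\RT$ rather than cancelling, and there is nothing left to absorb this. Your bound is therefore $n\RT + (n-1)\RT + \dots = (2n-1)\RT + \dots$, which for $n\ge 3$ is strictly weaker than the stated $(n+1)\RT$, so the lemma as stated is not established (the downstream $\bigOtilde(n^2\KotTtt)$ rate in Theorem~\ref{thm:value} would survive, but the lemma would not). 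The fix stays entirely within your framework: do not discard $-(2n-1)\RT$; bound $\max\big(-(2n-1)\RT - T\WT,\, 0\big)$ directly, so that the $(n-1)\RT$ emerging inside $-T\WT$ cancels against it, leaving $-n\RT \le 0$ to be dropped; this actually gives a coefficient of $n$ on $\RT$, which implies the stated bound. Alternatively, follow the paper's shorter route: $|\RaT| \le \RT + |\RmechT|$ gives $\RmaxT \le (n+1)\RT + 2|\RmechT|$, and $|\RmechT|$ is bounded per round by $\sum_i|\Amit| + (n-1)|\Bt|$ off the explore phase under $\Et$ ($\Vmax$ otherwise), and by $\Vmax$ on it, with $\qT K \le 3\KotTtt$ from Lemma~\ref{lem:qTbound}.
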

\begin{proof}
Recall that $\RT$ is always non-negative while $\RaT$ and $\RmechT$ may be positive or negative.
From Lemma~\ref{lem:WTdecomposition}, we have $\RaT + \RmechT = \RT$.
Since the maximum is smaller than the sum, we have
\begin{align*}
\RmaxT = \max(n\RT, \RaT, \RmechT)
&\leq n\RT + |\RaT| + |\RmechT|
\leq (n+1)\RT + 2|\RmechT|.
\numberthis
\label{eqn:maxregbound}
\end{align*}
By the triangle inequality, we obtain the following bound on $|\RmechT|$, similar
to Lemma~\ref{lem:RmechTgenbound}:
\begin{align*}
\EE[|\RmechT|] \;&\leq\; 
            \sum_{t\in\Expl} \EE[|\rmecht|] \;+\; 
        \sum_{t\notin\Expl} \EE[|\rmecht|\,|\Et]  \;+\; \sum_{t\notin\Expl} \EE[|\rmecht|\,|\Et] \\
   &\leq\;
            \Vmax K\qT \;+\; 
    \sum_{i=1}^n \sum_{t\notin\Expl} \EE[|\Amit|\,|\Et] \;+\;
    (n-1)\sum_{t\notin\Expl} \EE[|\Bt|\,|\Et] +
    \Vmax\sum_{t\notin\Expl}\PPt(\Etc).
\end{align*}
The claim follows by combining the above bound with~\eqref{eqn:maxregbound}
and then applying Lemma~\ref{lem:qTbound} for a bound on $K\qT$.
\end{proof}

We are now ready to prove Theorem~\ref{thm:value}.

\proofheader{Proof of Theorem~\ref{thm:value}}
First consider the $\estmethod=\estee$ case.
We will use Lemma~\ref{lem:MaxRegretgenbound} to control $\RmaxT$.
We already have an upper bound on $\RT$ from Section~\ref{sec:proofwelfare},
and upper bounds on $\Amit$ and $\Bt$ from Sections~\ref{sec:selleretcproof}
and~\ref{sec:selleroptproof}.
The lower bounds
for $\Amit, \Bt$ are obtained by simply reversing the argument.

First consider, $|\Amit|$.
If $\pricemethod=\pricesel$, we already saw $\Amit \leq 0$.
By using an argument similar to~\eqref{eqn:Amituniformbound}, we obtain
$-\Amit \leq 2\sqrt{2}\sigma\betat (n-1)\Kottmot$.
Similarly, if $\pricemethod=\priceage$, we already saw $\Amit \leq 2\sqrt{2}\sigma\betat (n-1)\Kottmot$.
Moreover,
\[
-\Amit = \Vlbmit(\allocmit) - \Valmi(\allocopt)
\leq \Valmi(\allocmit) - \Valmi(\allocopt) \leq 0.
\]
Therefore,  for both $\pricemethod$ values we  have
$|\Amit| \leq 2\sqrt{2}\sigma\betat (n-1)\Kottmot$.
After an application of Lemma~\ref{lem:subounds}, we obtain,
\begin{align*}
\numberthis
\label{eqn:Amitmodbound}
\sum_i\sum_{t\notin\Expl} \EE[|\Amit|\,|\Et] \leq 3\sqrt{2}\sigma\betat n(n-1)\KotTtt.
\end{align*}
By following a similar argument, we can obtain
$(n-1)\sum_{t\notin\Expl}\EE[|\Bt|\,|\Et] \leq 3\sqrt{2}\sigma\betat n(n-1)\KotTtt$.
The claim follows by combining the above with Lemma~\ref{lem:MaxRegretgenbound},
the bound on the welfare regret in~\eqref{eqn:eewelfareregbound},
and Lemma~\ref{lem:Etcbound} to control $\sum_{t\notin\Expl}\PPt(\Etc)$.

Now consider the $\estmethod=\estopt$ case.
Since there will have been $\qT$ exploration phases in $T$ rounds, we use the expression
in~\eqref{eqn:Amitmodbound} to bound the sum of $\EE[|\Amit||\Et]$ terms.
%
Next, let us turn to the $|\Bt|$ terms in the RHS of
Lemma~\ref{lem:MaxRegretgenbound}.
When $\pricemethod=\priceage$, we already saw in Section~\ref{sec:selleroptproof} that $\Bt \leq 0$.
Moreover, $-\Bt = \Val(\allocopt) - \Vlbt(\alloct) \leq \Vubt(\alloct) - \Vlbt(\alloct)
\leq 2\betat\sum_i\sigmait(\sit)$ under $\Et$.
Similarly, when $\pricemethod=\pricesel$, we already saw $\Bt \leq 2\betat\sum_i\sigmait(\sit)$.
Moreover, $-\Bt = \Val(\allocopt) - \Vubt(\alloct) \leq 0$.
In all cases, we have $|\Bt| \leq 2\betat\sum_i\sigmait(\sit)$ and therefore,
by following the same calculations in~\eqref{eqn:sqrtSTbound},
we have,
\[
(n-1)\sum_{t\notin\Expl} \EE[|\Bt|\,|\Et] \leq 4\sigma\betaT n(n-1)\sqrtST.
\]
The claim follows by combining the above with Lemma~\ref{lem:MaxRegretgenbound},
the bound on the welfare regret in~\eqref{eqn:ucbwelfareregbound},
and Lemma~\ref{lem:Etcbound} to control $\sum_{t\notin\Expl}\PPt(\Etc)$.
\qedwhite

\subsection{Some Technical Lemmas}
\label{sec:prooftechnical}

This section states some technical results that were used throughout our proofs.
The following fact, akin to Fact~\ref{fac:vcgutils}, is straightforward to verify.

\insertprethmspacing
\begin{fact}
\label{fac:algoutils}
In round $t$ of Algorithm~\ref{alg:vcglearn},
the agent and seller utilities satisfy the following for the given $\{\fit, \git\}_{i,t}$ choices.
\emph{
\begin{align*}
&\textrm{if $t\in\Expl$,}\hspace{0.1in}
\utilit = \vali(\sit),
\hspace{0.52in}
\textrm{if $t\notin\Expl$,}\hspace{0.1in}
\utilit 
    = \vali(\sit) - \git(\sit) + \Gt(\alloct) -
        \Fmit(\allocmit), \\
&\textrm{if $t\in\Expl$,}\hspace{0.1in}
\utilmecht = \valmech(\alloct),
\hspace{0.5in}
\textrm{if $t\notin\Expl$,}\hspace{0.1in}
\utilmecht = 
            \sum_{i=1}^n\Fmit(\allocmit) - (n-1)\Gt(\alloct).
\end{align*}
}
\end{fact}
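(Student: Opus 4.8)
The statement is a purely algebraic identity relating the per-round utilities to the $\Fmit,\Gmit,\Gt$ quantities, so the plan is simply to substitute the definitions of the prices and of these functions, handling the four cases in turn. First I would dispose of the exploration rounds $t\in\Expl$: by the explore-phase description in Algorithm~\ref{alg:vcglearn} every agent is charged price $0$, so $\utilit=\vali(\alloct)-0=\vali(\sit)$ and $\utilmecht=\valmech(\alloct)+\sum_{i=1}^n 0=\valmech(\alloct)$ with no further work.

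For the agent utility on rounds $t\notin\Expl$, I would start from $\utilit=\vali(\sit)-\priceit$ and substitute the price set in line~\ref{lin:pricing}, namely $\priceit=\max_\alloc\Fmit(\alloc)-\Gmit(\alloct)=\Fmit(\allocmit)-\Gmit(\alloct)$, where $\allocmit$ is the maximiser from~\eqref{eqn:welfaredefns}. The only thing left to check is the elementary identity $\Gmit(\alloct)=\Gt(\alloct)-\git(\sit)$, which is immediate from comparing the definitions in~\eqref{eqn:fgterms} and~\eqref{eqn:welfaredefns}: $\Gt$ and $\Gmit$ differ exactly in the $i$-th summand $\git(\sit)$. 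Combining these gives $\utilit=\vali(\sit)-\git(\sit)+\Gt(\alloct)-\Fmit(\allocmit)$, as claimed.

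For the seller utility on rounds $t\notin\Expl$, I would sum the prices, $\utilmecht=\valmech(\alloct)+\sum_{i=1}^n\priceit=\valmech(\alloct)+\sum_{i=1}^n\Fmit(\allocmit)-\sum_{i=1}^n\Gmit(\alloct)$, and then evaluate $\sum_{i=1}^n\Gmit(\alloct)$. Expanding the definition and noting that each agent index $j$ is omitted from precisely one of the $n$ terms (the $i=j$ term), the double sum collapses via $\sum_{i=1}^n\sum_{j\neq i}\gjt(\sjt)=(n-1)\sum_{j=1}^n\gjt(\sjt)$, whence $\sum_{i=1}^n\Gmit(\alloct)=n\valmech(\alloct)+(n-1)\sum_{j}\gjt(\sjt)=\valmech(\alloct)+(n-1)\Gt(\alloct)$. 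Substituting cancels the leading $\valmech(\alloct)$ and leaves $\utilmecht=\sum_{i=1}^n\Fmit(\allocmit)-(n-1)\Gt(\alloct)$.

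Since every step is a direct substitution, there is no genuine obstacle here; the only place demanding care is the combinatorial bookkeeping in the seller computation, i.e. correctly counting that each $\gjt$ term appears $n-1$ times across the $n$ price expressions, together with keeping the overloaded notation $\git(\alloct)=\git(\sit)$ consistent throughout.
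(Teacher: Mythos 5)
Your proposal is correct: the paper itself offers no written proof of this fact (it is stated as ``straightforward to verify''), and your direct substitution of the price from line~\ref{lin:pricing}, together with the identities $\Gmit(\alloct)=\Gt(\alloct)-\git(\sit)$ and $\sum_{i=1}^n\Gmit(\alloct)=\valmech(\alloct)+(n-1)\Gt(\alloct)$, is exactly the intended verification. All four cases check out, including the counting argument that each $\gjt(\sjt)$ appears $n-1$ times across the $n$ price expressions.
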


The following result bounds the number of brackets
$\qT$~\eqref{eqn:qtdefn} after a given number of rounds $T$.

\insertprethmspacing
\begin{lemma}
\label{lem:qTbound}
Consider Algorithm~\ref{alg:vcglearn} on the $T$\ssth round, where $T > 2K$,
and let $\qT$ denote the current bracket index.
If $T$ is an exploration round, i.e. $T\in\Expl$, then 
$\qT \leq 3 K^{\nicefrac{-2}{3}} T^{\nicefrac{2}{3}}$.
If $T\notin\Expl$, then,
$
\frac{1}{2} K^{\nicefrac{-2}{3}} T^{\nicefrac{2}{3}}
\leq \qT \leq
3 K^{\nicefrac{-2}{3}} T^{\nicefrac{2}{3}}
$.
\end{lemma}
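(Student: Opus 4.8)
The plan is to bound the total number of rounds $T_q$ completed after $q$ brackets both above and below by a constant times $Kq^{3/2}$, and then invert these relations. From the structure of the algorithm—each bracket $p$ consists of a $K$-round explore phase followed by a $\lfloor \frac56 K p^{1/2}\rfloor$-round exploit phase—I would write $T_q = \sum_{p=1}^q (K + \lfloor \frac56 K p^{1/2}\rfloor)$, and recall from~\eqref{eqn:qtdefn} that $\qT$ is pinned down by $T_{\qT-1} < T \le T_{\qT}$. The whole argument is then a matter of inverting a two-sided estimate of $T_q$, so the two inequalities on $\qT$ come respectively from $T > T_{\qT-1}$ and from $T \le T_{\qT}$.

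For the two-sided bound on $T_q$ I would control the floor together with the partial sum $\sum_{p=1}^q p^{1/2}$. On the lower side, using $\lfloor x\rfloor \ge x-1$ and the increasing-function estimate $\sum_{p=1}^q p^{1/2} \ge \int_0^q \sqrt{x}\,dx = \frac23 q^{3/2}$ gives $T_q \ge qK - q + \frac59 Kq^{3/2} \ge \frac59 Kq^{3/2}$, the last step using $K \ge 1$. On the upper side, using $\lfloor x\rfloor \le x$ together with the crude estimate $\sum_{p=1}^q p^{1/2} \le q^{3/2}$ and $q \le q^{3/2}$ yields $T_q \le qK + \frac56 Kq^{3/2} \le \frac{11}{6} Kq^{3/2}$. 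This brackets $T_q$ between $\frac59 Kq^{3/2}$ and $\frac{11}{6}Kq^{3/2}$.

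The upper bound on $\qT$ then follows from $T > T_{\qT-1} \ge \frac59 K(\qT-1)^{3/2}$ for $\qT \ge 2$ (the case $\qT = 1$ is immediate since $T > 2K$ forces $3K^{-2/3}T^{2/3} > 3 > 1$), which rearranges to $\qT \le (9/5)^{2/3}(T/K)^{2/3} + 1$. The one delicate point is converting the additive $+1$ into the claimed multiplicative constant: this is exactly where the hypothesis $T > 2K$ enters. Since $(T/K)^{2/3} > 2^{2/3} > 1.58$, one has $1 < 0.64\,(T/K)^{2/3}$, and as $(9/5)^{2/3} < 1.48$ the two terms combine to at most $2.12\,(T/K)^{2/3} < 3\,K^{-2/3}T^{2/3}$. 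The lower bound is cleaner: $T \le T_{\qT} \le \frac{11}{6}K\qT^{3/2}$ rearranges directly to $\qT \ge (6/11)^{2/3}(T/K)^{2/3} > \frac12 K^{-2/3}T^{2/3}$; this actually holds for every $T$, but I would state it for $T \notin \Expl$ as in the lemma.

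This result is essentially a calculation rather than a conceptual argument, so there is no substantial obstacle; the only care required is (i) keeping the floor and integral approximations on the correct side so that the constants $\frac59$ and $\frac{11}{6}$ invert into the target window $[\frac12,\,3]$, and (ii) absorbing the stray $+1$ in the upper bound using $T > 2K$, which is the sole reason that hypothesis appears in the statement.
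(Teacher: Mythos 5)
Your proof is correct and takes essentially the same route as the paper: both bracket the cumulative round count $T_q$ between constant multiples of $Kq^{3/2}$ (controlling the floor and the partial sum $\sum_p p^{1/2}$ by integral/termwise comparison) and then invert through the defining relation $T_{\qT-1} < T \leq \qT$'s bracket, i.e. $T_{\qT-1} < T \le T_{\qT}$, with the hypothesis $T>2K$ used to absorb the off-by-one in the upper bound. The only differences are cosmetic: your constants ($5/9$ and $11/6$ versus the paper's $c_2 = 5\sqrt{2}/36$ and $c_1 = 1+10\sqrt{2}/9$) and your additive handling of the $+1$ (where the paper uses $q-1\ge q/2$ for $q\ge 2$) both land inside the target window $[1/2,\,3]$.
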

\begin{proof}
For brevity, write $q=\qT$, $c = 5/6$, $d=1/2$.
First let $T\notin\Expl$.
Using the notation in~\eqref{eqn:qtdefn}, we have
\begin{align*}
T_{q-1} + K < T \leq T_{q}, 
\hspace{0.2in}
\text{where,}
\hspace{0.1in}
T_{m} = Km + \sum_{t=1}^m \lfloor cKt^d \rfloor.
\end{align*}
To bound $T_m$, letting $S_m = \sum_{t=1}^m t^d$ and bounding the sum of an increasing function
by an integral we have,
\begin{align*}
\int_0^m t^d\ud t < S_m < \int_1^{m+1} t^d\ud t
\hspace{0.2in}
\implies
\hspace{0.2in}
\frac{m^{d+1}}{d+1} < S_m < \frac{ (m+1)^{d+1} - 1}{d+1}.
\label{eqn:Smbound}
\numberthis
\end{align*}
This leads to the following bounds on $T$,
\begin{align*}
\numberthis\label{eqn:qTlb}
T &\leq T_q \leq qK + \sum_{t=1}^q cKt^d \leq qK + \frac{cK}{d+1}\big((q+1)^{d+1} -1\big)
\\
 &\leq q^{d+1}K + \frac{cK}{d+1}(2q)^{d+1}
 \leq c_1 q^{\nicefrac{3}{2}}K, \\
\numberthis\label{eqn:qTub}
T &\geq T_{q-1} + K \geq qK + \sum_{t=1}^{q-1} (cKt^d - 1) \geq
    qK - (q-1) + \frac{cK}{d+1}(q-1)^{d+1}
\\
    &\geq  \frac{cK}{d+1}\left(\frac{q}{2}\right)^{d+1}
    = c_2q^{\nicefrac{3}{2}}K.
\end{align*}
In~\eqref{eqn:qTlb}, we have used the upper bound in~\eqref{eqn:Smbound} with $m=q$, and the
facts $q\leq q^{\nicefrac{3}{2}}$, $q+1 \leq 2q$.
In~\eqref{eqn:qTub}, we have used the lower bound in~\eqref{eqn:Smbound} with $m=q-1$, and
the facts $qK>q-1$, $q-1\geq q/2$; the last inequality holds when $q\geq2$ which is true
when $T\geq 2K$.
Now, 
by substituting the values for $c$ and $d$, we have $c_1 = 1 + 10\sqrt{2}/9$ and
$c_2 = 5\sqrt{2}/36$.
Thus,
\[
q \leq \left(\frac{T}{c_2K}\right)^{\nicefrac{2}{3}}
\leq 3\frac{T^{\nicefrac{2}{3}}}{K^{\nicefrac{2}{3}}},
\hspace{0.4in}
q \geq \left(\frac{T}{c_1K}\right)^{\nicefrac{2}{3}}
\geq \frac{1}{2}\frac{T^{\nicefrac{2}{3}}}{K^{\nicefrac{2}{3}}}.
\]
This proves the result for $T\notin\Expl$.
If $T\in\Expl$, by noting that $T\geq T_{q-1}$, we can  repeat the calculations
in~\eqref{eqn:qTub} to obtain the same bound.
\end{proof}

The following two results were used repeatedly throughout our proofs.

\insertprethmspacing
\begin{lemma}
\label{lem:boundmax}
Let $f_1, f_2:\Xcal\rightarrow\RR$ for some finite set $\Xcal$ such that $f_1(x)-f_2(x)\leq\epsilon$
for all $x\in\Xcal$ and a given $\epsilon\geq 0$.
Then $\max f_1 - \max f_2 \leq \epsilon$.
\begin{proof}
Let $x_i=\argmax f_i$. Then, $f_1(x_1) - f_2(x_2) \leq f_2(x_1) - f_2(x_2) + \epsilon \leq \epsilon$.
\end{proof}
\end{lemma}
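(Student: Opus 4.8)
The plan is to prove this directly, by evaluating $f_1$ at its own maximiser and then invoking the pointwise bound at that single point. First I would set $x^\star = \argmax_{x\in\Xcal} f_1(x)$, which is well defined because $\Xcal$ is finite, so that $\max f_1 = f_1(x^\star)$. The whole argument then hinges on applying the hypothesis at this particular point rather than comparing the two maximisers.

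Next I would apply the assumption $f_1(x) - f_2(x)\leq\epsilon$ at $x=x^\star$, giving $f_1(x^\star)\leq f_2(x^\star)+\epsilon$. Since $x^\star\in\Xcal$, we certainly have $f_2(x^\star)\leq\max f_2$. Chaining these, $\max f_1 = f_1(x^\star)\leq f_2(x^\star)+\epsilon\leq\max f_2+\epsilon$, and rearranging yields $\max f_1 - \max f_2\leq\epsilon$, as desired.

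There is essentially no obstacle here, since the statement is a one-line consequence of the pointwise inequality. The only subtlety worth flagging is that both functions must be evaluated at the \emph{same} argument, namely $\argmax f_1$; evaluating instead at $\argmax f_2$ would not let one pass from the pointwise gap to the gap between the maxima. Finiteness of $\Xcal$ is used only to guarantee that the maxima are attained, and the identical reasoning goes through whenever the relevant suprema are attained (or, in general, via an approximate-maximiser argument taking $f_1(x^\star)\geq\sup f_1 - \eta$ and letting $\eta\downarrow 0$).
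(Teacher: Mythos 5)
Your proof is correct and is essentially identical to the paper's: both evaluate the pointwise bound at $\argmax f_1$ and then bound $f_2$ there by $\max f_2$. The additional remarks on attainment of suprema are fine but not needed for the finite-set statement.
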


\insertprethmspacing
\begin{lemma}
\label{lem:subounds}
\;$\sum_{t=1}^n t^{-\nicefrac{1}{2}} \leq 2n^{\nicefrac{1}{2}}$,
\quad $\sum_{t=1}^n t^{-\nicefrac{1}{3}} \leq \frac{3}{2}n^{\nicefrac{2}{3}}$.
\begin{proof}
By bounding the summation of a decreasing function by an integral we have for $r\in[0, 1]$,
$\sum_{t=1}^n t^{-r} \leq 1 + \int_1^n t^{-r}\ud t \leq 1 + \frac{n^{1-r}}{1-r} -\frac{1}{1-r}
\leq  \frac{n^{1-r}}{1-r}$.
Setting $r=1/2, 1/3$ yields the results.
\end{proof}
\end{lemma}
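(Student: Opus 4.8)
The plan is to prove both inequalities at once by establishing the single general bound
\[
\sum_{t=1}^n t^{-r} \leq \frac{n^{1-r}}{1-r}
\]
for every $r \in (0,1)$, and then to substitute $r = 1/2$ and $r = 1/3$ to read off the two stated constants $2$ and $3/2$. This reduces the whole lemma to one elementary integral-comparison estimate applied twice.

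First I would record that $t \mapsto t^{-r}$ is positive and strictly decreasing on $[1,\infty)$. For any decreasing $f$ one has $f(t) \leq \int_{t-1}^t f(s)\,ds$ for each integer $t \geq 2$, so summing over $t = 2,\dots,n$ gives $\sum_{t=2}^n t^{-r} \leq \int_1^n s^{-r}\,ds$. Peeling off the $t=1$ term, which equals $1$, and evaluating the integral yields
\[
\sum_{t=1}^n t^{-r} \leq 1 + \int_1^n s^{-r}\,ds = 1 + \frac{n^{1-r}-1}{1-r}.
\]

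The only step needing a moment's care is tidying the constant. I would rewrite the right-hand side as $\frac{n^{1-r}}{1-r} + \left(1 - \frac{1}{1-r}\right) = \frac{n^{1-r}}{1-r} - \frac{r}{1-r}$, and note that since $r,\,1-r > 0$ the subtracted quantity $\frac{r}{1-r}$ is nonnegative. Hence the ``$+1$'' from the isolated first term is absorbed, and we conclude $\sum_{t=1}^n t^{-r} \leq \frac{n^{1-r}}{1-r}$. Finally, $r = 1/2$ gives $\frac{n^{1/2}}{1/2} = 2 n^{1/2}$ and $r = 1/3$ gives $\frac{n^{2/3}}{2/3} = \frac{3}{2} n^{2/3}$, exactly the two asserted bounds.

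There is essentially no genuine obstacle here: the argument is a textbook integral test, and all the content lies in verifying that the constants come out clean after discarding the harmless additive $1$. The one place to stay attentive is confirming $1 - \frac{1}{1-r} \leq 0$ for the relevant exponents, which holds precisely because $r > 0$; this is what permits the tidy closed-form constants instead of an extra additive error term.
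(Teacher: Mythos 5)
Your proposal is correct and is essentially the same argument as the paper's: bound the sum of the decreasing function $t^{-r}$ by $1 + \int_1^n s^{-r}\,ds$, then absorb the leading $1$ using the fact that $1 - \frac{1}{1-r} = -\frac{r}{1-r} \leq 0$, and finally substitute $r = \nicefrac{1}{2}$ and $r = \nicefrac{1}{3}$. Your write-up just spells out the absorption step (and the restriction to $r \in (0,1)$) a bit more explicitly than the paper does.
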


\section{Conclusion}
\label{sec:conclusion}

%
We have studied mechanism design in settings where agents may not know their values,
but can experience an allocation and report back a realised reward.
The goal of the mechanism is to learn the values of the users while
simultaneously finding the optimal outcome and satisfying game-theoretic
desiderata such as individual rationality and truthfulness.
We established a lower bound on the \valueregret{} for this problem, and presented an algorithm that
essentially achieves this rate.
The proposed framework allows a practitioner to control trade-offs between various
properties that they might be interested in, such as agent and seller regrets, individual
rationality, and truthfulness.
We conclude with two avenues for future work.


First, we have assumed that we
can maximise the upper confidence bound~\eqref{eqn:Vubt} exactly.
In many settings, this might be computationally prohibitive, and we might only be able to
obtain an approximate solution.
It will be instructive to study which of the desiderata carry through in this case.
If we have an $\alpha$-approximate solver ($\alpha<1$), it is straightforward to show
that sublinear welfare regret~\eqref{eqn:regret} is possible under truthful reporting,
if it is defined as $\RT = \alpha T \Val(\allocopt) - \sum_t\Val(\alloct)$.
However, bounding the agent and seller regrets requires more careful analysis
as their utility depends on the near-optimal outcome chosen by the solver.
Implications on truthfulness are even less clear,
especially as an agent can be strategic over multiple rounds.

Second, the
lower bound in Theorem~\ref{thm:lowerbound} only captures one of the two key difficulties in
this problem, namely pricing calculation for agent/seller trade-offs;
the other being truthfulness.
It is worth studying the implications of even  asymptotic truthfulness on learning.
While our algorithm is optimal with respect to the lower bound for the \valueregret,
in some applications, it is not necessary to minimise all three regret terms in $\RmaxT$.
For instance, the PaaS setting in Example~\ref{exm:paas} could occur within an organisation,
where the service provider is one team providing a service to other (agent) teams.
In such cases, the seller regret is not a meaningful quantity.
In this setting, it is possible to obtain $\sqrt{T}$ regret for both the welfare and the agents
\emph{if} the agents report truthfully: at \emph{all} time steps,
select the outcome which maximises the upper
confidence bound on the welfare and choose a favourable pricing
scheme to the agents, such as the one obtained by setting $\pricemethod=\priceage$.
However, this is not a truthful mechanism.
In situations like this, we believe that truthfulness will prevent obtaining $\sqrt{T}$ regret.
For instance, \citet{babaioff2013multi} and \citet{devanur2009price} show that $\Ttwth$ regret is unavoidable
for deterministic truthful algorithms in their online advertising problem.
Their proof relies heavily on a necessary and sufficient condition for truthfulness in
single-parameter auctions where agents submit
bids~\citep{myerson1981optimal,archer2001truthful}.
Extensions of this condition to multi-parameter auctions exist~\citep{rochet1987necessary},
but only in instances where agents submit bids ahead of time.
This characterisation does not apply in our problem where the agent does not know her
value and reports a reward at the end of the round.

\paragraph{Acknowledgements:}
We would like to thank Matthew Wright for providing feedback on an initial draft of this manuscript.

{
\bibliography{kky,bib_vcglearn}

\begin{thebibliography}{53}
\providecommand{\natexlab}[1]{#1}
\providecommand{\url}[1]{\texttt{#1}}
\expandafter\ifx\csname urlstyle\endcsname\relax
  \providecommand{\doi}[1]{doi: #1}\else
  \providecommand{\doi}{doi: \begingroup \urlstyle{rm}\Url}\fi

\bibitem[Aggarwal et~al.(2006)Aggarwal, Goel, and
  Motwani]{aggarwal2006truthful}
Gagan Aggarwal, Ashish Goel, and Rajeev Motwani.
\newblock Truthful auctions for pricing search keywords.
\newblock In \emph{Proceedings of the 7th ACM Conference on Electronic
  Commerce}, pages 1--7, 2006.

\bibitem[Amin et~al.(2013)Amin, Rostamizadeh, and Syed]{amin2013learning}
Kareem Amin, Afshin Rostamizadeh, and Umar Syed.
\newblock Learning prices for repeated auctions with strategic buyers.
\newblock In \emph{Advances in Neural Information Processing Systems}, pages
  1169--1177, 2013.

\bibitem[Archer and Tardos(2001)]{archer2001truthful}
Aaron Archer and {\'E}va Tardos.
\newblock Truthful mechanisms for one-parameter agents.
\newblock In \emph{Proceedings of the 42nd IEEE Symposium on Foundations of
  Computer Science}, pages 482--491. IEEE, 2001.

\bibitem[Athey and Segal(2013)]{athey2013efficient}
Susan Athey and Ilya Segal.
\newblock An efficient dynamic mechanism.
\newblock \emph{Econometrica}, 81\penalty0 (6):\penalty0 2463--2485, 2013.

\bibitem[Auer(2003)]{auer03ucb}
Peter Auer.
\newblock {Using Confidence Bounds for Exploitation-exploration Trade-offs}.
\newblock \emph{J. Mach. Learn. Res.}, 2003.

\bibitem[Babaioff et~al.(2013)Babaioff, Kleinberg, and
  Slivkins]{babaioff2013multi}
Moshe Babaioff, Robert Kleinberg, and Aleksandrs Slivkins.
\newblock Multi-parameter mechanisms with implicit payment computation.
\newblock In \emph{Proceedings of the Fourteenth ACM Conference on Electronic
  Commerce}, pages 35--52, 2013.

\bibitem[Babaioff et~al.(2014)Babaioff, Sharma, and
  Slivkins]{babaioff2014characterizing}
Moshe Babaioff, Yogeshwer Sharma, and Aleksandrs Slivkins.
\newblock Characterizing truthful multi-armed bandit mechanisms.
\newblock \emph{SIAM Journal on Computing}, 43\penalty0 (1):\penalty0 194--230,
  2014.

\bibitem[Babaioff et~al.(2015)Babaioff, Kleinberg, and
  Slivkins]{babaioff2015truthful}
Moshe Babaioff, Robert~D Kleinberg, and Aleksandrs Slivkins.
\newblock Truthful mechanisms with implicit payment computation.
\newblock \emph{Journal of the ACM}, 62\penalty0 (2):\penalty0 1--37, 2015.

\bibitem[Balcan et~al.(2008)Balcan, Blum, Hartline, and
  Mansour]{balcan2008reducing}
Maria-Florina Balcan, Avrim Blum, Jason~D Hartline, and Yishay Mansour.
\newblock Reducing mechanism design to algorithm design via machine learning.
\newblock \emph{Journal of Computer and System Sciences}, 74\penalty0
  (8):\penalty0 1245--1270, 2008.

\bibitem[Balcan et~al.(2016)Balcan, Sandholm, and Vitercik]{balcan2016sample}
Maria-Florina~F Balcan, Tuomas Sandholm, and Ellen Vitercik.
\newblock Sample complexity of automated mechanism design.
\newblock In \emph{Advances in Neural Information Processing Systems}, pages
  2083--2091, 2016.

\bibitem[Bergemann and Valimaki(2006)]{bergemann2006efficient}
Dirk Bergemann and Juuso Valimaki.
\newblock Efficient dynamic auctions.
\newblock 2006.

\bibitem[Blum et~al.(2015)Blum, Mansour, and Morgenstern]{blum2015learning}
Avrim Blum, Yishay Mansour, and Jame Morgenstern.
\newblock Learning valuation distributions from partial observation.
\newblock In \emph{Twenty-Ninth AAAI Conference on Artificial Intelligence},
  2015.

\bibitem[Braverman et~al.(2019)Braverman, Mao, Schneider, and
  Weinberg]{braverman2019multi}
Mark Braverman, Jieming Mao, Jon Schneider, and S~Matthew Weinberg.
\newblock Multi-armed bandit problems with strategic arms.
\newblock In \emph{Conference on Learning Theory}, pages 383--416. PMLR, 2019.

\bibitem[Bubeck et~al.(2011)Bubeck, Munos, Stoltz, and
  Szepesv{\'a}ri]{bubeck2011x}
S{\'e}bastien Bubeck, R{\'e}mi Munos, Gilles Stoltz, and Csaba Szepesv{\'a}ri.
\newblock X-armed bandits.
\newblock \emph{Journal of Machine Learning Research}, 12\penalty0
  (May):\penalty0 1655--1695, 2011.

\bibitem[Bubeck et~al.(2013)Bubeck, Perchet, and Rigollet]{bubeck2013bounded}
S{\'e}bastien Bubeck, Vianney Perchet, and Philippe Rigollet.
\newblock Bounded regret in stochastic multi-armed bandits.
\newblock In \emph{Conference on Learning Theory}, pages 122--134, 2013.

\bibitem[Clarke(1971)]{clarke1971multipart}
Edward~H Clarke.
\newblock {Multipart pricing of public goods}.
\newblock \emph{Public Choice}, 1971.

\bibitem[Cramton(2013)]{cramton2013spectrum}
Peter Cramton.
\newblock Spectrum auction design.
\newblock \emph{Review of Industrial Organization}, 42\penalty0 (2):\penalty0
  161--190, 2013.

\bibitem[Daskalakis et~al.(2006)Daskalakis, Mehta, and
  Papadimitriou]{daskalakis2006note}
Constantinos Daskalakis, Aranyak Mehta, and Christos Papadimitriou.
\newblock A note on approximate nash equilibria.
\newblock In \emph{International Workshop on Internet and Network Economics},
  pages 297--306. Springer, 2006.

\bibitem[Devanur and Kakade(2009)]{devanur2009price}
Nikhil~R Devanur and Sham~M Kakade.
\newblock The price of truthfulness for pay-per-click auctions.
\newblock In \emph{Proceedings of the 10th ACM conference on Electronic
  commerce}, pages 99--106, 2009.

\bibitem[Dudik et~al.(2017)Dudik, Haghtalab, Luo, Schapire, Syrgkanis, and
  Vaughan]{dudik2017oracle}
Miroslav Dudik, Nika Haghtalab, Haipeng Luo, Robert~E Schapire, Vasilis
  Syrgkanis, and Jennifer~Wortman Vaughan.
\newblock Oracle-efficient online learning and auction design.
\newblock In \emph{IEEE Annual Symposium on Foundations of Computer Science
  (FOCS)}, pages 528--539. IEEE, 2017.

\bibitem[Feder et~al.(2007)Feder, Nazerzadeh, and
  Saberi]{feder2007approximating}
Tomas Feder, Hamid Nazerzadeh, and Amin Saberi.
\newblock Approximating nash equilibria using small-support strategies.
\newblock In \emph{Proceedings of the 8th ACM conference on Electronic
  commerce}, pages 352--354, 2007.

\bibitem[Feng et~al.(2018)Feng, Podimata, and Syrgkanis]{feng2018learning}
Zhe Feng, Chara Podimata, and Vasilis Syrgkanis.
\newblock Learning to bid without knowing your value.
\newblock In \emph{Proceedings of the 2018 ACM Conference on Economics and
  Computation}, pages 505--522, 2018.

\bibitem[Garivier et~al.(2016)Garivier, Lattimore, and
  Kaufmann]{garivier2016explore}
Aur{\'e}lien Garivier, Tor Lattimore, and Emilie Kaufmann.
\newblock {On explore-then-commit strategies}.
\newblock In \emph{Advances in Neural Information Processing Systems}, 2016.

\bibitem[Gatti et~al.(2012)Gatti, Lazaric, and Trov{\`o}]{gatti2012truthful}
Nicola Gatti, Alessandro Lazaric, and Francesco Trov{\`o}.
\newblock A truthful learning mechanism for contextual multi-slot sponsored
  search auctions with externalities.
\newblock In \emph{Proceedings of the 13th ACM Conference on Electronic
  Commerce}, pages 605--622, 2012.

\bibitem[Groves(1979)]{groves1979efficient}
Theodore Groves.
\newblock {Efficient collective choice when compensation is possible}.
\newblock \emph{The Review of Economic Studies}, 1979.

\bibitem[Kakade et~al.(2010)Kakade, Lobel, and Nazerzadeh]{kakade2010optimal}
Sham~M Kakade, Ilan Lobel, and Hamid Nazerzadeh.
\newblock An optimal dynamic mechanism for multi-armed bandit processes.
\newblock \emph{arXiv preprint arXiv:1001.4598}, 2010.

\bibitem[Kakade et~al.(2013)Kakade, Lobel, and Nazerzadeh]{kakade2013optimal}
Sham~M Kakade, Ilan Lobel, and Hamid Nazerzadeh.
\newblock Optimal dynamic mechanism design and the virtual-pivot mechanism.
\newblock \emph{Operations Research}, 61\penalty0 (4):\penalty0 837--854, 2013.

\bibitem[Karlin and Peres(2017)]{karlin2017game}
Anna~R Karlin and Yuval Peres.
\newblock \emph{Game Theory, Alive}.
\newblock American Mathematical Society, 2017.

\bibitem[Kojima and Manea(2010)]{kojima2010incentives}
Fuhito Kojima and Mihai Manea.
\newblock Incentives in the probabilistic serial mechanism.
\newblock \emph{Journal of Economic Theory}, 145\penalty0 (1):\penalty0
  106--123, 2010.

\bibitem[Lahaie et~al.(2007)Lahaie, Pennock, Saberi, and
  Vohra]{lahaie2007sponsored}
S{\'e}bastien Lahaie, David~M Pennock, Amin Saberi, and Rakesh~V Vohra.
\newblock Sponsored search auctions.
\newblock \emph{Algorithmic Game Theory}, 1:\penalty0 699--716, 2007.

\bibitem[Lai and Robbins(1985)]{lai1985asymptotically}
Tze~Leung Lai and Herbert Robbins.
\newblock Asymptotically efficient adaptive allocation rules.
\newblock \emph{Advances in Applied Mathematics}, 1985.

\bibitem[Lipton et~al.(2003)Lipton, Markakis, and Mehta]{lipton2003playing}
Richard~J Lipton, Evangelos Markakis, and Aranyak Mehta.
\newblock Playing large games using simple strategies.
\newblock In \emph{Proceedings of the 4th ACM conference on Electronic
  commerce}, pages 36--41, 2003.

\bibitem[Liu et~al.(2019)Liu, Mania, and Jordan]{liu2019competing}
Lydia~T Liu, Horia Mania, and Michael~I Jordan.
\newblock Competing bandits in matching markets.
\newblock In \emph{Proceedings of the Twenty-Third Conference on Artificial
  Intelligence and Statistics (AISTATS)}, 2019.

\bibitem[Mansour et~al.(2015)Mansour, Slivkins, and
  Syrgkanis]{mansour2015bayesian}
Yishay Mansour, Aleksandrs Slivkins, and Vasilis Syrgkanis.
\newblock Bayesian incentive-compatible bandit exploration.
\newblock In \emph{Proceedings of the Sixteenth ACM Conference on Economics and
  Computation}, pages 565--582, 2015.

\bibitem[Mehta et~al.(2007)Mehta, Saberi, Vazirani, and
  Vazirani]{mehta2007adwords}
Aranyak Mehta, Amin Saberi, Umesh Vazirani, and Vijay Vazirani.
\newblock Adwords and generalized online matching.
\newblock \emph{Journal of the ACM}, 54\penalty0 (5):\penalty0 22--es, 2007.

\bibitem[Milgrom(2017)]{milgrom2017discovering}
Paul Milgrom.
\newblock \emph{Discovering prices}.
\newblock Columbia University Press, 2017.

\bibitem[Myerson(1981)]{myerson1981optimal}
Roger~B Myerson.
\newblock Optimal auction design.
\newblock \emph{Mathematics of Operations Research}, 6\penalty0 (1):\penalty0
  58--73, 1981.

\bibitem[Nazerzadeh et~al.(2008)Nazerzadeh, Saberi, and
  Vohra]{nazerzadeh2008dynamic}
Hamid Nazerzadeh, Amin Saberi, and Rakesh Vohra.
\newblock Dynamic cost-per-action mechanisms and applications to online
  advertising.
\newblock In \emph{Proceedings of the 17th International Conference on World
  Wide Web}, pages 179--188, 2008.

\bibitem[Nazerzadeh et~al.(2016)Nazerzadeh, Paes~Leme, Rostamizadeh, and
  Syed]{nazerzadeh2016sell}
Hamid Nazerzadeh, Renato Paes~Leme, Afshin Rostamizadeh, and Umar Syed.
\newblock Where to sell: Simulating auctions from learning algorithms.
\newblock In \emph{Proceedings of the 2016 ACM Conference on Economics and
  Computation}, pages 597--598, 2016.

\bibitem[Nedelec et~al.(2019)Nedelec, El~Karoui, and
  Perchet]{nedelec2019learning}
Thomas Nedelec, Noureddine El~Karoui, and Vianney Perchet.
\newblock Learning to bid in revenue-maximizing auctions.
\newblock In \emph{International Conference on Machine Learning}, pages
  4781--4789. PMLR, 2019.

\bibitem[Nisan and Ronen(2001)]{nisan2001algorithmic}
Noam Nisan and Amir Ronen.
\newblock Algorithmic mechanism design.
\newblock \emph{Games and Economic Behavior}, 35\penalty0 (1-2):\penalty0
  166--196, 2001.

\bibitem[Perchet et~al.(2013)Perchet, Rigollet, et~al.]{perchet2013multi}
Vianney Perchet, Philippe Rigollet, et~al.
\newblock The multi-armed bandit problem with covariates.
\newblock \emph{The Annals of Statistics}, 41\penalty0 (2):\penalty0 693--721,
  2013.

\bibitem[Procaccia(2013)]{procaccia2013cake}
Ariel~D Procaccia.
\newblock {Cake cutting: not just child's play}.
\newblock \emph{Communications of the ACM}, 56\penalty0 (7):\penalty0 78--87,
  2013.

\bibitem[Roberts and Postlewaite(1976)]{roberts1976incentives}
Donald~John Roberts and Andrew Postlewaite.
\newblock The incentives for price-taking behavior in large exchange economies.
\newblock \emph{Econometrica: Journal of the Econometric Society}, pages
  115--127, 1976.

\bibitem[Rochet(1987)]{rochet1987necessary}
Jean-Charles Rochet.
\newblock A necessary and sufficient condition for rationalizability in a
  quasi-linear context.
\newblock \emph{Journal of Mathematical Economics}, 16\penalty0 (2):\penalty0
  191--200, 1987.

\bibitem[Roth(1986)]{roth1986allocation}
Alvin~E Roth.
\newblock On the allocation of residents to rural hospitals: a general property
  of two-sided matching markets.
\newblock \emph{Econometrica: Journal of the Econometric Society}, pages
  425--427, 1986.

\bibitem[Roth et~al.(2004)Roth, S{\"o}nmez, and {\"U}nver]{roth2004kidney}
Alvin~E Roth, Tayfun S{\"o}nmez, and M~Utku {\"U}nver.
\newblock Kidney exchange.
\newblock \emph{The Quarterly journal of economics}, 119\penalty0 (2):\penalty0
  457--488, 2004.

\bibitem[Schummer(2004)]{schummer2004almost}
James Schummer.
\newblock Almost-dominant strategy implementation: exchange economies.
\newblock \emph{Games and Economic Behavior}, 48\penalty0 (1):\penalty0
  154--170, 2004.

\bibitem[Thompson(1933)]{thompson33sampling}
W.~R. Thompson.
\newblock {On the Likelihood that one Unknown Probability Exceeds Another in
  View of the Evidence of Two Samples}.
\newblock \emph{Biometrika}, 1933.

\bibitem[Toosi et~al.(2016)Toosi, Vanmechelen, Khodadadi, and
  Buyya]{toosi2016auction}
Adel~Nadjaran Toosi, Kurt Vanmechelen, Farzad Khodadadi, and Rajkumar Buyya.
\newblock An auction mechanism for cloud spot markets.
\newblock \emph{ACM Transactions on Autonomous and Adaptive Systems},
  11\penalty0 (1):\penalty0 1--33, 2016.

\bibitem[Tsybakov(2008)]{tsybakov08nonparametric}
Alexandre~B. Tsybakov.
\newblock \emph{Introduction to Nonparametric Estimation}.
\newblock Springer Publishing Company, Incorporated, 2008.

\bibitem[Vickrey(1961)]{vickrey1961auction}
William Vickrey.
\newblock {Counterspeculation, auctions, and competitive sealed tenders}.
\newblock \emph{The Journal of Finance}, 1961.

\bibitem[Weed et~al.(2016)Weed, Perchet, and Rigollet]{weed2016online}
Jonathan Weed, Vianney Perchet, and Philippe Rigollet.
\newblock Online learning in repeated auctions.
\newblock In \emph{Conference on Learning Theory}, pages 1562--1583, 2016.

\end{thebibliography}
}

\end{document}